\DeclarePairedDelimiter{\prn}{(}{)}
\DeclarePairedDelimiterX{\Prn}[2]{(}{)}{\,{#1} : {#2}\,}
\let\set\relax
\DeclarePairedDelimiter{\set}{\{}{\}}
\DeclarePairedDelimiterX{\Set}[2]{\{}{\}}{\,{#1} : {#2}\,}
\DeclarePairedDelimiter{\abs}{|}{|}
\DeclarePairedDelimiter{\norm}{\|}{\|}
\DeclarePairedDelimiter{\inpr}{\langle}{\rangle}
\DeclarePairedDelimiter{\brc}{[}{]}
\DeclarePairedDelimiterX{\Brc}[2]{[}{]}{\,{#1} : {#2}\,}
\DeclareFontFamily{U}{mathx}{}
\DeclareFontShape{U}{mathx}{m}{n}{<-> mathx10}{}
\DeclareSymbolFont{mathx}{U}{mathx}{m}{n}
\DeclareMathAccent{\widecheck}{0}{mathx}{"71}
\crefname{step}{Step}{Steps}
\Crefname{step}{Step}{Steps}
\newtheorem{theorem}{Theorem}[section]
\newtheorem{lemma}[theorem]{Lemma}
\newtheorem{proposition}[theorem]{Proposition}
\newtheorem{assumption}[theorem]{Assumption}
\theoremstyle{definition}
\newtheorem{definition}[theorem]{Definition}
\newtheorem{remark}[theorem]{Remark}
\newcommand{\Z}{\mathbb{Z}}
\newcommand{\R}{\mathbb{R}}
\newcommand{\Rp}{\R_{\ge0}}
\newcommand{\zeros}{\mathbf{0}}
\newcommand{\ones}{\mathbf{1}}
\newcommand{\e}{\mathbf{e}}
\DeclareMathOperator{\argmin}{arg\,min}
\DeclareMathOperator{\argmax}{arg\,max}
\DeclareMathOperator{\conv}{conv}
\DeclareMathOperator{\dom}{dom}
\DeclareMathOperator{\intr}{int}
\DeclareMathOperator{\tr}{tr}
\DeclareMathOperator{\E}{\mathbb{E}}
\newcommand{\Ccal}{\mathcal{C}}
\newcommand{\Wcal}{\mathcal{W}}
\newcommand{\Xcal}{\mathcal{X}}
\newcommand{\Ycal}{\mathcal{Y}}
\newcommand{\ind}{\mathds{1}}
\newcommand{\x}{\bm{x}}
\newcommand{\y}{\bm{y}}
\newcommand{\U}{\bm{U}}
\newcommand{\V}{\bm{V}}
\newcommand{\W}{\bm{W}}
\newcommand{\thb}{\bm{\theta}}
\newcommand{\yhat}{\widehat{\y}_\Omega}
\newcommand{\yprd}{\widehat{\y}}
\newcommand{\ynep}{\y^*}
\newcommand{\yrnd}{\widetilde{\y}}
\newcommand{\CC}{C^2}
\newcommand{\Hs}{\mathsf{H}^\mathrm{s}}
\renewcommand{\S}{S}
\newcommand{\LT}[2]{L(#1;#2)}
\newcommand{\LS}[2]{\S(#1;#2)}
\newcommand{\LSOmega}[2]{\S_\Omega(#1;#2)}
\newcommand{\LSlog}[2]{\S_{\mathrm{logistic}}(#1;#2)}
\title{Online Structured Prediction with Fenchel--Young Losses and Improved Surrogate Regret for Online Multiclass Classification with Logistic Loss}
\author{%
Shinsaku Sakaue\\
The University of Tokyo\\
Tokyo, Japan\\
\href{mailto:sakaue@mist.i.u-tokyo.ac.jp}{sakaue@mist.i.u-tokyo.ac.jp}
\and
Han Bao\\
Kyoto University\\
Kyoto, Japan\\
\href{mailto:bao@i.kyoto-u.ac.jp}{bao@i.kyoto-u.ac.jp}
\and
Taira Tsuchiya\\
The University of Tokyo\\
Tokyo, Japan\\
\href{mailto:tsuchiya@mist.i.u-tokyo.ac.jp}{tsuchiya@mist.i.u-tokyo.ac.jp}
\and
Taihei Oki\\
The University of Tokyo\\
Tokyo, Japan\\
\href{mailto:oki@mist.i.u-tokyo.ac.jp}{oki@mist.i.u-tokyo.ac.jp}
}
\date{}
\begin{document}

\maketitle

\begin{abstract}%
  This paper studies online structured prediction with full-information feedback.
  For online multiclass classification, \Citet{Van_der_Hoeven2020-ug} established \emph{finite} surrogate regret bounds, which are independent of the time horizon, by introducing an elegant \emph{exploit-the-surrogate-gap} framework. 
  However, this framework has been limited to multiclass classification primarily because it relies on a classification-specific procedure for converting estimated scores to outputs. 
  We extend the exploit-the-surrogate-gap framework to online structured prediction with \emph{Fenchel--Young losses}, a large family of surrogate losses that includes the logistic loss for multiclass classification as a special case, obtaining finite surrogate regret bounds in various structured prediction problems. 
  To this end, we propose and analyze \emph{randomized decoding}, which converts estimated scores to general structured outputs. 
  Moreover, by applying our decoding to online multiclass classification with the logistic loss, we obtain a surrogate regret bound of $O(\norm{\U}_\mathrm{F}^2)$, where $\U$ is the best offline linear estimator and~$\norm{\cdot}_\mathrm{F}$ denotes the Frobenius norm. 
  This bound is tight up to logarithmic factors and improves the previous bound of $O(d\norm{\U}_\mathrm{F}^2)$ due to \Citet{Van_der_Hoeven2020-ug} by a factor of $d$, the number of classes.\looseness=-1 
\end{abstract}

\section{Introduction}\label{sec:intro}
Many machine learning problems involve predicting outputs in a finite set $\Ycal$ from input vectors in a vector space $\Xcal$.
A typical example is multiclass classification, and other tasks require predicting more complex structured objects, e.g., matchings and trees. 
Such problems, known as \emph{structured prediction}, are ubiquitous in many applications, including natural language processing and bioinformatics \citep{BakIr2007-kd}.
Since working directly on discrete output spaces is often intractable, it is usual to adopt the surrogate loss framework (e.g., \citet{Bartlett2006-gd}). 
Common examples are the logistic and hinge losses for classification. 
\Citet{Blondel2020-tu} have studied a family of \emph{Fenchel--Young losses}, which subsumes many practical surrogate losses for structured prediction; see \cref{subsec:fyloss} for details.

Structured prediction can be naturally extended to the online learning setting: for $t = 1,\dots,T$, an adversary picks $(\x_t, \y_t) \in \Xcal\times\Ycal$ and a learner plays $\yprd_t \in \Ycal$ given an input $\x_t \in \Xcal$.
The learner aims to minimize the cumulative target loss $\sum_{t=1}^T \LT{\yprd_t}{\y_t}$, where $L:\Ycal\times\Ycal\to\Rp$ is a target loss function, such as the 0-1 and Hamming losses. 
This paper focuses on the full-information setting, where the true output $\y_t \in \Ycal$ is available as feedback, while another common setting is the bandit setting, where only $\LT{\yprd_t}{\y_t}$ is given. 
A well-studied special case is online multiclass classification \citep{Fink2006-sx,Kakade2008-ud}. 
Let $\Ycal$ be a set representing $d$ classes, $\LT{\yprd_t}{\y_t}$ the standard 0-1 loss, $\U:\Xcal\to\R^d$ the best offline linear estimator, and $\S:\R^d\times \Ycal\to\Rp$ a surrogate loss (e.g., logistic or hinge) that measures the discrepancy between an estimated score vector in $\R^d$ and $\y_t \in \Ycal$. 
In this setting, a reasonable performance metric is the \emph{surrogate regret} $\mathcal{R}_T$ given by 
\begin{equation}
  \sum_{t=1}^T \LT{\yprd_t}{\y_t} = \sum_{t=1}^T \LS{\U\x_t}{\y_t} + \mathcal{R}_T.\footnote{In statistical learning, the term ``surrogate regret'' sometimes refers to the excess risk of surrogate losses, but we here use the term in the above sense following \Citet{Van_der_Hoeven2021-wi}.}
\end{equation}
The tenet behind this metric is that while we want to minimize the cumulative target loss, the best we can do in hindsight with the surrogate loss framework is to minimize the cumulative surrogate loss, and the surrogate regret accounts for the extra target loss incurred by actual plays, $\yprd_1,\dots,\yprd_T$.

For online multiclass classification with the logistic, hinge, and smooth hinge surrogate losses, \Citet{Van_der_Hoeven2020-ug} obtained an $O(d\norm{\U}_\mathrm{F}^2)$ surrogate regret bound, where $\norm{\cdot}_\mathrm{F}$ denotes the Frobenius norm.
Notably, the bound is independent of $T$, or \emph{finite}.
The core idea is to exploit the gap between the 0-1 and surrogate losses, which draws inspiration from online classification with abstention \citep{Neu2020-uw}. 
The author also gave an $O(dB\sqrt{T})$ surrogate regret bound for the bandit setting, where $B$ is the $\ell_2$-diameter of the domain containing $\U$. 
Later, \Citet{Van_der_Hoeven2021-wi} extended the idea to a more general feedback setting and also obtained lower bounds for the case of the smooth hinge surrogate loss. 
In the full-information setting, their lower bound is~$\Omega(dB^2)$, implying that the $O(d\norm{\U}_\mathrm{F}^2)$ bound for the smooth-hinge case is tight if $\norm{\U}_\mathrm{F} = \Theta(B)$.

However, the finite surrogate regret bound provided by \Citet{Van_der_Hoeven2020-ug} has been limited to online multiclass classification so far. 
Although the notion of surrogate regret naturally applies to more general structured prediction problems with the surrogate loss framework, the original exploit-the-surrogate-gap technique relies on a classification-specific decoding procedure for converting estimated scores in $\R^d$ to the outputs in $\set*{1,\dots,d}$, preventing the extension to structured prediction.
Since how to convert scores to structured outputs is non-trivial, it has been unclear when and how we can exploit the surrogate gap to obtain finite surrogate regret bounds in online structured prediction.\footnote{Applying \Citet{Van_der_Hoeven2020-ug} naively to $|\Ycal|$-class classification results in exponentially worse bounds in general.}

We extend the exploit-the-surrogate-gap framework to online structured prediction. 
Regarding surrogate losses, we consider a class of Fenchel--Young losses generated by \emph{Legendre-type} functions, due to its generality and useful properties (see \cref{subsec:fyloss}).
The main challenge lies in converting scores to the outputs in structured space $\Ycal$, for which we propose a \emph{randomized decoding} procedure (\cref{sec:randomizedq_decoding}), together with its efficient implementation based on a fast Frank--Wolfe-type algorithm (\cref{subsec:implementation}).
Our analysis of randomized decoding (\cref{lem:expected_target_bound}) reveals conditions of the structured output space, target loss, and surrogate loss under which we can obtain finite surrogate regret bounds by offsetting the regret in terms of surrogate losses with the surrogate gap.
Consequently, we establish finite surrogate regret bounds that hold in expectation and with high probability (\cref{thm:expected_regret_general,thm:high_probability_regret_general}). 
Additionally, \cref{thm:online-to-batch} shows that our randomized decoding enables \emph{online-to-batch conversion} of surrogate regret bounds to offline guarantees on the target risk. 

Although bounding the surrogate regret may seem to become easier by scaling up the surrogate loss relative to the target loss, our analysis of the surrogate regret is indeed sharp regardless of the scale of the surrogate loss.
To demonstrate the sharpness, \cref{section:multiclass} addresses online multiclass classification with the logistic loss, the same setting as that of \Citet{Van_der_Hoeven2020-ug}. 
We obtain an $O(\norm{\U}_\mathrm{F}^2)$ surrogate regret bound (\cref{thm:expected_regret_multi}), which improves the previous bound of $O(d\norm{\U}_\mathrm{F}^2)$ by a factor of $d$, the number of classes. 
We also provide an $\Omega({B^2}/{\ln^2 d})$ lower bound (\cref{thm:lower_bound}), implying that our bound is tight up to $\ln d$ factors under $\norm{\U}_\mathrm{F} = \Theta(B)$. 
These results shed light on an interesting $O(d)$ difference depending on surrogate losses: $O(d\norm{\U}_\mathrm{F}^2)$ is tight for the smooth hinge loss (\Citealp{Van_der_Hoeven2020-ug}; \Citealp{Van_der_Hoeven2021-wi}), while $O(\norm{\U}_\mathrm{F}^2)$ is almost tight for the logistic loss. 
Our work, grounded in sharp analysis, pushes the boundaries of the exploit-the-surrogate-gap framework and serves as a foundation for obtaining strong guarantees in online structured prediction.

\subsection{Additional Related Work}\label{subsec:related-work}
\paragraph{Structured prediction.}
We here present particularly relevant studies and defer a literature review to \cref{asec:related_work}. 
Prior to the development of the Fenchel--Young loss framework, \citet{Niculae2018-qg} studied \emph{SparseMAP} inference, which trades off the MAP and marginal inference so that the estimator captures the uncertainty, provides a unique solution, and is tractable.
SparseMAP regularizes the output by its squared $\ell_2$-norm and solves the resulting problem with optimization algorithms, such as Frank--Wolfe-type algorithms.
The fundamental idea of the Fenchel--Young losses is inherited from SparseMAP. 
To study the relationship between surrogate and target losses, or the \emph{Fisher consistency}, \citet{Ciliberto2016-nl,Ciliberto2020-zr} introduced target losses of the form $\LT{\y'}{\y} = \inpr{\y',\V\y}$ for some~$\V \in \R^{d \times d}$,\footnote{Although we focus on the Euclidean case, \citet{Ciliberto2016-nl,Ciliberto2020-zr} consider a more general form on Hilbert spaces.} termed as the \emph{Structure Encoding Loss Function (SELF)} by subsequent studies. 
They analyzed the regularized least-square decoder and obtained a \emph{comparison inequality}, a bound on the target excess risk in terms of the surrogate excess risk, for the squared loss. 
Since SELF encompasses many common target losses,\footnote{Indeed, any target loss on a finite set $\Ycal$ is written as a SELF with $\V \in \R^{|\Ycal|\times|\Ycal|}$ such that $V_{\y', \y} = \LT{\y'}{\y}$, though this representation ignores structural information of $\Ycal$ and typically causes inefficiency in learning.} the framework has been oftentimes leveraged by follow-up studies. 
For example, \citet{Blondel2019-gd} studied the Fisher consistency of Fenchel--Young losses based on projection for a generalized variant of SELF, which includes the 0-1, Hamming, and NDCG losses.

\paragraph{Online multiclass classification.}
For online binary classification on \emph{linearly separable} data, the classical Perceptron \citep{Rosenblatt1958-sh} achieves a finite surrogate regret bound. 
\Citet{Van_der_Hoeven2021-wi} extended this to multiclass classification, obtaining an $O(B^2)$ surrogate regret bound under the separability assumption, which matches the lower bound of \citet{Beygelzimer2019-si}.
By contrast, our $O(\norm{\U}_\mathrm{F}^2)$ surrogate regret bound applies to online multiclass classification with general non-separable data.
Another line of work has explored online logistic regression, where the performance is measured by the standard regret of the logistic loss.\footnote{A bound on the regret also upper bounds the surrogate regret in expectation since $1 - x \le -\log_2x$ for $x \in (0,1]$.} 
In this context, Online Newton Step \citep{Hazan2007-ta} is known as an $O(e^{B/2} \ln T)$-regret algorithm (omitting dimension factors), and obtaining an $O(\mathrm{poly}(B)\ln T)$ regret bound had been a major open problem \citep{McMahan2012-yb}. 
Despite a negative answer to the original question by \citet{Hazan2014-cc}, a seminal work by \citet{Foster2018-tm} achieved an $O(\ln(BT))$ regret bound (a doubly exponential improvement in $B$) via \emph{improper learning}, where a learner can use an estimator that is non-linear in $\x_t$.
While their original algorithm is inefficient, recent studies provide more efficient $O(B\ln T)$-regret improper algorithms \citep{Jezequel2021-hk,Agarwal2022-hk}. 
In contrast to this stream of research, we focus on obtaining finite surrogate regret bounds via proper learning.
For a more extensive literature review, we refer the reader to \Citet{Van_der_Hoeven2020-ug}.

\section{Preliminaries}
Let $\Rp$ be the set of non-negative reals. 
Let $[n] = \set{1,\dots,n}$ for any positive integer $n$.
Let $\ind_A$ be the 0-1 loss that takes one if $A$ is true and zero otherwise.
Let $\norm{\cdot}$ be any norm (typically, $\ell_1$ or $\ell_2$) that satisfies $\kappa\norm{\y} \ge \norm{\y}_2$ for some $\kappa > 0$ for any $\y \in \R^d$.
For a matrix $\W$, let $\norm{\W}_\mathrm{F} = \sqrt{\tr(\W^\top\W)}$ be the Frobenius norm.
Let $\ones$ be the all-ones vector and $\e_i$ the $i$th standard basis vector, i.e., all zeros except for the $i$th entry being one. 
For $\Ccal\subseteq \R^d$, $\conv(\Ccal)$ denotes its convex hull, $\intr(\Ccal)$ its interior, and $I_\Ccal:\R^d\to\set{0, +\infty}$ its indicator function, which takes zero if $\y \in \Ccal$ and $+\infty$ otherwise.
For $\Omega:\R^d\to\R\cup\set{+\infty}$, $\dom(\Omega) \coloneqq \Set*{\y \in \R^d}{\Omega(\y) < +\infty}$ denotes its effective domain and $\Omega^*(\thb) \coloneqq \sup\Set*{\inpr{\thb, \y} - \Omega(\y)}{\y \in \R^d}$ its convex conjugate.
Let $\triangle^d \coloneqq \Set{\y\in\Rp^d}{\norm{\y}_1=1}$ be the probability simplex and $\Hs(\y) \coloneqq -\sum_{i=1}^d y_i \ln y_i$ the Shannon entropy of $\y \in \triangle^d$.

Let $\Psi:\R^d\to\R\cup\set{+\infty}$ be a strictly convex function differentiable throughout $\intr(\dom\Psi) \neq \emptyset$. 
We say $\Psi$ is of \emph{Legendre-type} if $\lim_{i\to\infty}\norm{\nabla \Psi(\x_i)}_2 = +\infty$ whenever $\x_1,\x_2,\dots$ is a sequence in $\intr(\dom(\Psi))$ converging to a boundary point of $\intr(\dom(\Psi))$ (see, \citet[Section~26]{Rockafellar1970-sk}).\footnote{Strictly speaking, this property is \emph{essential smoothness}, which, combined with strict convexity, implies Legendre-type.} 
Also, given a convex set $\Ccal \subseteq \dom(\Psi)$, we say $\Psi$ is $\lambda$-strongly convex with respect to $\norm{\cdot}$ over $\Ccal$ if $\Psi(\y) \ge \Psi(\y') + \inpr{\nabla\Psi(\y'), \y - \y'} + \frac{\lambda}{2}\norm{\y - \y'}^2$ holds for any $\y \in \Ccal$ and $\y' \in \intr(\dom(\Psi)) \cap \Ccal$.

\subsection{Problem Setting}\label{subsec:problem_setting}
Let $\Xcal$ be an input vector space. 
For consistency with \citet{Blondel2020-tu}, we let $\Ycal$ be the set of outputs embedded into $\R^d$ in the standard manner. 
For example, we let $\Ycal = \set*{\e_1,\dots,\e_d}$ in multiclass classification with $d$ classes. 
We focus on the case where observable feedback comes from $\Ycal$.

As with online multiclass classification, we consider learning a linear estimator $\W$ that maps an input vector $\x \in \Xcal$ to a score vector $\W\x \in \R^d$. 
The learning proceeds for $t = 1,\dots,T$. 
In each $t$th round, an adversary picks an input vector $\x_t \in \Xcal$ and the true output $\y_t \in \Ycal$. 
The learner receives $\x_t$ and computes a score vector $\thb_t = \W_t\x_t$ with a current estimator $\W_t$.
The learner then chooses $\yprd_t \in \Ycal$ based on $\thb_t$, plays it, and incurs a target loss of $\LT{\yprd_t}{\y_t}$.
The learner receives $\y_t$ as feedback and updates $\W_t$ to $\W_{t+1}$. 
The goal of the learner is to minimize the cumulative target loss $\sum_{t=1}^T \LT{\yprd_t}{\y_t}$. 
We assume the following conditions on the output space and the target loss.
\begin{assumption}\label[assumption]{assump:nu}
  (I) There exists $\nu > 0$ such that $\norm{\y - \y'} \ge \nu$ holds for any $\y, \y' \in \Ycal$ with $\y \neq \y'$, 
  (II) for each $\y \in \Ycal$, the target loss $\LT{\cdot}{\y}$ is defined on $\conv(\Ycal)$, non-negative, and affine in the first argument,\footnote{
    Condition (II) is assumed for technical convenience, although target losses are inherently defined on $\Ycal$. 
    This specifically ensures $\E[\LT{\yrnd}{\y}] = \LT{\E[\yrnd]}{\y}$ for $\yrnd$ drawn randomly from $\Ycal$, which we will use in the proof of \cref{lem:expected_target_bound}.} and 
  (III) $\LT{\y'}{\y} \le \gamma\norm{\y' - \y}$ holds for some $\gamma > 0$, for any $\y' \in \conv(\Ycal)$ and $\y \in \Ycal$.
\end{assumption}  
These conditions are not restrictive; see \cref{subsec:example} for examples satisfying them. 
Regarding~(I), $\nu$ lower bounded in many cases. 
For instance, if $\norm{\cdot}$ is an $\ell_p$-norm and $\Ycal \subseteq \Z^d$, $\nu \ge 1$ holds. 
In addition, if $\y^\top\ones$ is constant for all $\y \in \Ycal$, distinct $\y, \y' \in \Ycal$ have at least two entries that differ by at least~$1$ in magnitude, hence $\nu \ge 2^{1/p}$.
As for (II), SELFs $\LT{\y'}{\y}$ $= \inpr{\y',\V\y}$ are defined on $\conv(\Ycal)$ and affine in $\y'$. 
Moreover, \citet[Appendix~A]{Blondel2019-gd} provides many target losses expressed as $\LT{\y'}{\y}$ $= \inpr{\y', \V\y + \bm{b}} + c(\y)$ for $\y, \y' \in \conv(\Ycal)$ with some $\V \in \R^{d\times d}$, $\bm{b}\in \R^d$ and $c(\y) \in \R$, which are also defined on $\conv(\Ycal)$ and affine in $\y'$. 
Condition (III) is typically satisfied by moderate~$\gamma$ values (see \cref{subsec:example}).
Note that the non-negativity and (III) imply $\LT{\y'}{\y} = 0$ if $\y' = \y$.

\subsection{Fenchel--Young Loss}\label{subsec:fyloss}
We adopt the surrogate loss framework considered in \citet{Blondel2020-tu}.
We define an intermediate \emph{score space} $\R^d$ between $\Xcal$ and $\Ycal$ and measure the discrepancy between a score vector $\thb \in \R^d$ and the ground truth $\y \in \Ycal$ with a surrogate loss $\S:\R^d\times\Ycal\to\Rp$; here, we suppose $\thb$ to be given by $\W_t\x_t$ as in \cref{subsec:problem_setting}.
\Citet{Blondel2020-tu} provides a general recipe for designing various surrogate losses, called \emph{Fenchel--Young losses}, for structured prediction from regularization functions. 
\begin{definition}
  Let $\Omega:\R^d\to\R\cup\set{+\infty}$ be a regularization function such that $\Ycal \subseteq \dom(\Omega)$. 
  The Fenchel--Young loss $\S_\Omega:\dom(\Omega^*)\times\dom(\Omega)\to\Rp$ generated by $\Omega$ is defined as 
  \[
    \LSOmega{\thb}{\y} \coloneqq \Omega^*(\thb) + \Omega(\y) - \inpr{\thb, \y}. 
  \]
\end{definition}
By definition, $\LSOmega{\thb}{\y}$ is convex in $\thb$ for any $\y \in \dom(\Omega)$.
Furthermore, $\LSOmega{\thb}{\y} \ge 0$ follows from the Fenchel--Young inequality, and $\LSOmega{\thb}{\y} = 0$ holds if and only if $\y \in \partial\Omega^*(\thb)$.

We focus on special Fenchel--Young losses studied in \citet[Section~3.2]{Blondel2020-tu}, which are generated by $\Omega$ of the form $\Psi + I_{\conv(\Ycal)}$ (i.e., $\Omega$ is the restriction of $\Psi$ to $\conv(\Ycal)$), where $\Psi$ is differentiable, of Legendre-type, and $\lambda$-strongly convex w.r.t.\ $\norm{\cdot}$, and satisfies $\conv(\Ycal) \subseteq \dom(\Psi)$ and $\dom(\Psi^*) = \R^d$. 
Such Fenchel--Young losses subsume various useful surrogate losses, including the logistic, CRF, and SparseMAP losses, and enjoy the following helpful properties.
See \citet[Propositions~2~and~3]{Blondel2020-tu} for more details, and also \cref{asec:crf} for a note on the CRF loss.
\begin{proposition}\label[proposition]{prop:fyloss_properties}
  Let $\S_\Omega$ be a Fenchel--Young loss generated by $\Omega = \Psi + I_{\conv(\Ycal)}$, where $\Psi:\R^d\to\R\cup\set{+\infty}$ satisfies the above properties. 
  For $\thb \in \R^d$, define the regularized prediction function as
  \[
    \yhat(\thb) \coloneqq \argmax\Set{\inpr{\thb, \y} - \Omega(\y)}{\y \in \R^d} = \argmax\Set{\inpr{\thb, \y} - \Psi(\y)}{\y \in \conv(\Ycal)}, 
  \]
  where the maximizer is unique. 
  Then, for any $\y \in \Ycal$, $\LSOmega{\thb}{\y}$ is differentiable in $\thb$ and the gradient is the residual, i.e., $\nabla \LSOmega{\thb}{\y} = \yhat(\thb) - \y$. 
  Furthermore, $\LSOmega{\thb}{\y} \ge \frac{\lambda}{2}\norm{\y - \yhat(\thb)}^2$ holds.\footnote{\Citet[Proposition~3]{Blondel2020-tu} shows $\LSOmega{\thb}{\y} \ge B_\Psi(\y \,\|\, \yhat(\thb))$, where $B_\Psi$ is the Bregman divergence induced by $\Psi$, and $B_\Psi(\y \,\|\, \yhat(\thb)) \ge \frac{\lambda}{2}\norm{\y - \yhat(\thb)}^2$ follows from the $\lambda$-strong convexity of $\Psi$ with respect to $\norm{\cdot}$. 
  The same inequality is also used in \citet[Lemma~3]{Blondel2019-gd}.
  This is the only part where we need $\Psi$ to be of Legendre-type.}
\end{proposition}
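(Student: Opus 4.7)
The plan is to derive all three claims by standard Fenchel-conjugate machinery, treating the indicator $I_{\conv(\Ycal)}$ carefully. As the footnote acknowledges, the statement is essentially a specialization of \citet[Propositions~2~and~3]{Blondel2020-tu}, so the work amounts to checking that our hypotheses on $\Psi$ meet the hypotheses of those results. Uniqueness of $\yhat(\thb)$ and equivalence of the two $\argmax$ expressions come for free: since $\Omega(\y) = +\infty$ for $\y \notin \conv(\Ycal)$, the unconstrained maximizer must lie in $\conv(\Ycal)$, where $\Omega$ reduces to $\Psi$, and strict convexity of $\Psi$ on $\dom(\Psi) \supseteq \conv(\Ycal)$ forces the maximizer to be unique.

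For differentiability in $\thb$ and the gradient formula, it suffices to show $\Omega^*$ is differentiable on $\R^d$ with $\nabla \Omega^*(\thb) = \yhat(\thb)$, since the other summands of $\S_\Omega$ contribute trivially. The domain inclusion $\dom(\Omega^*) \supseteq \dom(\Psi^*) = \R^d$ follows from $\Omega \ge \Psi$. Strict convexity of $\Omega$ on $\conv(\Ycal)$ implies essential strict convexity, so \citet[Theorem~26.3]{Rockafellar1970-sk} yields essential smoothness of $\Omega^*$; combined with $\dom(\Omega^*) = \R^d$ this gives continuous differentiability everywhere, and $\nabla\Omega^*(\thb) = \yhat(\thb)$ is the standard conjugate gradient identity (or a direct application of Danskin's theorem to the unique $\argmax$). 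Hence $\nabla_\thb \S_\Omega(\thb;\y) = \yhat(\thb) - \y$.

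For the quadratic lower bound, I would expand
\[
  \LSOmega{\thb}{\y} = \Omega^*(\thb) + \Omega(\y) - \inpr{\thb,\y} = \Psi(\y) - \Psi(\yhat(\thb)) - \inpr{\thb, \y - \yhat(\thb)}
\]
using $\Omega^*(\thb) = \inpr{\thb, \yhat(\thb)} - \Psi(\yhat(\thb))$ and $\Omega = \Psi$ on $\conv(\Ycal)$. The Legendre-type assumption on $\Psi$ ensures $\yhat(\thb) \in \intr(\dom\Psi)$, where $\nabla\Psi$ is defined, so the first-order optimality condition reads $\thb \in \nabla\Psi(\yhat(\thb)) + N_{\conv(\Ycal)}(\yhat(\thb))$. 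Writing $\thb = \nabla\Psi(\yhat(\thb)) + \bm{\xi}$ with $\bm{\xi}$ in the normal cone gives $\inpr{\bm{\xi}, \y - \yhat(\thb)} \le 0$ for every $\y \in \Ycal \subseteq \conv(\Ycal)$; substituting and recognizing the Bregman divergence produces $\LSOmega{\thb}{\y} \ge B_\Psi(\y \,\|\, \yhat(\thb))$, and $\lambda$-strong convexity of $\Psi$ with respect to $\norm{\cdot}$ closes out with $\frac{\lambda}{2}\norm{\y - \yhat(\thb)}^2$.

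The main obstacle is controlling the boundary of $\conv(\Ycal)$ inside $\dom(\Psi)$: without Legendre-type, $\yhat(\thb)$ could land where $\nabla\Psi$ is undefined, invalidating the Bregman rewriting in the third step; and without $\dom(\Psi^*) = \R^d$, we would not be guaranteed that $\Omega^*$ is smooth on all of $\R^d$, only on an interior subdomain. Pinning down exactly which hypothesis is used where is the only real technical care required; the rest is a mechanical appeal to classical conjugate duality.
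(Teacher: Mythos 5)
Your proposal is correct and takes essentially the same route as the paper, which gives no independent proof: it invokes \citet[Propositions~2~and~3]{Blondel2020-tu} for the uniqueness/gradient claims and, in the footnote, the chain $\LSOmega{\thb}{\y} \ge B_\Psi(\y \,\|\, \yhat(\thb)) \ge \frac{\lambda}{2}\norm{\y - \yhat(\thb)}^2$, which is exactly your Bregman-divergence-plus-strong-convexity step. Your use of the Legendre-type property to guarantee $\yhat(\thb) \in \intr(\dom(\Psi))$ so that the first-order optimality and Bregman rewriting make sense is precisely the point the paper's footnote flags as the only place that assumption is needed.
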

The last inequality will turn out useful in analyzing our randomized decoding (see \cref{lem:expected_target_bound}).
\Cref{prop:fyloss_properties} also implies $\norm{\nabla \LSOmega{\thb}{\y}}^2 \le \frac{2}{\lambda}\LSOmega{\thb}{\y}$, which we will use in the proof of \cref{thm:expected_regret_general}. 
This type of inequality plays a crucial role in exploiting the surrogate gap, as highlighted in \cref{prop:surrogate_gap}.

\subsection{Examples}\label{subsec:example}
Below are three typical structured prediction problems and Fenchel--Young losses satisfying the above conditions, and \Cref{asec:additional_applications} gives two more examples; all the five are considered in \citet[Section~4]{Blondel2019-gd}. 
More examples of structured outputs are provided in \citet[Section~7.3]{Blondel2020-tu}.

\paragraph{Multiclass classification.}
Let $\Ycal = \set*{\e_1,\dots,\e_d}$ and $\norm{\cdot}$ be the $\ell_1$-norm.  
Since $\norm{\e_i - \e_j}_1 \ge 2$ holds for any distinct $i, j \in [d]$, we have $\nu = 2$.
For any $\e_i \in \Ycal$, the 0-1 loss, $\LT{\y'}{\e_i} = \ind_{\y'\neq\e_i}$, can be extended on $\conv(\Ycal)$ as $\LT{\y'}{\e_i} = \inpr{\y', \ones - \e_i}$, which is affine in $\y'$ and equals $\sum_{j \neq i} y'_j = \frac{1}{2}\prn*{1 - y'_i + \sum_{j \neq i} y'_j} = \frac{1}{2}\norm{\e_i - \y'}_1$ due to $\sum_{i=1}^n y'_i = 1$, hence $\gamma = \frac12$.
As detailed in \cref{section:multiclass}, the logistic loss can be written as a Fenchel--Young loss generated by an entropic regularizer $\Omega$.\footnote{The multiclass hinge loss \citep{Crammer_undated-db} is also written as a \emph{cost-sensitive} Fenchel--Young loss \citep[Section~3.4]{Blondel2020-tu}. 
However, this requires the ground truth $\y$ when computing a counterpart of $\yhat(\thb)$, which we need before $\y_t$ is revealed (see \cref{alg:sgaptron}). 
Thus, it seems difficult to apply our approach to the (smooth) hinge~loss.\label{footnote:cost-sensitive}}

\paragraph{Multilabel classification.}
We consider multilabel classification with $\Ycal = \set*{0, 1}^d$. 
If $\norm{\cdot}$ is the $\ell_2$-norm, we have $\nu = 1$. 
A common target loss is the Hamming loss $\LT{\y'}{\y} = \frac{1}{d}\sum_{i=1}^d \ind_{y'_i \neq y_i}$, where the division by $d$ scales the loss to $[0,1]$. 
For any $\y \in \Ycal$, it is represented on $\conv(\Ycal)$ as $\LT{\y'}{\y} = \frac{1}{d}\prn*{\inpr{\y', \ones} + \inpr{\y, \ones} - 2\inpr{\y', \y}}$, which is affine in $\y'$ and satisfies $\LT{\y'}{\y} = \frac{1}{d}\norm{\y' - \y}_1 \le \frac{1}{\sqrt{d}}\norm{\y' - \y}_2$, hence $\gamma = \frac{1}{\sqrt{d}}$. 
If we let $\Omega = \frac{1}{2}\norm{\cdot}_2^2 + I_{\conv(\Ycal)}$, we have $\lambda = 1$ (i.e., $1$-strongly convex), and the resulting Fenchel--Young loss is the SparseMAP loss: $\LSOmega{\thb}{\y} = \frac12\norm{\y - \thb}_2^2 - \frac12\norm{\yhat(\thb) - \thb}_2^2$.

\paragraph{Ranking.}
  We consider predicting the ranking of $n$ items. 
  Let $d = n^2$ and $\Ycal \subseteq \set{0,1}^{d}$ be the set of all $n\times n$ permutation matrices, vectorized into $\set{0,1}^d$. 
  Then, $\conv(\Ycal)$ is the Birkhoff polytope. 
  For $\y \in \conv(\Ycal)$, $y_{ij}$ refers to the $(i, j)$ entry of the corresponding matrix.
  If $\norm{\cdot}$ is the $\ell_1$-norm, $\norm{\y - \y'}_1 \ge 4$ holds for distinct $\y, \y'\in \Ycal$, hence $\nu = 4$.
  We use a target loss that counts mismatches. 
  Specifically, let 
  $\LT{\y'}{\y} = \frac{1}{n}\sum_{i=1}^n \ind_{y'_{i, j_i}\neq y_{i, j_i}}$ for $\y, \y'\in \Ycal$, where $j_i \in [n]$ is a unique index such that $y_{ij_i} = 1$ for each $i\in [n]$ and the division by $n$ scales the loss to $[0,1]$. 
  For any $\y \in \Ycal$, this loss can be represented on $\conv(\Ycal)$ as $\LT{\y'}{\y} = \frac{1}{n} \inpr{\y', \ones - \y}$, which is affine in $\y'$. 
  Furthermore, it equals $\frac{1}{2n}\sum_{i=1}^n (1 - y'_{ij_i} + \sum_{j\neq j_i}y'_{ij}) = \frac{1}{2n}\norm{\y' - \y}_1$ since $\sum_{j}y'_{ij} = 1$ holds for each $i\in[n]$, hence $\gamma = \frac{1}{2n}$.
  Drawing inspiration from celebrated entropic optimal transport \citep{Cuturi2013-gy}, we consider a Fenchel--Young loss generated by $\Omega = -\frac{1}{\mu}\Hs + I_{\conv(\Ycal)}$, where  $\mu > 0$ controls the regularization strength.\footnote{With this choice of $\Omega$, we can efficiently compute the regularized prediction $\yhat(\thb)$ with the Sinkhorn algorithm.} 
  Since $-\frac{1}{\mu}\Hs$ is $\frac{1}{n\mu}$-strongly convex w.r.t. $\norm{\cdot}_1$ over $\conv(\Ycal)$ \citep[Proposition~2]{Blondel2019-gd}, we have $\lambda = \frac{1}{n\mu}$. 
  The resulting $\LSOmega{\thb}{\y}$ is written as $\inpr{\thb, \yhat(\thb) - \y} + \frac{1}{\mu} \Hs(\yhat(\thb))$, where the first term measures the affinity between $\thb$ and $\y \in \Ycal$, and the second term penalizes the uncertainty of $\yhat(\thb)$.\looseness=-1 

  \section{Randomized Decoding}\label{sec:randomizedq_decoding}

We present our key technical tool, randomized decoding, for converting a score vector $\thb \in \R^d$ to an output $\yprd \in \Ycal$. 
Our randomized decoding (\cref{alg:decoding}) returns either $\ynep \in \Ycal$ closest to $\yhat(\thb) \in \conv(\Ycal)$ or random $\yrnd \in \Ycal$ such that $\E[\yrnd \,|\, Z=1] = \yhat(\thb)$, where $\Omega$ is a regularization function generating the Fenchel--Young loss $\S_\Omega$ and $Z$ is the Bernoulli random variable with parameter $p$, as in \cref{step:bernoulli}. 
Intuitively, the closer the regularized prediction $\yhat(\thb)$ is to $\ynep$ (i.e., $\Delta^*$ is smaller), the more confident $\thb$ is about $\ynep$, and hence the decoding procedure returns $\ynep$ with a higher probability; 
otherwise, $\thb$ is not sufficiently confident about any $\y \in \Ycal$, and hence the decoding procedure more likely returns a random $\yrnd$.
The confidence is quantified by $2\Delta^*/\nu$ (smaller values indicate higher confidence), where $\nu$ is the minimum distance between distinct elements in $\Ycal$, as in \cref{assump:nu}.\footnote{This confidence measure is based on a rationale that $2\Delta^*/\nu < 1$ ensures that $\ynep$ is the closest point to $\yhat(\thb)$ among all $\y \in \Ycal$. Specifically, if $\Delta^* = \norm{\ynep - \yhat(\thb)} < \nu/2$ holds, for any $\y \in \Ycal\setminus\set{\ynep}$, the triangle inequality implies $\norm{\y - \yhat(\thb)} \ge \norm{\y - \ynep} - \norm{\ynep - \yhat(\thb)} > \nu - \nu /2 > \norm{\ynep - \yhat(\thb)}$. Note that the opposite is not always true.} 

\begin{algorithm}[tb]
  \caption{Randomized decoding $\psi_\Omega$}
  \label[algorithm]{alg:decoding}
  \begin{algorithmic}[1]
    \REQUIRE{$\thb \in \R^d$} 
    \STATE $\yhat(\thb) \gets \argmax\Set*{\inpr{\thb, \y} - \Psi(\y)}{\y\in\conv(\Ycal)}$ \label[step]{step:regularized_max}
    \STATE $\ynep \gets \argmin\Set*{\norm{\y - \yhat(\thb)}}{\y \in \Ycal}$ (breaking ties arbitrarily)\label[step]{step:nearest_extreme_point}
    \STATE $\Delta^* \gets \norm{\ynep - \yhat(\thb)}$ and $p \gets \min\set*{1, 2\Delta^*/\nu}$ 
    \STATE $Z\gets 0$ with probability $1 - p$; $Z \gets 1$ with probability $p$ \label[step]{step:bernoulli}
    \STATE $\yprd \gets \begin{cases} \ynep & \text{if $Z=0$} \\ \yrnd & \text{if $Z=1$, where $\yrnd$ is randomly drawn from $\Ycal$ so that $\E[\yrnd \,|\, Z=1] = \yhat(\thb)$} \end{cases}$   \label[step]{step:random_select}
    \STATE \textbf{return} $\psi_\Omega(\thb) = \yprd$
  \end{algorithmic}
\end{algorithm}

The following \cref{lem:expected_target_bound} is our main technical result regarding the randomized decoding.
Despite the simplicity of the proof, it plays a crucial role in the subsequent analysis of the surrogate regret.
\begin{lemma}\label[lemma]{lem:expected_target_bound}
  For any $(\thb, \y) \in \R^d\times\Ycal$, the randomized decoding $\psi_\Omega$ (\cref{alg:decoding}) satisfies
  \[
    \E[\LT{\psi_\Omega(\thb)}{\y}] \le \frac{4\gamma}{\lambda\nu} \LSOmega{\thb}{\y}. 
  \]
\end{lemma}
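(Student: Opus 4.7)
Write $\Delta^* := \norm{\ynep - \yhat(\thb)}$ and $D := \norm{\y - \yhat(\thb)}$. The plan is to first establish the intermediate bound $\E[\LT{\psi_\Omega(\thb)}{\y}] \le 2\gamma D^2/\nu$, from which the stated inequality follows at once by the strong-convexity lower bound $D^2 \le \frac{2}{\lambda}\LSOmega{\thb}{\y}$ supplied by \cref{prop:fyloss_properties}. The starting point is to expand the expectation: using affinity of $\LT{\cdot}{\y}$ on $\conv(\Ycal)$ from \cref{assump:nu}(II) together with the conditional mean $\E[\yrnd \mid Z = 1] = \yhat(\thb)$,
\[
\E[\LT{\psi_\Omega(\thb)}{\y}] = (1-p)\LT{\ynep}{\y} + p\, \LT{\yhat(\thb)}{\y},
\]
and then I would apply the Lipschitz-type bound $\LT{\y'}{\y} \le \gamma \norm{\y' - \y}$ from \cref{assump:nu}(III) to each term.

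Next, I would split on whether $\y$ equals $\ynep$. When $\y = \ynep$, the non-negativity and Lipschitz properties together force $\LT{\ynep}{\y} = 0$ while $D = \Delta^*$; since $p \le 2\Delta^*/\nu$ by construction, the expectation is at most $p\gamma\Delta^* \le 2\gamma(\Delta^*)^2/\nu = 2\gamma D^2/\nu$, giving the intermediate bound directly. When $\y \neq \ynep$, the triangle inequality $\norm{\ynep - \y} \le \Delta^* + D$ converts the expanded expectation into $(1-p)\gamma\Delta^* + \gamma D$, so the task reduces to the purely deterministic inequality $(1-p)\Delta^* + D \le 2 D^2/\nu$.

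For this remaining inequality I would invoke \cref{assump:nu}(I): since $\y \neq \ynep$, one has $\nu \le \norm{\y - \ynep} \le \Delta^* + D$, producing the separation $\Delta^* + D \ge \nu$. If $p = 1$, the inequality collapses to $D \ge \nu/2$, which holds because $D \ge \Delta^* \ge \nu/2$ (the first from the optimality of $\ynep$ over $\Ycal$, the second from $p = 1 \Leftrightarrow 2\Delta^*/\nu \ge 1$). If $p = 2\Delta^*/\nu$, after multiplying by $\nu$ the target becomes $\nu(\Delta^* + D) \le 2(\Delta^*)^2 + 2 D^2$, which I would obtain by chaining $\nu(\Delta^* + D) \le (\Delta^* + D)^2$ (using $\Delta^* + D \ge \nu$) with $(\Delta^* + D)^2 \le 2(\Delta^*)^2 + 2D^2$ (AM--GM). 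The main obstacle I anticipate is keeping the constants sharp: a cruder triangle bound such as $\norm{\ynep - \y} \le 2D$ (valid since $\Delta^* \le D$) would leave a bare $2\gamma D$ and yield $\frac{8\gamma}{\lambda\nu}\LSOmega{\thb}{\y}$, off by a factor of two; simultaneously exploiting the exact value of $p = \min\set{1, 2\Delta^*/\nu}$ and the separation $\Delta^* + D \ge \nu$ is what lets the constant come out to $\frac{4\gamma}{\lambda\nu}$.
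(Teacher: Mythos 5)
Your proposal is correct and follows essentially the same route as the paper's proof: the same expansion of the expectation via affinity of $\LT{\cdot}{\y}$, the same Lipschitz and triangle-inequality bounds, the same intermediate target $\E[\LT{\psi_\Omega(\thb)}{\y}] \le 2\gamma D^2/\nu$, and the same final appeal to the strong-convexity inequality from \cref{prop:fyloss_properties}. The only difference is in the elementary algebra of the non-trivial case ($\y \neq \ynep$, $p = 2\Delta^*/\nu$), where the paper verifies $2u^2 + 2v^2 - u - v \ge 0$ via a factorization while you chain $\nu(\Delta^* + D) \le (\Delta^* + D)^2 \le 2(\Delta^*)^2 + 2D^2$ -- an equivalent, slightly cleaner finish that yields the same constant.
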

\begin{proof}
  Let $\Delta = \norm{\y - \yhat(\thb)}$. 
  Note that $\Delta \ge \Delta^* = \norm{\ynep - \yhat(\thb)}$ holds by definition of $\ynep$. 
  Since $\LT{\cdot}{\y}$ is affine as in \cref{assump:nu}, we have $\E[\LT{\yrnd}{\y} \,|\, Z=1] = \LT{\yhat(\thb)}{\y}$.
  Thus, it holds that
  \begin{align}
    \E[\LT{\psi_\Omega(\thb)}{\y}] 
    ={}& (1 - p) \LT{\ynep}{\y} + p \LT{\yhat(\thb)}{\y} \\ 
    ={}&
    \begin{cases}
      p \LT{\yhat(\thb)}{\y} & \text{if } \Delta^* \ge \nu/2 \text{ or } \ynep = \y, \\
      \prn*{1 - p} \LT{\ynep}{\y} + p \LT{\yhat(\thb)}{\y} & \text{if } \Delta^* < \nu/2 \text{ and } \ynep \neq \y.
    \end{cases}
  \end{align}
  Below, we will prove $\E[\LT{\psi_\Omega(\thb)}{\y}] \le 2\gamma\Delta^2/\nu$; 
  then, the desired bound follows from $\lambda\Delta^2/2 \le  \LSOmega{\thb}{\y}$ given in \cref{prop:fyloss_properties}.
  In the first case, from $p \le {2\Delta^*}/{\nu} \le {2\Delta}/{\nu}$ and $\LT{\yhat(\thb)}{\y} \le \gamma \Delta$ (due to \cref{assump:nu}), we obtain $p\LT{\yhat(\thb)}{\y} \le  {2\gamma\Delta^2}/{\nu}$. 
  In the second case, by using $p = {2\Delta^*}/{\nu}$, $\LT{\y'}{\y} \le \gamma\norm{\y' - \y}$ for any $\y'\in\conv(\Ycal)$ (\cref{assump:nu}), and the triangle inequality, we obtain
  \begin{align}
    \E[\LT{\psi_\Omega(\thb)}{\y}]
    \le{}& \prn*{1 - 2\Delta^*/\nu} \gamma\norm{\ynep - \y} + (2\Delta^*/\nu) \gamma\norm{\yhat(\thb) - \y} \\
    \le{}& \prn*{1 - 2\Delta^*/\nu} \gamma(\norm{\ynep - \yhat(\thb)} + \norm{\yhat(\thb) - \y}) + (2\Delta^*/\nu) \gamma\norm{\yhat(\thb) - \y} \\
    ={}& \prn*{1 - 2\Delta^*/\nu}\gamma\Delta^* + \gamma\Delta. 
  \end{align}
  Hence, it suffices to prove $\prn*{1 - 2\Delta^*/\nu}\gamma\Delta^* + \gamma\Delta \le 2\gamma\Delta^2/\nu$; by dividing both sides by $\gamma\nu$ and letting $u = \Delta^*/\nu$ and $v = \Delta/\nu$, this can be simplified as $2u^2 + 2v^2 - u - v \ge 0$. 
  From the triangle inequality and $\ynep \neq \y$, we have $\Delta^* + \Delta \ge \norm{\ynep - \y} \ge \nu$, i.e., $u + v \ge 1$.  
  Also, $\Delta^* < \nu/2$ implies $u < 1/2$. 
  Combining them yields $0 \le u < 1/2 < v$. 
  These imply the desired inequality as follows:
  \[
    2u^2 + 2v^2 - u - v = (u + v - 1)(2u + 2v - 1) + (2v - 1)(1 - 2u) \ge 0.
  \]
  Therefore, we have $\E[\LT{\psi_\Omega(\thb)}{\y}] \le 2\gamma\Delta^2/\nu$ in any case, completing the proof.
\end{proof}

As we will see in \cref{prop:surrogate_gap}, given a possibly randomized decoding function $\psi:\R^d\to\Ycal$, a sufficient condition for achieving finite surrogate regret bounds is the existence of $a \in (0, 1)$ such that $\E[\LT{\psi(\thb)}{\y}] \le (1 - a) \LSOmega{\thb}{\y}$ holds for any $(\thb, \y) \in \R^d\times\Ycal$, which leads to a surrogate regret bound proportional to $1/a$. 
We call the quantity $a\LSOmega{\thb}{\y}$ the \emph{surrogate gap}.\footnote{The original definition of the surrogate gap \Citep{Van_der_Hoeven2020-ug} slightly differs, but represents a similar quantity.} 
\Cref{lem:expected_target_bound} will ensure that our randomized decoding offers meaningful surrogate gaps.

\paragraph{Necessity of mixing $\ynep$ and $\yrnd$.}
Our randomized decoding is a mixture of two strategies: returning $\ynep$ or random $\yrnd$.\footnote{While \Citet{Van_der_Hoeven2020-ug} uses a similar mixing strategy, a difference lies in the definition of $\yrnd$, which is crucial for shaving the $O(d)$ factor in the case of online multiclass classification. See \cref{section:multiclass} for a detailed discussion.} 
We explain that either strategy alone does not yield the desired surrogate gap.
Let us discuss the deterministic decoding that always returns $\ynep$.
Consider binary classification with $\Ycal = \set{\e_1, \e_2}$.
Let $\y = \e_1$ be the ground truth and $\thb = (\theta_1, \theta_2) = (1, 1+\ln(2^{1+\varepsilon} - 1))$ for some small $\varepsilon > 0$, which slightly favors $\e_2$ by mistake. 
Then, the logistic loss is $\log_2(1 + \exp(\theta_2 - \theta_1)) = 1+\varepsilon$, and the 0-1 loss is 1 since the deterministic decoding converts $\thb$ to $\e_2$.  
Thus, only a surrogate gap with $a \le \frac{\varepsilon}{1+\varepsilon}$ is left, leading to an arbitrarily large $\Omega(1/\varepsilon)$ surrogate regret bound. 
By contrast, our randomized decoding applied to this setting yields a surrogate gap with $a = 1 - \ln2 \in (0, 1)$ (see \cref{thm:expected_regret_multi}).
Further investigation of this example also suggests that the multiplicative constant of~$4$ in \cref{lem:expected_target_bound} cannot be smaller than $2/\ln2 \approx 2.89$ (see \cref{asec:tightness_of_lemma}).
Next, we discuss the strategy that always returns random $\yrnd$.
If we do so (i.e., fix $p$ to $1$ in the proof of \cref{lem:expected_target_bound}), we only have 
$\E[\LT{\psi_\Omega(\thb)}{\y}] = \LT{\yhat(\thb)}{\y} \le \gamma\Delta$ by \cref{assump:nu} and $\lambda\Delta^2/2 \le \LSOmega{\thb}{\y}$ by \cref{prop:fyloss_properties}.
These do not imply the desired relation, $\E[\LT{\psi_\Omega(\thb)}{\y}] \le (1 - a)\LSOmega{\thb}{\y}$, when $\Delta \ll 1$.
(While we have $\E[\LT{\psi_\Omega(\thb)}{\y}] \lesssim \sqrt{\LSOmega{\thb}{\y}}$, this does not enable us to exploit the surrogate gap; see the proof of \cref{prop:surrogate_gap}.)
By adjusting the bias toward $\y^*$, we can avoid this issue when $\yhat(\thb)$ is very close to some $\y^*$ while moderating the penalty of mistake, $\y^* \neq \y$.\looseness=-1

\subsection{Implementation of Randomized Decoding}\label{subsec:implementation}
\Cref{alg:decoding} involves computing $\yhat(\thb)$ and $\ynep$, and sampling $\yrnd$. 
We can obtain $\yhat(\thb)$ by solving the convex optimization in \cref{step:regularized_max}, and efficient methods for this problem are extensively discussed in \citet[Section~8.3]{Blondel2020-tu}; also, we can use a fast Frank--Wolfe-type algorithm of \citet{Garber2021-tg} to obtain $\yhat(\thb)$, as described shortly. 
Below, we focus on how to obtain $\ynep$ and $\yrnd$ first.

In \cref{step:nearest_extreme_point}, we need to find a nearest extreme point $\ynep \in \Ycal$ to $\yhat(\thb)$ with respect to the distance induced by $\norm{\cdot}$.
In the case of multiclass classification, we can easily do this by choosing $i \in [d]$ corresponding to the largest entry in $\yhat(\thb)$ and setting $\ynep = \e_i$.
More generally, \cref{prop:nep} ensures that if $\Ycal \subseteq \set{0, 1}^d$, which is a common scenario where $\conv(\Ycal)$ constitutes a 0-1 polytope, and $\norm{\cdot}$ is an $\ell_p$-norm, we can find such a nearest extreme point by solving a linear optimization problem.\looseness=-1 
\begin{proposition}\label[proposition]{prop:nep}
  Let $\Ycal \subseteq \set*{0, 1}^d$ and $p \in [1,+\infty)$. 
  For any $\y' \in \conv(\Ycal)$, we can find a nearest extreme point $\ynep \in \Ycal$ to $\y'$ with respect to $\norm{\cdot}_p$, i.e., $\ynep \in \argmin\Set*{\norm{\y - \y'}_p}{\y \in \Ycal}$, via a single call to a linear optimization oracle that, for any $\bm{c} \in \R^d$, returns a point in $\argmin\Set*{\inpr{\bm{c}, \y}}{\y \in \Ycal}$. 
\end{proposition}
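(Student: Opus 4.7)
The plan is to reduce minimizing $\norm{\y-\y'}_p$ over $\y\in\Ycal$ to a linear optimization over $\Ycal$ by exploiting the 0-1 structure of $\Ycal$. First I would note that since $\Ycal\subseteq\{0,1\}^d$, we have $\conv(\Ycal)\subseteq[0,1]^d$, so every $\y'\in\conv(\Ycal)$ satisfies $y'_i\in[0,1]$; in particular $(1-y'_i)^p$ and $(y'_i)^p$ are well-defined non-negative reals. Moreover, since $\norm{\cdot}_p$ is monotone in its $p$-th power on $\Rp$, minimizing $\norm{\y-\y'}_p$ over $\y\in\Ycal$ is equivalent to minimizing $\norm{\y-\y'}_p^p$.

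The key identity is that for $y_i\in\{0,1\}$ and $y'_i\in[0,1]$,
\[
  |y_i-y'_i|^p = y_i(1-y'_i)^p + (1-y_i)(y'_i)^p,
\]
which is verified by checking the two cases $y_i=0$ and $y_i=1$. Summing over $i$ yields
\[
  \norm{\y-\y'}_p^p = \sum_{i=1}^d (y'_i)^p + \sum_{i=1}^d y_i\bigl[(1-y'_i)^p - (y'_i)^p\bigr].
\]
The first term is independent of $\y$, and the second is linear in $\y$. Therefore, setting $c_i \coloneqq (1-y'_i)^p - (y'_i)^p$ for $i\in[d]$, the minimization
\[
  \min\Set*{\norm{\y-\y'}_p}{\y\in\Ycal}
\]
is achieved at the same $\y$ as $\min\Set*{\inpr{\bm{c},\y}}{\y\in\Ycal}$, which is exactly a single linear optimization call to the oracle. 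Returning any $\ynep\in\argmin\Set*{\inpr{\bm{c},\y}}{\y\in\Ycal}$ then gives a nearest extreme point to $\y'$ as required.

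There is no real obstacle here; the only subtle point is ensuring that the exponent $p\in[1,+\infty)$ together with $y'_i\in[0,1]$ makes the coefficients $c_i$ well-defined real numbers, which is immediate. The argument extends verbatim to $p\to\infty$ only with care (since the $\ell_\infty$-distance is not a monotone function of a linear combination of the $|y_i-y'_i|^p$), so restricting to $p\in[1,+\infty)$ as in the statement is natural.
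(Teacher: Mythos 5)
Your proposal is correct and follows essentially the same route as the paper: both rewrite each term $|y_i-y'_i|^p$ as an affine function of $y_i\in\{0,1\}$ (the paper keeps absolute values, writing $|1-y'_i|^p y_i+|y'_i|^p(1-y_i)$, which is the same identity), reducing the problem to one linear optimization call with $c_i=|1-y'_i|^p-|y'_i|^p$. No gaps; the argument matches the paper's proof.
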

\begin{proof}
  We can find a nearest point by minimizing $\norm{\y - \y'}_p^p = \sum_{i=1}^d |y_i - y'_i|^p$ over $\y\in \Ycal$. 
  Since we have $y_i \in \set*{0, 1}$, we can rewrite each term as $|1 - y'_i|^py_i + |y'_i|^p(1 - y_i) = (|1 - y'_i|^p - |y'_i|^p)y_i + |y'_i|^p$.
  Therefore, the problem is equivalent to $\min\Set*{\sum_{i=1}^d (|1 - y'_i|^p - |y'_i|^p)y_i}{\y \in \Ycal}$, which we can solve with the linear optimization oracle.
\end{proof}
\Cref{prop:nep} enables efficient computation of $\ynep$ for various structures of $\conv(\Ycal)$: the 0-1 hypercube (multilabel classification), the Birkhoff polytope (ranking), and a general matroid polytope.
\Citet[Section~1.2]{Garber2021-tg} shows more examples where we can compute nearest extreme points.
The 0-1 polytope case is given there only for the $\ell_2$-norm, and \Cref{prop:nep} extends it to $\ell_p$-norms.\looseness=-1

We turn to how to sample $\yrnd \in \Ycal$ such that $\E[\yrnd \,|\, Z=1] = \yhat(\thb)$ in \Cref{step:random_select}. 
This is also easy in multiclass classification: we sample $i \in [d]$ with probability proportional to the $i$th entry of $\yhat(\thb)$ and set $\yrnd = \e_i$. 
In general, if we have a convex combination of extreme points of $\conv(\Ycal)$ that equals $\yhat(\thb)$, we can sample $\yrnd$ by choosing an extreme point with a probability of the corresponding combination coefficient. 
Such a convex combination can be obtained by applying a Frank--Wolfe-type algorithm to $\min\Set*{\norm{\y - \yhat(\thb)}_2^2}{\y \in \conv(\Ycal)}$, as considered in \citet{Combettes2023-cv} (or, we may directly compute $\yhat(\thb)$ with a Frank--Wolfe-type algorithm). 
In particular, given that we can efficiently compute nearest extreme points as discussed above, we can use the linearly convergent Frank--Wolfe algorithm of \citet[Theorem~5]{Garber2021-tg}, which returns an $\varepsilon$-approximation of $\yhat(\thb)$ as a convex combination of only $O(M\ln(d/\varepsilon))$ extreme points (typically, $M = O(d^2)$).

\section{Surrogate Regret Bounds for Online Structured Prediction}\label{sec:regret_bounds}

\begin{algorithm}[tb]
  \caption{Learning procedure for online structured prediction}
  \label[algorithm]{alg:sgaptron}
  \begin{algorithmic}[1]
    \REQUIRE{$\mathsf{Alg}$ with domain $\Wcal$ and decoding function $\psi_\Omega$ (\cref{alg:decoding})}
    \STATE Set $\W_1$ to the all-zero matrix
    \FOR{$t=1,\dots,T$}
      \STATE Receive $\x_t$ and compute $\thb_t = \W_t\x_t$
      \STATE Play $\yprd_t = \psi_\Omega(\thb_t)$ and observe $\y_t$
      \STATE Send $\S_t$ (or ($\x_t, \y_t$)) to $\mathsf{Alg}$ and get $\W_{t+1}$ in return
    \ENDFOR
  \end{algorithmic}
\end{algorithm}

We analyze the surrogate regret for online structured prediction.
To simplify the notation, let $L_t(\y) = \LT{\y}{\y_t}$ and $\S_t(\W) = \LSOmega{\W\x_t}{\y_t}$ be the target and surrogate losses in the $t$th round, respectively.
We also use $\E_t$ to represent the expectation taken only with respect to the randomness of the randomized decoding to produce $\yprd_t$ in the $t$th round, i.e., $\E_t$ is conditioned on $\yprd_1,\dots,\yprd_{t-1}$. 
The learning procedure is summarized in \Cref{alg:sgaptron}.
In each $t$th round, the learner receives $\x_t$, computes $\thb_t = \W_t\x_t$, plays $\yprd_t$ obtained by decoding $\thb_t$, and observe $\y_t$. 
The learner updates $\W_t$ using an online convex optimization algorithm, denoted by $\mathsf{Alg}$, with domain $\Wcal$ and loss function $\S_t$.
Below, we assume that $\Wcal$ contains the all-zero matrix and set $\W_1$ to it for convenience.

As with \Citet{Van_der_Hoeven2020-ug}, we here use the online gradient descent (OGD) with a constant learning rate $\eta>0$ as $\mathsf{Alg}$; 
we discuss using other online learning methods in \cref{asec:other-oco-algorithms}.
This OGD achieves the following regret bound for any $\U \in \Wcal$ \citep[Theorem~2.13]{orabona2023modern}:
\begin{equation}\label{eq:ogd-constant-regret}
  \sum_{t=1}^T \prn*{
  \S_t(\W_t) - \S_t(\U)
  }
  \le 
  \frac{\norm{\U}_\mathrm{F}^2}{2\eta} + \frac{\eta}{2}\sum_{t=1}^T \norm{\nabla\S_t(\W_t)}_\mathrm{F}^2.
\end{equation}
The next proposition highlights how to obtain a template of finite surrogate regret bounds by learning $\W_t$ with this OGD and exploiting the surrogate gap.
\begin{proposition}\label[proposition]{prop:surrogate_gap}
  Assume that there exist constants $a \in (0, 1)$ and $b > 0$ satisfying the following conditions for $t = 1,\dots,T$:
  (i) $\E_t[L_t(\yprd_t)] \le (1 - a) \S_t(\W_t)$ and (ii) $\norm{\nabla\S_t(\W_t)}_\mathrm{F}^2 \le b\S_t(\W_t)$.\footnote{Surrogate losses satisfying this inequality are said to be \emph{regular} in \Citet{Van_der_Hoeven2021-wi}.}
  Let $\mathsf{Alg}$ be OGD with learning rate $\eta = \frac{2}{b}\min\set*{\frac12, a}$.
  Then, it holds that
  \begin{equation}
    \sum_{t=1}^T \E_t[L_t(\yprd_t)] \le \sum_{t=1}^T \S_t(\U) + 
    \underbrace{
      \frac{(1 - a)b\norm{\U}_\mathrm{F}^2}{4\prn*{1 - \min\set*{\frac12, a}}\min\set*{\frac12, a}}
    }_{\mathrm{(\hypertarget{markA}{A})}}.
  \end{equation}
\end{proposition}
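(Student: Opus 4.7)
The plan is to feed the OGD bound \eqref{eq:ogd-constant-regret} into the two self-bounding conditions so that $\sum_t \S_t(\W_t)$ appears on both sides of a contraction-style inequality, trade the surrogate loss for the target loss via condition (i), and tune $\eta$ so that the coefficient multiplying $\sum_t \S_t(\U)$ collapses to at most one.

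Concretely, I would start from \eqref{eq:ogd-constant-regret} with comparator $\U$ and invoke condition (ii) term by term to bound each $\norm{\nabla\S_t(\W_t)}_\mathrm{F}^2$ by $b\,\S_t(\W_t)$, giving
\[
  \prn*{1 - \frac{\eta b}{2}}\sum_{t=1}^T \S_t(\W_t) \le \sum_{t=1}^T \S_t(\U) + \frac{\norm{\U}_\mathrm{F}^2}{2\eta}
\]
whenever $\eta b/2 < 1$. Summing condition (i) over $t$ yields $\sum_{t=1}^T \E_t[L_t(\yprd_t)] \le (1-a)\sum_{t=1}^T \S_t(\W_t)$, so chaining the two inequalities produces
\[
  \sum_{t=1}^T \E_t[L_t(\yprd_t)] \le \frac{1 - a}{1 - \eta b/2}\prn*{\sum_{t=1}^T \S_t(\U) + \frac{\norm{\U}_\mathrm{F}^2}{2\eta}}.
\]

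The remaining task is to tune $\eta$. Writing $c = \eta b/2 \in (0,1)$, I would pick $c$ so that $(1-a)/(1-c) \le 1$, i.e., $c \le a$, while minimizing the residual constant $\frac{(1-a) b \norm{\U}_\mathrm{F}^2}{4 c (1-c)}$. Since $c \mapsto c(1-c)$ is concave with unconstrained maximizer $c = \frac{1}{2}$, the optimum on $(0, a]$ is $c^\star = \min\set*{\frac{1}{2}, a}$, matching $\eta = \frac{2}{b}\min\set*{\frac{1}{2}, a}$ in the statement. A one-line case split on $a < \frac{1}{2}$ versus $a \ge \frac{1}{2}$ then verifies $\frac{1-a}{1 - c^\star} \le 1$ (with equality in the first case and value $2(1-a) \le 1$ in the second), and substituting $c^\star$ into the residual constant reproduces exactly the expression (\hyperlink{markA}{A}).

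I do not foresee a genuine obstacle, since once the self-bounding inequality (ii) is used to absorb the gradient-norm term into $\S_t(\W_t)$ everything reduces to bookkeeping. The one subtle design point worth flagging is the cap at $\frac{1}{2}$ in the stepsize: a naive choice $\eta = 2a/b$ also makes the coefficient on $\sum_t \S_t(\U)$ exactly one, but it wastes the contraction budget when $a$ is close to $1$ and inflates the constant; the cap reflects that OGD's contraction factor $c(1-c)$ saturates at $c = \frac{1}{2}$, independent of how generous the surrogate gap $a$ is.
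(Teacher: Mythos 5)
Your proof is correct and follows essentially the same route as the paper: substitute (ii) into the OGD bound, rearrange into the $L^\star$-type inequality so that $\sum_t \S_t(\W_t)$ carries the factor $1 - \eta b/2$, apply (i), and tune $\eta$ so that $c = \eta b/2 = \min\set{\tfrac12, a} \le a$, which is exactly the paper's exploit-the-surrogate-gap step. The only cosmetic difference is that you verify the multiplicative coefficient $\frac{1-a}{1-c} \le 1$ on $\sum_t \S_t(\U)$ directly (using $\S_t(\U) \ge 0$), whereas the paper drops the equivalent nonpositive term $-\bigl(a - (1-a)(1-c)^{-1}c\bigr)\sum_t \S_t(\U)$ from the surrogate-regret decomposition; the tuning of $\eta$ and the resulting constant $(\mathrm{A})$ coincide.
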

\begin{proof}
  By substituting (ii) into \eqref{eq:ogd-constant-regret}, we have
  $\sum_{t=1}^T \prn*{
    \S_t(\W_t) - \S_t(\U)
    }
  \le 
  \frac{\norm{\U}_\mathrm{F}^2}{2\eta} + \frac{\eta b}{2}\sum_{t=1}^T \S_t(\W_t)$. 
  Noting $\frac{\eta b}{2} < 1$, we rearrange the terms to obtain a so-called \textit{$L^\star$ bound} \citep[Section~4.2.3]{orabona2023modern}, whose right-hand side depends on $\sum_{t=1}^T \S_t(\U)$ as follows:
  \begin{equation}
    \sum_{t=1}^T \prn*{
    \S_t(\W_t) - \S_t(\U)
    }
    \le 
    \prn*{1 - \frac{\eta b}{2}}^{-1}
    \prn*{
    \frac{\norm{\U}_\mathrm{F}^2}{2\eta} + \frac{\eta b}{2}\sum_{t=1}^T \S_t(\U)
    }.
  \end{equation}
  Combining this with (i) implies that the surrogate regret, $\sum_{t=1}^T\E_t\brc*{L_t(\yprd_t)} - \sum_{t=1}^T \S_t(\U)$, is at most
  \begin{equation}
    \begin{aligned}
      &(1-a)\sum_{t=1}^T \prn*{
        \S_t(\W_t) - \S_t(\U)
      }
      - a\sum_{t=1}^T \S_t(\U)
      \\
      \le{}&      
      (1-a)\prn*{1 - \frac{\eta b}{2}}^{-1}\frac{\norm{\U}_\mathrm{F}^2}{2\eta}
      -
      \prn*{
        a
        -
        (1-a)\prn*{1 - \frac{\eta b}{2}}^{-1}
        \frac{\eta b}{2}
      }      
      \sum_{t=1}^T \S_t(\U).
      \end{aligned}
  \end{equation}
  Since $\smash{\frac{\eta b}{2} \le a}$ implies $a - (1-a)\prn*{1 - \frac{\eta b}{2}}^{-1}
  \frac{\eta b}{2} \ge 0$, ignoring the second term does not decrease the right-hand side. 
  Substituting $\eta = \frac{2}{b}\min\set*{\frac12, a}$ into the first term yields the desired bound.  
\end{proof}
The last part in the proof highlights the fundamental idea for achieving a finite surrogate regret bound: offsetting the increase in the regret of OGD, which originates from $\frac{\eta}{2}\sum_{t=1}^T \norm{\nabla\S_t(\W_t)}_\mathrm{F}^2$ in~\eqref{eq:ogd-constant-regret}, with the cumulative surrogate gap, $a \sum_{t=1}^T \S_t(\U)$, by setting $\eta$ to a sufficiently small value.
This is based on the original idea of exploiting the surrogate gap by \Citet{Van_der_Hoeven2020-ug}. 
The crux of this fundamental idea lies in conditions~(i) and~(ii) in \cref{prop:surrogate_gap}, which we will verify by using \cref{lem:expected_target_bound} and the properties of the Fenchel--Young loss in \cref{prop:fyloss_properties}, respectively.
Consequently, we obtain the following finite surrogate regret bound in expectation for online structured prediction with Fenchel--Young losses, which is the main result of this paper.
\begin{theorem}\label{thm:expected_regret_general}
  Let  
  $\psi_\Omega$ be the randomized decoding given in \cref{alg:decoding} and $C > 0$ a constant with $\max_{t\in[T]}\norm{\x_t}_2 \le C$. 
  If $\lambda > \frac{4\gamma}{\nu}$ holds and $\mathsf{Alg}$ is OGD with learning rate $\eta = \frac{\lambda}{C^2\kappa^2}\min\set*{\frac12, 1-\frac{4\gamma}{\lambda\nu}}$, for any $\U \in \Wcal$, it holds that 
  \begin{align}
  \sum_{t=1}^T \E_t[L_t(\yprd_t)] 
  \le 
  \sum_{t=1}^T \S_t(\U) + 
  \frac{2\gamma C^2\kappa^2 \norm{\U}_\mathrm{F}^2}{
    \lambda^2\nu
    \prn*{
      1 - \min\set*{\frac12, 1-\frac{4\gamma}{\lambda\nu}}
    }
    \min\set*{\frac12, 1-\frac{4\gamma}{\lambda\nu}}
  }.
  \end{align}
\end{theorem}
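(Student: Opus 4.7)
The plan is to reduce Theorem~\ref{thm:expected_regret_general} to Proposition~\ref{prop:surrogate_gap} by verifying its two hypotheses (i) and (ii) for the specific constants
\[
a = 1 - \frac{4\gamma}{\lambda\nu}, \qquad b = \frac{2C^2\kappa^2}{\lambda},
\]
and then substituting them into the template bound of that proposition. The assumption $\lambda > \frac{4\gamma}{\nu}$ is exactly what guarantees $a \in (0,1)$, so Proposition~\ref{prop:surrogate_gap} becomes applicable.

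Condition (i) is immediate from Lemma~\ref{lem:expected_target_bound} applied to $(\thb_t, \y_t)$: since $\yprd_t = \psi_\Omega(\thb_t)$ and $\S_t(\W_t) = \LSOmega{\thb_t}{\y_t}$, we get
\[
\E_t[L_t(\yprd_t)] \le \frac{4\gamma}{\lambda\nu}\S_t(\W_t) = (1-a)\S_t(\W_t).
\]
For condition (ii), I would compute the matrix gradient by chain rule. Proposition~\ref{prop:fyloss_properties} gives $\nabla_\thb \LSOmega{\thb}{\y} = \yhat(\thb) - \y$, and since $\S_t(\W) = \LSOmega{\W\x_t}{\y_t}$, the outer-product form yields $\nabla_\W \S_t(\W) = (\yhat(\W\x_t) - \y_t)\x_t^\top$. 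Taking Frobenius norm,
\[
\norm{\nabla\S_t(\W_t)}_\mathrm{F}^2 = \norm{\yhat(\thb_t) - \y_t}_2^2 \cdot \norm{\x_t}_2^2 \le C^2\kappa^2 \norm{\yhat(\thb_t) - \y_t}^2,
\]
where the norm conversion uses $\norm{\cdot}_2 \le \kappa\norm{\cdot}$ from the preliminaries. Then the strong-convexity inequality of Proposition~\ref{prop:fyloss_properties}, rearranged as $\norm{\y - \yhat(\thb)}^2 \le \frac{2}{\lambda}\LSOmega{\thb}{\y}$, gives $\norm{\nabla\S_t(\W_t)}_\mathrm{F}^2 \le \frac{2C^2\kappa^2}{\lambda}\S_t(\W_t) = b\,\S_t(\W_t)$.

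With (i) and (ii) in hand, I invoke Proposition~\ref{prop:surrogate_gap} with these $a$ and $b$. The prescribed OGD learning rate becomes
\[
\eta = \frac{2}{b}\min\set*{\tfrac12, a} = \frac{\lambda}{C^2\kappa^2}\min\set*{\tfrac12, 1-\tfrac{4\gamma}{\lambda\nu}},
\]
matching the statement. The constant $\mathrm{(A)}$ in Proposition~\ref{prop:surrogate_gap} simplifies via $(1-a)b = \frac{4\gamma}{\lambda\nu}\cdot\frac{2C^2\kappa^2}{\lambda} = \frac{8\gamma C^2\kappa^2}{\lambda^2\nu}$, so dividing by $4$ gives the numerator $\frac{2\gamma C^2\kappa^2\norm{\U}_\mathrm{F}^2}{\lambda^2\nu}$, and the denominator is exactly $\bigl(1 - \min\{\tfrac12, 1-\tfrac{4\gamma}{\lambda\nu}\}\bigr)\min\{\tfrac12, 1-\tfrac{4\gamma}{\lambda\nu}\}$, yielding precisely the stated bound.

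There is essentially no technical obstacle beyond bookkeeping, since the hard content has already been absorbed into Lemma~\ref{lem:expected_target_bound} and Proposition~\ref{prop:surrogate_gap}. The only mildly delicate point is making sure the gradient computation is done with respect to the matrix variable $\W$ (not the score $\thb$) so that the factor $\norm{\x_t}_2^2 \le C^2$ appears correctly, and correctly introducing the $\kappa^2$ factor when passing from the $\ell_2$-norm produced by the Frobenius computation to the (possibly non-Euclidean) norm $\norm{\cdot}$ with respect to which $\Psi$ is strongly convex.
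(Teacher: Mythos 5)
Your proposal is correct and follows essentially the same route as the paper: verify conditions (i) and (ii) of \cref{prop:surrogate_gap} with $a = 1 - \frac{4\gamma}{\lambda\nu}$ (via \cref{lem:expected_target_bound}) and $b = \frac{2C^2\kappa^2}{\lambda}$ (via the gradient and strong-convexity facts in \cref{prop:fyloss_properties}, using $\norm{\y\x^\top}_\mathrm{F}^2 = \norm{\y}_2^2\norm{\x}_2^2$ and $\norm{\cdot}_2 \le \kappa\norm{\cdot}$), then substitute into the template bound. The constant bookkeeping, including $(1-a)b/4 = \frac{2\gamma C^2\kappa^2}{\lambda^2\nu}$ and the matching learning rate, is exactly as in the paper's proof.
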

\begin{proof}
  Since \cref{lem:expected_target_bound} implies $\E_t[L_t(\yprd_t)] \le \frac{4\gamma}{\lambda\nu}\S_t(\W_t)$, condition (i) in \cref{prop:surrogate_gap} holds with $a = 1 - \frac{4\gamma}{\lambda\nu} \in (0, 1)$. 
  Furthermore, \cref{prop:fyloss_properties} implies $\nabla\S_t(\W_t) = (\yhat(\thb_t) - \y_t) \x_t^\top$ and $\norm{\yhat(\thb_t) - \y_t}^2 \le \frac{2}{\lambda}\S_t(\W_t)$; combining them with $\norm{\y\x^\top}_\mathrm{F}^2 = \tr(\x^\top\x\y^\top\y) = \norm{\x}_2^2\norm{\y}_2^2$, $\norm{\x_t}_2 \le C$, and $\norm{\cdot}_2 \le \kappa\norm{\cdot}$ yields\looseness=-1
  \[
    \norm{\nabla\S_t(\W_t)}_\mathrm{F}^2 = \norm{\yhat(\thb_t) - \y_t}_2^2\norm{\x_t}_2^2 \le \CC\kappa^2\norm{\yhat(\thb_t) - \y_t}^2 \le \frac{2\CC\kappa^2}{\lambda}\S_t(\W_t).
  \]
  Thus, condition~(ii) with $b = \frac{2\CC\kappa^2}{\lambda}$ holds. 
  Therefore, \cref{prop:surrogate_gap} provides the desired bound.
\end{proof}
It is also worth mentioning that the above OGD is \textit{parameter-free} in the sense that the learning rate $\eta$ is tuned without any knowledge of $\U$ or the size of domain $\Wcal$ (cf.\ \citet{Mcmahan2012-tf}, \citet{Orabona2013-ng}, and \citet{Cutkosky2018-bf}).\footnote{
  The line of work on parameter-free learning achieves regret bounds that depend almost linearly on the comparator's norm via non-trivial techniques. 
  Compared to this, achieving the surrogate regret bound depending on $\norm{\U}_\mathrm{F}^2$ is easier.
  }
However, the constant learning rate may result in poor empirical performance, particularly when $a = 1 - \frac{4\gamma}{\lambda\nu}$ is very small.
\Cref{asec:parameter_free} shows that we can alternatively use a parameter-free algorithm of \citet{Cutkosky2018-bf} to achieve a finite surrogate regret bound.

\paragraph{High-probability bound.}
Similar to \Citet{Van_der_Hoeven2021-wi}, we can obtain a finite surrogate regret bound that holds with high probability.
Define random variables $Z_t \coloneqq L_t(\yprd_t) - \E_t[L_t(\yprd_t)]$ for $t = 1,\dots,T$, where the randomness comes from the randomized decoding. 
A crucial step is to ensure that the cumulative deviation $\sum_{t=1}^T Z_t$ grows only at the rate of $\sqrt{\sum_{t=1}^T \S_t(\U)}$, in which \cref{lem:expected_target_bound} again turns out to be helpful. 
Once it is shown, we can obtain a high-probability bound by offsetting the regret of OGD, plus $\sum_{t=1}^T Z_t$, with $a\sum_{t=1}^T \S_t(\U)$.
See \cref{asec:high_probability_regret_general} for the proof.
\begin{restatable}{theorem}{highprobability}\label{thm:high_probability_regret_general}
  Assume the same condition as \cref{thm:expected_regret_general} except for the learning rate of OGD, which we here set as $\eta = \frac{a}{b}$.
  Let $D$ be the diameter of $\conv(\Ycal)$ in terms of $\norm{\cdot}$ and $\delta \in (0, 1)$. 
  Then, with probability at least $1 - \delta$, for any $\U\in\Wcal$, it holds that 
  \begin{equation}
    \sum_{t=1}^T L_t(\yprd_t) \le 
    \sum_{t=1}^T \S_t(\U) +  
    \prn*{1 - \frac{4\gamma}{\lambda\nu}}^{-1}
    \prn*{
      \frac{8\gamma C^2\kappa^2\norm{\U}_\mathrm{F}^2}{\lambda^2\nu} + \gamma D\ln\frac{1}{\delta}
    }.
  \end{equation}
\end{restatable}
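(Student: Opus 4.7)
The plan is to decompose the cumulative target loss into its conditional mean and a martingale deviation, $\sum_{t=1}^T L_t(\yprd_t) = \sum_{t=1}^T \E_t[L_t(\yprd_t)] + \sum_{t=1}^T Z_t$, and bound the two pieces separately. The conditional-mean piece is handled exactly as in \cref{thm:expected_regret_general}: \cref{lem:expected_target_bound} gives condition (i) of \cref{prop:surrogate_gap} with $a = 1 - \frac{4\gamma}{\lambda\nu}$, and \cref{prop:fyloss_properties} gives condition (ii) with $b = \frac{2C^2\kappa^2}{\lambda}$, so the $L^\star$ argument inside the proof of \cref{prop:surrogate_gap} (now with $\eta = a/b$) relates $\sum_{t} \S_t(\W_t)$ to $\sum_{t} \S_t(\U)$ up to an additive $b\norm{\U}_\mathrm{F}^2/(2a)$ term. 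The novel ingredient is a high-probability bound on $\sum_{t} Z_t$, which is where \cref{lem:expected_target_bound} is reused.

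The key observation for the martingale is that $|Z_t| \le \gamma D$ almost surely, because $\yprd_t \in \Ycal \subseteq \conv(\Ycal)$ and \cref{assump:nu}(III) yields $L_t(\yprd_t) \le \gamma\norm{\yprd_t - \y_t} \le \gamma D$, and likewise for $\E_t[L_t(\yprd_t)]$. For the conditional second moment, I would use the standard trick $\E_t[Z_t^2] \le \E_t[L_t(\yprd_t)^2] \le \gamma D\cdot\E_t[L_t(\yprd_t)]$, and then invoke \cref{lem:expected_target_bound} to get $\E_t[Z_t^2] \le \frac{4\gamma^2 D}{\lambda\nu}\S_t(\W_t)$. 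Thus the conditional variance is controlled by the \emph{same} quantity $\S_t(\W_t)$ that the OGD $L^\star$ bound controls—this is precisely what allows the deviation to grow only at rate $\sqrt{\sum_{t} \S_t(\U)}$.

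Now I would apply Freedman's inequality (or a Bernstein-type inequality for martingale differences) to obtain, with probability at least $1-\delta$, a bound of the form $\sum_{t} Z_t \le \eta'' \sum_{t} \E_t[Z_t^2] + \ln(1/\delta)/\eta''$ for a suitable free parameter $\eta'' \in (0, c/(\gamma D)]$. Substituting the variance bound gives $\sum_{t} Z_t \le \frac{4\gamma^2 D \eta''}{\lambda\nu}\sum_{t}\S_t(\W_t) + \ln(1/\delta)/\eta''$, and choosing $\eta''$ of order $1/(\gamma D)$ converts this into an $O(\sum_{t}\S_t(\W_t)) + O(\gamma D\ln(1/\delta))$ contribution whose leading constant can be made small. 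Adding this to the conditional-mean bound and invoking the $L^\star$ relation between $\sum_t\S_t(\W_t)$ and $\sum_t \S_t(\U) + b\norm{\U}_\mathrm{F}^2/(2a)$, the cumulative surrogate gap $a \sum_{t} \S_t(\U)$ is spent to absorb both the $L^\star$ cross-term and the Freedman variance term, leaving the finite additive constant in the stated form.

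The main obstacle is bookkeeping: the free parameters $\eta$ (for OGD) and $\eta''$ (for Freedman) must be tuned jointly so that (a) the linear-in-$\sum_t\S_t(\W_t)$ contributions from OGD and from Freedman are together strictly dominated by the surrogate gap $a$, (b) the $\gamma D\ln(1/\delta)$ term appears with the displayed constant, and (c) the $\norm{\U}_\mathrm{F}^2$ coefficient simplifies to $\frac{8\gamma C^2\kappa^2}{\lambda^2\nu}$ after substituting $a$ and $b$. I would also verify that no residual $\sqrt{\sum_t \S_t(\U)\ln(1/\delta)}$ term survives—either by using the linear-in-variance form of Freedman from the start, or by a final AM-GM step to fold any such term into $a\sum_t\S_t(\U)$ plus a $\ln(1/\delta)$ correction.
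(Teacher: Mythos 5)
Your proposal is correct and follows essentially the same route as the paper: the same decomposition into conditional mean plus martingale deviation, the same use of \cref{lem:expected_target_bound} to bound both the conditional mean and the conditional variance $\E_t[Z_t^2] \le (1-\tfrac{4\gamma}{\lambda\nu})\gamma D\,\S_t(\W_t)$, the same $L^\star$ bound to relate $\sum_t \S_t(\W_t)$ to $\sum_t \S_t(\U)$, and the same absorption of all deviation terms by the surrogate gap $a\sum_t\S_t(\U)$. The only cosmetic difference is that the paper applies Bernstein's inequality in its square-root form and then offsets the resulting $\sqrt{\sum_t\S_t(\U)}$ term via $\sqrt{c_1x}-c_2x\le c_1/(4c_2)$, whereas you invoke the linear-in-variance (Freedman) form directly—these are interchangeable and lead to the same bound after the constant bookkeeping you flag.
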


\begin{remark}[The case of adaptive adversary]
\Cref{thm:expected_regret_general} remains true even against an adaptive adversary since \cref{lem:expected_target_bound} ensures that condition~(i) in \cref{prop:surrogate_gap} holds for any adaptive sequence $(\x_1, \y_1),\dots,(\x_T, \y_T)$ and the regret bound of OGD in \eqref{eq:ogd-constant-regret} applies to the adaptive case as well.
The high-probability bound in \cref{thm:high_probability_regret_general} also remains valid in the adaptive case since an additional concentration argument used in the proof is irrelevant to the adversary's type. 
\end{remark}

\begin{remark}[Asymptotic behavior when $a \to 1$]
The surrogate regret bound (\hyperlink{markA}{A}) in \cref{prop:surrogate_gap} simplifies to $\frac{b\norm{\U}_\mathrm{F}^2}{4a}$ if $a \le 1/2$ and to $(1 - a)b\norm{\U}_\mathrm{F}^2$ if $a > 1/2$, where the latter expression is smaller when $a > 1/2$.
Notably, the bound vanishes when $a \to 1$. 
This property has not been observed in previous studies (\Citealp{Van_der_Hoeven2020-ug}; \Citealp{Van_der_Hoeven2021-wi}), and we have obtained this by taking advantage of the $L^\star$ bound. 
Note that (i) in \cref{prop:surrogate_gap} implies that $\E_t[L_t(\yprd_t)]/\S_t(\W_t)$ goes to zero when $a \to 1$.
Therefore, this asymptotic behavior reflects a rationale that the surrogate regret bound should vanish when the target loss scales down relative to the surrogate loss.
As in the proof of \cref{thm:expected_regret_general}, $1 - a$ and $b$ are proportional to $1/\lambda$, and hence the surrogate regret bound in \cref{thm:expected_regret_general} vanishes at the rate of $1/\lambda^2$. 
The high-probability surrogate regret bound in \cref{thm:high_probability_regret_general} also decreases at the rate of $1/\lambda^2$, while the $\gamma D\ln\frac{1}{\delta}$ term persists as it comes from the randomness of the decoding.
Here, we can increase or decrease $\lambda$ by scaling up or down the regularization function $\Omega$ generating the Fenchel--Young loss (see also \cref{subsec:application}), although increasing $\lambda$ generally leads to larger $\S_t(\U)$.\looseness=-1
\end{remark}

\subsection{Application to Specific Problems}\label{subsec:application}
\cref{thm:expected_regret_general,thm:high_probability_regret_general} provide finite surrogate regret bounds for various online structured prediction problems that satisfy \cref{assump:nu} if $\lambda > \frac{4\gamma}{\nu}$ (or $a = 1 - \frac{4\gamma}{\lambda\nu} > 0$) holds, which requires $\Omega$ generating the Fenchel--Young loss to be sufficiently strongly convex. 
Notably, this requirement is automatically satisfied in the case of multiclass classification with the logistic loss (see \cref{section:multiclass}). 
In the multilabel classification example in \cref{subsec:example}, we have $\nu = 1$, $\gamma = \frac{1}{\sqrt{d}}$, and $\lambda = 1$; therefore, $\lambda > \frac{4\gamma}{\nu}$ holds if $d > 16$.\footnote{Note that the target loss is scaled to $[0, 1]$. If not scaled, we need to scale up $\Psi$ to satisfy $\lambda > 4\sqrt{d}$.}
In general, we can take advantage of the Fenchel--Young loss framework to satisfy $\lambda > \frac{4\gamma}{\nu}$: 
since we may use any function $\Omega = \Psi + I_{\conv(\Ycal)}$ to generate a Fenchel--Young loss, we can scale up $\Psi$ to satisfy $\lambda > \frac{4\gamma}{\nu}$ if necessary. 
In the ranking example in \cref{subsec:example}, we have $\nu = 4$, $\gamma = \frac{1}{2n}$, and $\lambda = \frac{1}{n\mu}$, where $\mu > 0$ controls the scale of $\Psi = -\frac{1}{\mu}\Hs$. 
Thus, $\lambda > \frac{4\gamma}{\nu}$ holds if $\mu < 2$. 
Note that the dependence of $\lambda$ on $\gamma$ is inevitable because the surrogate loss encodes no information about the target loss per se. 
Nonetheless, we can scale $\lambda$ to satisfy $\lambda > \frac{4\gamma}{\nu}$ as $\nu$ is often lower bounded: $\nu \ge 1$ holds if $\Ycal \subseteq \Z^d$ and $\norm{\cdot}$ is an $\ell_p$-norm, and $\nu \ge 2^{1/p}$ if $\y^\top\ones$ is constant.

\subsection{Online-to-Batch Conversion}\label{subsec:onlinetobatch}
We discuss converting surrogate regret bounds to guarantees for offline structured prediction.
In general, surrogate regret bounds may not admit online-to-batch conversion because we cannot apply Jensen's inequality to non-convex target loss. 
In our case, \cref{lem:expected_target_bound}, which bounds target loss by convex surrogate loss, enables us to sidestep this issue, leading to the following result.
\begin{restatable}{theorem}{onlinetobatch}\label{thm:online-to-batch}
  Assume the same condition as \cref{thm:expected_regret_general}. 
  If $(\x_1, \y_1),\dots,(\x_T, \y_T)$ are drawn i.i.d.\ from an underlying joint distribution on $\Xcal$ and $\Ycal$, for any $\U\in\Wcal$, it holds that 
  \begin{align}
    \E[\LT{\psi_\Omega(\overline{\W}\x)}{\y}] 
    \le 
    \E[\LSOmega{\U\x}{\y}] + 
    \frac{1}{T}\cdot
    \frac{2\gamma C^2\kappa^2 \norm{\U}_\mathrm{F}^2}{
      \lambda^2\nu
      \prn*{
        1 - \min\set*{\frac12, 1-\frac{4\gamma}{\lambda\nu}}
      }
      \min\set*{\frac12, 1-\frac{4\gamma}{\lambda\nu}}
    },
  \end{align}  
  where $\overline{\W} = \frac{1}{T}\sum_{t=1}^T\W_t$ is the average of outputs of $\mathsf{Alg}$.
\end{restatable}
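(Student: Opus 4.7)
The plan is a standard online-to-batch conversion carried out through three ingredients:
(i) Lemma~\ref{lem:expected_target_bound} to bound the (non-convex) target loss by the convex Fenchel--Young surrogate;
(ii) Jensen's inequality using the convexity of $\LSOmega{\cdot}{\y}$; and
(iii) a cumulative-surrogate-loss bound that is implicit in the proof of Theorem~\ref{thm:expected_regret_general}.
As emphasized in the paragraph preceding the theorem, step~(i) is precisely what lets us apply Jensen's inequality despite the target loss being non-convex.

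Concretely, let $(\x, \y)$ be a test point drawn from the joint distribution, independent of the training data and hence of $\overline{\W}$. First, Lemma~\ref{lem:expected_target_bound} applied pointwise and averaged over $(\x, \y)$ yields $\E[\LT{\psi_\Omega(\overline{\W}\x)}{\y}] \le (1-a)\,\E[\LSOmega{\overline{\W}\x}{\y}]$ with $1-a = 4\gamma/(\lambda\nu)$. Second, since $\LSOmega{\cdot}{\y}$ is convex (hence so is $\W \mapsto \LSOmega{\W\x}{\y}$) and $\overline{\W} = \frac{1}{T}\sum_t \W_t$, Jensen's inequality gives $\E[\LSOmega{\overline{\W}\x}{\y}] \le \frac{1}{T}\sum_t \E[\LSOmega{\W_t\x}{\y}]$. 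Third, because $\W_t$ depends only on $(\x_s, \y_s)_{s<t}$ and is therefore independent of both $(\x, \y)$ and $(\x_t, \y_t)$, which share a common distribution by the i.i.d.\ assumption, $\E[\LSOmega{\W_t\x}{\y}] = \E[\LSOmega{\W_t\x_t}{\y_t}] = \E[\S_t(\W_t)]$.

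The remaining task is to show $(1-a)\sum_t \S_t(\W_t) \le \sum_t \S_t(\U) + C_0$, where $C_0$ is exactly the constant appearing in Theorem~\ref{thm:expected_regret_general}. This can be extracted from the proof of Proposition~\ref{prop:surrogate_gap}: condition~(ii) there substituted into~\eqref{eq:ogd-constant-regret} yields $(1 - \eta b/2)\sum_t \S_t(\W_t) \le \sum_t \S_t(\U) + \norm{\U}_\mathrm{F}^2/(2\eta)$, and plugging in $\eta = 2m/b$ with $m = \min\{1/2, a\}$ and then multiplying both sides by $(1-a)/(1-m) \le 1$ (using $m \le a$) delivers the claim. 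Taking total expectation, using $\E[\S_t(\U)] = \E[\LSOmega{\U\x}{\y}]$ by the i.i.d.\ assumption, dividing by $T$, and chaining with the three previous bounds proves the theorem. I expect the only real obstacle to be bookkeeping: making sure the final constant lands at exactly $C_0$ rather than something slightly larger, which rests on the simple observation that $(1-a)/(1-m) \le 1$.
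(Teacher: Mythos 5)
Your proposal is correct and follows essentially the same route as the paper's proof: bound the target loss by the surrogate via Lemma~\ref{lem:expected_target_bound}, apply Jensen's inequality using convexity of $\W \mapsto \LSOmega{\W\x}{\y}$, identify $\E[\LSOmega{\W_t\x}{\y}] = \E[\S_t(\W_t)]$ and $\E[\LSOmega{\U\x}{\y}] = \E[\S_t(\U)]$ by the i.i.d.\ assumption and the measurability of $\W_t$ with respect to the past, and then invoke the $L^\star$-bound argument from Proposition~\ref{prop:surrogate_gap}. Your slight repackaging of the last step --- bounding $(1-a)\sum_t \S_t(\W_t) \le \sum_t \S_t(\U) + C_0$ directly via $(1-a)/(1-m) \le 1$ and $\S_t(\U) \ge 0$, rather than dropping the nonnegative coefficient of $\sum_t \S_t(\U)$ as the paper does --- is algebraically equivalent and lands on the same constant.
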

\noindent
The above bound differs from common excess risk bounds studied in statistical learning, and hence comparing them directly is difficult. 
However, since (super) fast convergence for structured prediction has been already established by \citet{Cabannes2021-vv}, we do not think the bound in \cref{thm:online-to-batch} itself is of particular significance. 
Nonetheless, \cref{thm:online-to-batch} offers a better understanding of the relationship between online learning guarantees based on the surrogate regret and statistical learning theory with margin conditions.
We present the proof of \cref{thm:online-to-batch} in \cref{asec:online-to-batch-proof} and discuss a connection to excess risk bounds in \cref{asec:comparison_offline}.

\section{Improved Surrogate Regret for Online Multiclass Classification with Logistic Loss}\label{section:multiclass}
We present an $O(\norm{\U}_\mathrm{F}^2)$ surrogate regret bound for online multiclass classification with the logistic loss by using our general result for structured prediction, thereby improving the $O(d\norm{\U}_\mathrm{F}^2)$ bound of \Citet{Van_der_Hoeven2020-ug}. 
In this section, we let $\Ycal = \set{\e_1,\dots,\e_d}$ and $\norm{\cdot} = \norm{\cdot}_1$. 
We have $\kappa = 1$ since $\norm{\cdot}_1 \ge \norm{\cdot}_2$. 
The target loss is the 0-1 loss, $\LT{\y'}{\e_i} = \ind_{\y' \neq \e_i}$. 
Note that we have $\nu = 2$ and $\gamma = 1/2$, as explained in \cref{subsec:example}. 
We use the same logistic loss as that used by \Citet{Van_der_Hoeven2020-ug}. 
Specifically, for any $\thb \in \R^d$ and $\e_i \in \Ycal$, we define the logistic loss as 
\[
  \LSlog{\thb}{\e_i} \coloneqq -\log_2 \sigma_i(\thb),
\]
where $\sigma_i(\thb) \coloneqq \frac{\exp(\theta_i)}{\sum_{j=1}^d \exp(\theta_j)}$ is the softmax function.
This logistic loss is expressed as a Fenchel--Young loss up to a constant factor. 
For any $\y \in \triangle^d$, let $\Omega$ be an entropic regularizer given by
\begin{equation}\label{eq:entropy-omega}
  \Omega(\y) = -\Hs(\y) + I_{\triangle^d}(\y).
\end{equation}
The Fenchel--Young loss generated by this $\Omega$ is $\LSOmega{\thb}{\e_i} = -\ln \sigma_i(\thb)$ (see \citet{Blondel2020-tu}), hence $\LSlog{\thb}{\e_i} = \frac{1}{\ln2} \LSOmega{\thb}{\e_i}$. 
Moreover, $\yhat(\thb)$ equals $(\sigma_1(\thb),\dots,\sigma_d(\thb))^\top$, which we can efficiently compute in the randomized decoding (\cref{alg:decoding}) without iterative optimization methods.\looseness=-1 

By applying \cref{alg:sgaptron} to the above setting, we obtain the following surrogate regret bound.

\begin{theorem}\label{thm:expected_regret_multi}
  Let $\psi_\Omega$ be the randomized decoding given in \cref{alg:decoding} with the entropic regularizer $\Omega$ in \eqref{eq:entropy-omega} and $C > 0$ a constant such that $\max_{t\in[T]}\norm{\x_t}_2 \le C$. 
  If we apply OGD with $\eta = \frac{(1 - \ln 2)\ln 2}{\CC}$ to loss functions $\S_t(\W) = \LSlog{\W\x_t}{\y_t}$ ($t=1,\dots,T$), for any $\U \in \Wcal$, \cref{alg:sgaptron} achieves
  \begin{align}
  \sum_{t=1}^T \E_t[\ind_{\yprd_t \neq \y_t}] 
  \le 
  \sum_{t=1}^T \S_t(\U) + 
  \frac{\CC\norm{\U}_\mathrm{F}^2}{2(1 - \ln2)\ln2}.
  \end{align}
\end{theorem}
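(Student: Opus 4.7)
The plan is to apply \cref{prop:surrogate_gap} directly, with appropriate constants $a$ and $b$ tailored to the entropic-regularizer setting, rather than invoking \cref{thm:expected_regret_general} as a black box. The reason is that the logistic loss $\S_t$ equals $\frac{1}{\ln 2}\LSOmega{\cdot}{\cdot}$ rather than $\LSOmega$ itself, and this scaling factor must be tracked carefully: it turns out to convert the would-be borderline case $a = 0$ (obtained from the raw $\gamma=\tfrac12,\nu=2,\lambda=1$ values) into a valid $a = 1 - \ln 2 \in (0,1)$.

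First I would identify the strong-convexity constant for $\Omega = -\Hs + I_{\triangle^d}$. Pinsker's inequality yields that $-\Hs$ is $1$-strongly convex on $\triangle^d$ with respect to $\norm{\cdot}_1$, so $\lambda = 1$ (and $\kappa = 1$ as noted in the paper). Together with the multiclass constants $\gamma = 1/2$ and $\nu = 2$, \cref{lem:expected_target_bound} gives
\begin{equation}
  \E_t[\ind_{\yprd_t \neq \y_t}] \le \frac{4\gamma}{\lambda\nu}\LSOmega{\thb_t}{\y_t} = \LSOmega{\thb_t}{\y_t} = (\ln 2)\,\S_t(\W_t).
\end{equation}
This verifies condition (i) of \cref{prop:surrogate_gap} with $a = 1 - \ln 2$.

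Next I would verify condition (ii). Since $\S_t = \frac{1}{\ln 2}\LSOmega{\W\x_t}{\y_t}$, \cref{prop:fyloss_properties} gives $\nabla\S_t(\W_t) = \frac{1}{\ln 2}(\yhat(\thb_t) - \y_t)\x_t^\top$. Using $\norm{\y\x^\top}_\mathrm{F}^2 = \norm{\x}_2^2\norm{\y}_2^2$, the inequality $\norm{\cdot}_2 \le \norm{\cdot}_1$, the bound $\norm{\x_t}_2 \le C$, and $\norm{\yhat(\thb_t) - \y_t}_1^2 \le 2\LSOmega{\thb_t}{\y_t}$ from \cref{prop:fyloss_properties} with $\lambda=1$, I obtain
\begin{equation}
  \norm{\nabla\S_t(\W_t)}_\mathrm{F}^2 \le \frac{C^2}{\ln^2 2}\norm{\yhat(\thb_t)-\y_t}_1^2 \le \frac{2C^2}{\ln^2 2}\LSOmega{\thb_t}{\y_t} = \frac{2C^2}{\ln 2}\S_t(\W_t),
\end{equation}
so condition (ii) holds with $b = \frac{2C^2}{\ln 2}$.

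Finally I would plug $a = 1 - \ln 2$ and $b = 2C^2/\ln 2$ into \cref{prop:surrogate_gap}. Since $1 - \ln 2 \approx 0.307 < 1/2$, we have $\min\{1/2,a\} = 1 - \ln 2$, and the prescribed learning rate becomes $\eta = \frac{2}{b}(1-\ln 2) = \frac{(1-\ln 2)\ln 2}{C^2}$, matching the theorem's choice. The surrogate regret bound (\hyperlink{markA}{A}) then simplifies to
\begin{equation}
  \frac{(\ln 2)\cdot(2C^2/\ln 2)\norm{\U}_\mathrm{F}^2}{4(1-(1-\ln 2))(1-\ln 2)} = \frac{C^2 \norm{\U}_\mathrm{F}^2}{2(1-\ln 2)\ln 2},
\end{equation}
which is exactly the claimed bound. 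The only subtle point in the argument — and the reason the standard theorem cannot be applied directly — is that pushing the $\frac{1}{\ln 2}$ factor into the surrogate loss rescales both sides of condition (i) and the gradient bound in condition (ii), turning the otherwise degenerate $a=0$ into a strictly positive constant; keeping this bookkeeping correct is the main (and only) obstacle.
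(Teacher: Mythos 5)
Your proposal is correct and follows essentially the same route as the paper: both verify condition (i) of \cref{prop:surrogate_gap} via \cref{lem:expected_target_bound} with $\lambda=1$ (Pinsker), giving $a = 1-\ln 2$ after accounting for $\LSOmega = (\ln 2)\,\LSlog$, and condition (ii) with $b = 2C^2/\ln 2$, then read off the bound $\frac{b\norm{\U}_\mathrm{F}^2}{4a}$. The only cosmetic difference is that you derive the gradient bound directly from \cref{prop:fyloss_properties} (tracking the $1/\ln 2$ factor), whereas the paper cites Lemma~2 of \citet{Van_der_Hoeven2020-ug} for the same inequality.
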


\begin{proof}
  The proof resembles that of \cref{thm:expected_regret_general}. 
  From Pinsker's inequality, $-\Hs$ is $1$-strongly convex w.r.t.\ $\norm{\cdot}_1$ over $\triangle^d$ (i.e., $\lambda = 1$). 
  Thus, $\E[\LT{\psi_\Omega(\thb)}{\y}] \le \frac{4\gamma}{\lambda\nu} \LSOmega{\thb}{\y} = \frac{4\cdot1/2}{1\cdot2} \ln2 \cdot \LSlog{\thb}{\y}$ holds due to \cref{lem:expected_target_bound}, leaving a surrogate gap with $a = 1 - \ln2 \in (0, 1/2)$ in \cref{prop:surrogate_gap}. 
  We also have $\norm{\nabla\S_t(\W_t)}_\mathrm{F}^2 \le \frac{2\CC}{\ln2} \S_t(\W_t)$ due to \Citet[Lemma~2]{Van_der_Hoeven2020-ug}, i.e., $b = \frac{2\CC}{\ln2}$.
  By setting $\eta = \frac{2a}{b} = \frac{(1 - \ln 2)\ln 2}{\CC}$, \cref{prop:surrogate_gap} implies the desired bound of $\frac{b\norm{\U}_\mathrm{F}^2}{4a} = \frac{\CC\norm{\U}_\mathrm{F}^2}{2(1 - \ln2)\ln2}$.  
\end{proof}

\paragraph{Difference from \Citet{Van_der_Hoeven2020-ug} and \Citet{Van_der_Hoeven2021-wi}.} 
The main technical difference from the previous studies lies in how to decode $\thb \in \R^d$ to $\y \in \Ycal$.
Specifically, when $\thb$ is not confident about any of $d$ classes, their methods increase the likelihood of choosing a class uniformly at random (i.e., uniform exploration with probability $\frac{1}{d}$), which yields a surrogate gap with $a = \frac{1}{d}$ in \cref{prop:surrogate_gap} (see \Citet[Lemma~1]{Van_der_Hoeven2021-wi}), resulting in the extra $d$ factor. 
Our randomized decoding instead returns random $\yrnd \in \Ycal$ with $\E[\yrnd \,|\, Z=1] = \yhat(\thb)$, which exploits $\thb$ more aggressively and yields a surrogate gap with $a = 1 - \ln2$, thus achieving the improved bound of $O(\norm{\U}_\mathrm{F}^2)$.\footnote{The dependence on $C$ is identical in our bound and that of \Citet{Van_der_Hoeven2020-ug}.}
Apart from this, the two decoding procedures have distinct pros and cons: 
uniform exploration is often extensive for structured spaces,\footnote{For example, if $\Ycal$ consists of perfect matchings of a (possibly incomplete) bipartite graph with $n$ vertices, the current fastest \emph{fully polynomial almost uniform sampler} takes $O(n^7 \ln n)$ time~\citep{Jerrum2004-cm,Bezakova2008-vc}, which is known to be impractical \citep{Newman2020-el}.} 
while our randomized decoding enjoys efficient implementations given linear optimization oracles on $\Ycal$, as discussed in \cref{subsec:implementation}. 
On the other hand, uniform exploration is applicable to the bandit setting and works with the (smooth) hinge loss. 
Investigating how to apply our randomized decoding to the bandit setting and how the resulting bound compares with the state-of-the-art \Citep{Van_der_Hoeven2020-ug,Agarwal2022-hk} will be an interesting future direction, whereas applying the randomized decoding to the (smooth) hinge loss seems difficult, as discussed in \cref{footnote:cost-sensitive}.

Finally, we give a lower bound showing that \cref{thm:expected_regret_multi} is asymptotically tight up to $\ln d$ factors if $\norm{\U}_\mathrm{F} = \Theta(B)$ holds for the $\ell_2$-diameter $B$ of the domain $\Wcal$.
The proof, deferred to \cref{asec:lower_bound}, is inspired by that of \Citet{Van_der_Hoeven2021-wi} for obtaining an $\Omega(dB^2)$ lower bound for the smooth hinge loss, which we modify to deal with the logistic loss.

\begin{restatable}{theorem}{lowerbound}\label{thm:lower_bound}
  Let $d \ge 2$. 
  For $B = \Omega(\ln(dT))$, there exists a sequence $(\x_1,\y_1),\dots,(\x_T,\y_T)$ such that $\norm{\x_t}_2 = 1$ for $t=1,\dots,T$ and any possibly randomized algorithm incurs an $\Omega(B^2/\ln^2 d)$ surrogate regret with respect to the logistic surrogate loss.
\end{restatable}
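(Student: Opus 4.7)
The plan is to adapt the $\Omega(dB^2)$ lower bound of \citet{Van_der_Hoeven2021-wi} for the smooth hinge loss to the logistic setting. The essential new difficulty is that the logistic loss is never exactly zero, so one cannot drive the comparator's cumulative surrogate loss to zero by margin alone; instead, one must ensure a per-round margin of $m = \Theta(\ln(dT))$ so that the per-round logistic loss is at most $\log_2(1 + (d-1)e^{-m}) \le 1/T^2$, making the cumulative comparator loss $o(1)$. The hypothesis $B = \Omega(\ln(dT))$ is exactly what enables this while still leaving Frobenius budget for a Perceptron-style lower-bound construction on the side.

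Concretely, I would take the target comparator of the form $\U^* = m \U_0 + \V$ with $\norm{\U^*}_\mathrm{F} = B$, where $\U_0$ is a fixed ``margin template'' that guarantees a correct-class score gap of at least $m$ in every round, and $\V$ is a random perturbation whose realization is hidden from the learner. Unit-norm inputs $\x_t$ would be chosen so that the label induced by $\U^*$ in round $t$ depends only on the component of $\V$ in a low-dimensional informative subspace, with each input's projection onto that subspace scaled as $\Theta(1/\ln d)$ (the remainder being aligned with $\U_0$ so as to supply the margin). Since $B = \Omega(\ln(dT))$, the template consumes only $\Theta(\ln(dT))$ of the Frobenius budget, so $\V$ retains effective scale $\Theta(B)$; however, its effective separator-to-input ratio on the induced binary-like subproblem is only $\Theta(B/\ln d)$ due to the $\Theta(1/\ln d)$ input scaling, and this is where the $\ln^2 d$ factor in the denominator of the final bound originates.

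I would then invoke Yao's minimax principle over the randomness of $\V$, reducing the lower bound to a Perceptron-style mistake lower bound on the induced binary subproblem of effective norm $\Theta(B/\ln d)$; the classical argument yields $\Omega((B/\ln d)^2) = \Omega(B^2/\ln^2 d)$ expected mistakes. Each mistake contributes at least a constant to the target loss, while the comparator's cumulative surrogate loss is $o(1)$ by construction, so the expected surrogate regret is $\Omega(B^2/\ln^2 d)$; extracting a deterministic hard sequence by averaging is standard. The main obstacle is designing $\U_0$ and the distribution of $\V$ so that simultaneously (i) the margin $m$ is maintained uniformly over the randomness of $\V$, (ii) the informative projection of the inputs scales as $\Theta(1/\ln d)$, and (iii) each full-information feedback reveals only $O(1)$ bits about $\V$. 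Balancing these three constraints is the technical core of the modification from \citet{Van_der_Hoeven2021-wi}, necessitated by the fact that the logistic gradient, unlike the smooth hinge gradient, is always nonzero and therefore leaks information to the learner even on rounds where the prediction is correct.
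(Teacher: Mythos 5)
Your route is genuinely different from the paper's, but as written it has a quantitative gap that prevents it from reaching $\Omega(B^2/\ln^2 d)$. You impose two requirements on the informative rounds simultaneously: (a) the comparator's per-round logistic loss is at most $1/T^2$, i.e., a score gap of $m = \Theta(\ln(dT))$ over \emph{every} competing class, and (b) a Perceptron-style count of $\Omega((B/\ln d)^2)$ forced mistakes with informative input projections of scale $\Theta(1/\ln d)$. These do not coexist within the Frobenius budget. Since the label on an informative round is, by design, decided by the hidden part $\V$, the gap between the true class and its nearest competitor on that round must be generated by $\V$ acting through the $\Theta(1/\ln d)$-scaled projection; achieving a gap of $\Omega(\ln(dT))$ therefore costs $\V$-weight $\Theta(\ln d \cdot \ln(dT))$ per round, so only $O\big(B^2/(\ln^2 d\,\ln^2(dT))\big)$ such rounds fit, short of the claim by a $\ln^2(dT)$ factor (and under the theorem's hypothesis $B$ may be as small as $\Theta(\ln(dT))$, in which case not even one such round fits, while the claimed bound is still nontrivial). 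The standard mistake count $(\text{separator norm}\times\text{input scale}/\text{margin})^2$ only gives $(B/\ln d)^2$ if the required functional margin is $\Theta(1)$, which contradicts (a). Relatedly, the ``margin template'' $m\U_0$ cannot ``guarantee a correct-class score gap of $m$ uniformly over the randomness of $\V$'': the correct class is a function of $\V$, so $\U_0$ can at most boost a candidate set, and the decisive gap inside that set must come from $\V$ --- exactly the part your budget accounting omits.

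The paper's proof (\cref{asec:lower_bound}) avoids this machinery entirely and shows where the $\ln^2 d$ really comes from. Each informative round uses a fresh standard basis input $\x_t = \e_t$ with an i.i.d.\ uniform label, so \emph{every} algorithm's expected 0-1 loss is $1 - 1/d$ per round; no hidden separator, Yao's principle, or control of information leakage is needed. Crucially, the comparator's loss on these rounds is not driven to $o(1)$: it only needs to be a constant strictly below $1-1/d$, which a weight of $\ln(2d)$ on the fresh column achieves (this weight, needed to dominate $d-1$ zero-score competitors, is the source of the $\ln^2 d$), so $M = (B^2 - \ln^2(dT))/\ln^2(2d) = \Omega(B^2/\ln^2 d)$ rounds fit the budget. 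The $\Theta(\ln(dT))$-margin, negligible-loss device you want on every round is used only on the single repeated coordinate filling the remaining $T-M$ rounds. If you relax your requirement on the informative rounds from ``comparator loss $o(1)$'' to ``comparator loss a constant below the learner's unavoidable 0-1 loss,'' your construction essentially collapses to the paper's; as stated, it yields at best $\Omega\big(B^2/(\ln^2 d\,\ln^2(dT))\big)$, and the design of $\U_0$, the label-defining role of $\V$, and the claimed $O(1)$-bit leakage control remain unresolved.
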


\paragraph{Logistic vs.\ smooth hinge.}
  As mentioned in \cref{sec:intro}, larger surrogate loss makes it easier to bound the surrogate regret. 
  Thus, considering the (nearly) tight bounds of $O(\norm{\U}_\mathrm{F}^2)$ and $O(d\norm{\U}_\mathrm{F}^2)$ for the logistic and smooth hinge losses, respectively, one may think that the logistic loss is always larger than the smooth hinge loss. 
  However, this is not the case. 
  For example, consider a binary classification setting with $\Ycal = \set*{\e_1, \e_2}$, where $\e_1$ is the ground truth. 
  If an estimator $\U$ yields $(\theta_1, \theta_2)^\top = \U\x_t$ with $\theta_1 - \theta_2 = 0.3$, the logistic and smooth hinge losses take $\log_2(1+\exp(\theta_2 - \theta_1)) \approx 0.8$ and $\max\set{1 - (\theta_1 - \theta_2)^2, 0} \approx 0.9$, respectively, implying that the logistic loss is not always larger than the smooth hinge loss.

\subsection*{Acknowledgements}
We thank anonymous COLT reviewers, who provided truly constructive feedback.
In particular, their review comments helped us improve the surrogate regret bounds so that they vanish when $a \to 1$ and brought our attention to the parameter-freeness of the original method by \Citet{Van_der_Hoeven2020-ug}.
This work was supported by JST ERATO Grant Number JPMJER1903, JST ACT-X Grant Number JPMJAX210E, and JSPS KAKENHI Grant Number JP22K17853.

\newrefcontext[sorting=nyt]
\printbibliography[heading=bibintoc]

@article{orabona2023modern,
      title={A Modern Introduction to Online Learning}, 
      author={Orabona, Francesco},
      journal={arXiv:1912.13213},
      year={2023},
      eprint={1912.13213},
      archivePrefix={arXiv},
      primaryClass={cs.LG},
      note={\url{https://arxiv.org/abs/1912.13213v6}}
}

@INPROCEEDINGS{Van_der_Hoeven2020-ug,
  title     = "Exploiting the Surrogate Gap in Online Multiclass Classification",
  booktitle = "Advances in Neural Information Processing Systems",
  author    = "van der Hoeven, Dirk",
  publisher = "Curran Associates, Inc.",
  volume    =  33,
  pages     = "9562--9572",
  year      =  2020
}

@ARTICLE{Blondel2020-tu,
  title    = "Learning with {Fenchel}--{Young} losses",
  author   = "Blondel, Mathieu and Martins, Andr{\'e} F. T. and Niculae, Vlad",
  journal  = "Journal of Machine Learning Research",
  volume   =  21,
  number   =  35,
  pages    = "1--69",
  year     =  2020
}

@INPROCEEDINGS{Blondel2019-gd,
  title     = "Structured Prediction with Projection Oracles",
  booktitle = "Advances in Neural Information Processing Systems",
  author    = "Blondel, Mathieu",
  publisher = "Curran Associates, Inc.",
  volume    =  32,
  year      =  2019,
  pages     = "12167--12178"
}

@BOOK{Rockafellar1970-sk,
  title     = "Convex Analysis",
  author    = "Rockafellar, Ralph Tyrell",
  publisher = "Princeton University Press",
  year      =  1970,
  language  = "en"
}

@BOOK{Cesa-Bianchi2006-oa,
  title     = "Prediction, Learning, and Games",
  author    = "Cesa-Bianchi, Nicol{\`o} and Lugosi, G{\'a}bor",
  publisher = "Cambridge University Press",
  year      =  2006
}

@INPROCEEDINGS{Foster2018-tm,
  title     = "Logistic Regression: The Importance of Being Improper",
  booktitle = "Proceedings of the 31st Conference on Learning Theory",
  author    = "Foster, Dylan J. and Kale, Satyen and Luo, Haipeng and Mohri,
               Mehryar and Sridharan, Karthik",
  publisher = "PMLR",
  volume    =  75,
  pages     = "167--208",
  year      =  2018
}

@INPROCEEDINGS{Beygelzimer2019-si,
  title     = "Bandit Multiclass Linear Classification: Efficient Algorithms
               for the Separable Case",
  booktitle = "Proceedings of the 36th International Conference on Machine
               Learning",
  author    = "Beygelzimer, Alina and Pal, David and Szorenyi, Balazs and
               Thiruvenkatachari, Devanathan and Wei, Chen-Yu and Zhang,
               Chicheng",
  publisher = "PMLR",
  volume    =  97,
  pages     = "624--633",
  year      =  2019
}

@INPROCEEDINGS{Agarwal2022-hk,
  title     = "Efficient Methods for Online Multiclass Logistic Regression",
  booktitle = "Proceedings of the 33rd International Conference on Algorithmic
               Learning Theory",
  author    = "Agarwal, Naman and Kale, Satyen and Zimmert, Julian",
  publisher = "PMLR",
  volume    =  167,
  pages     = "3--33",
  year      =  2022
}

@INPROCEEDINGS{Kakade2008-ud,
  title     = "Efficient bandit algorithms for online multiclass prediction",
  booktitle = "Proceedings of the 25th International Conference on Machine
               Learning",
  author    = "Kakade, Sham M. and Shalev-Shwartz, Shai and Tewari, Ambuj",
  publisher = "ACM",
  pages     = "440--447",
  year      =  2008
}

@INPROCEEDINGS{Garber2021-tg,
  title     = "{Frank--Wolfe} with a Nearest Extreme Point Oracle",
  booktitle = "Proceedings of the 34th Conference on Learning Theory",
  author    = "Garber, Dan and Wolf, Noam",
  publisher = "PMLR",
  volume    =  134,
  pages     = "2103--2132",
  year      =  2021
}

@INPROCEEDINGS{Niculae2018-qg,
  title     = "{S}parse{MAP}: Differentiable Sparse Structured Inference",
  booktitle = "Proceedings of the 35th International Conference on Machine
               Learning",
  author    = "Niculae, Vlad and Martins, Andr{\'e} F. T. and Blondel, Mathieu and
               Cardie, Claire",
  publisher = "PMLR",
  volume    =  80,
  pages     = "3799--3808",
  year      =  2018
}

@INPROCEEDINGS{Berthet2020-wm,
  title     = "Learning with Differentiable Pertubed Optimizers",
  booktitle = "Advances in Neural Information Processing Systems",
  author    = "Berthet, Quentin and Blondel, Mathieu and Teboul, Olivier and
               Cuturi, Marco and Vert, Jean-Philippe and Bach, Francis",
  publisher = "Curran Associates, Inc.",
  volume    =  33,
  pages     = "9508--9519",
  year      =  2020
}

@INPROCEEDINGS{McMahan2012-yb,
  title     = "Open Problem: Better Bounds for Online Logistic Regression",
  booktitle = "Proceedings of the 25th Annual Conference on Learning Theory",
  author    = "McMahan, H.~Brendan and Streeter, Matthew",
  publisher = "PMLR",
  volume    =  23,
  pages     = "44.1--44.3",
  year      =  2012,
}

@INPROCEEDINGS{Cuturi2013-gy,
  title     = "Sinkhorn Distances: Lightspeed Computation of Optimal Transport",
  booktitle = "Advances in Neural Information Processing Systems",
  author    = "Cuturi, Marco",
  publisher = "Curran Associates, Inc.",
  volume    =  26,
  pages     = "2292--2300",
  year      =  2013
}

@INPROCEEDINGS{Van_der_Hoeven2021-wi,
  title     = "Beyond Bandit Feedback in Online Multiclass Classification",
  booktitle = "Advances in Neural Information Processing Systems",
  author    = "van der Hoeven, Dirk and Fusco, Federico and Cesa-Bianchi,
               Nicol{\`o}",
  publisher = "Curran Associates, Inc.",
  volume    =  34,
  pages     = "13280--13291",
  year      =  2021
}

@INPROCEEDINGS{Jezequel2021-hk,
  title     = "Mixability made efficient: Fast online multiclass logistic
               regression",
  booktitle = "Advances in Neural Information Processing Systems",
  author    = "J{\'e}z{\'e}quel, R{\'e}mi and Gaillard, Pierre and Rudi,
               Alessandro",
  publisher = "Curran Associates, Inc.",
  volume    =  34,
  pages     = "23692--23702",
  year      =  2021
}

@INPROCEEDINGS{Neu2020-uw,
  title     = "Fast Rates for Online Prediction with Abstention",
  booktitle = "Proceedings of the 33rd Conference on Learning Theory",
  author    = "Neu, Gergely and Zhivotovskiy, Nikita",
  publisher = "PMLR",
  volume    =  125,
  pages     = "3030--3048",
  year      =  2020
}

@INPROCEEDINGS{Ciliberto2016-nl,
  title     = "A Consistent Regularization Approach for Structured Prediction",
  booktitle = "Advances in Neural Information Processing Systems",
  author    = "Ciliberto, Carlo and Rosasco, Lorenzo and Rudi, Alessandro",
  publisher = "Curran Associates, Inc.",
  volume    =  29,
  pages     =  "4412--4420",
  year      =  2016
}

@ARTICLE{Ciliberto2020-zr,
  title    = "A General Framework for Consistent Structured Prediction with
              Implicit Loss Embeddings",
  author   = "Ciliberto, Carlo and Rosasco, Lorenzo and Rudi, Alessandro",
  journal  = "Journal of Machine Learning Research",
  volume   =  21,
  number   =  98,
  pages    = "1--67",
  year     =  2020
}

@INPROCEEDINGS{Cutkosky2018-bf,
  title     = "Black-Box Reductions for Parameter-free Online Learning in
               {Banach} Spaces",
  booktitle = "Proceedings of the 31st Conference on Learning Theory",
  author    = "Cutkosky, Ashok and Orabona, Francesco",
  publisher = "PMLR",
  volume    =  75,
  pages     = "1493--1529",
  year      =  2018
}

@BOOK{BakIr2007-kd,
  title     = "{Predicting Structured Data}",
  author    = "BakIr, G{\"o}khan and Hofmann, Thomas and Sch{\"o}lkopf,
               Bernhard and Smola, Alexander J. and Taskar, Ben and
               Vishwanathan, S. V. N.",
  publisher = "The MIT Press",
  year      =  2007
}

@INPROCEEDINGS{Fink2006-sx,
  title     = "Online multiclass learning by interclass hypothesis sharing",
  booktitle = "Proceedings of the 23rd International Conference on Machine
               Learning",
  author    = "Fink, Michael and Shalev-Shwartz, Shai and Singer, Yoram and
               Ullman, Shimon",
  publisher = "ACM",
  pages     = "313--320",
  year      =  2006
}

@INPROCEEDINGS{Martins2011-ol,
  title     = "Online Learning of Structured Predictors with Multiple Kernels",
  booktitle = "Proceedings of the 14th International Conference on Artificial
               Intelligence and Statistics",
  author    = "Martins, Andr{\'e} F. T. and Smith, Noah and Xing, Eric and
               Aguiar, Pedro and Figueiredo, Mario",
  publisher = "PMLR",
  volume    =  15,
  pages     = "507--515",
  year      =  2011,
}

@ARTICLE{Crammer_undated-db,
  title   = "On the Algorithmic Implementation of Multiclass Kernel-Based
             Vector Machines",
  author  = "Crammer, Koby and Singer, Yoram",
  journal  = "Journal of Machine Learning Research",
  year    = "2001",
  volume  = 2,
  pages   = "265--292"
}

@ARTICLE{Bezakova2008-vc,
  title     = "Accelerating simulated annealing for the permanent and
               combinatorial counting problems",
  author    = "Bez{\'a}kov{\'a}, Ivona and {\v S}tefankovi{\v c}, Daniel and
               Vazirani, Vijay V. and Vigoda, Eric",
  journal   = "SIAM Journal on Computing",
  publisher = "SIAM",
  volume    =  37,
  number    =  5,
  pages     = "1429--1454",
  year      =  2008
}

@ARTICLE{Jerrum2004-cm,
  title     = "A polynomial-time approximation algorithm for the permanent of a
               matrix with nonnegative entries",
  author    = "Jerrum, Mark and Sinclair, Alistair and Vigoda, Eric",
  journal   = "Journal of the ACM",
  publisher = "ACM",
  volume    =  51,
  number    =  4,
  pages     = "671--697",
  year      =  2004
}

@ARTICLE{Newman2020-el,
  title   = "{FPRAS} Approximation of the Matrix Permanent in Practice",
  author  = "Newman, James E. and Vardi, Moshe Y.",
  journal = "arXiv:2012.03367",
  year    =  2020
}

@INPROCEEDINGS{Lafferty2001,
  title  = "Conditional random fields: Probabilistic models for segmenting and labeling sequence data",
  booktitle = "Proceedings of the 18th International Conference on Machine
               Learning",
  author = "Lafferty, John and McCallum, Andrew and Pereira, Fernando C. N.",
  publisher = "Morgan Kaufmann Publishers Inc.",
  pages     = "282--289",
  year      =  2001
}

@INPROCEEDINGS{Bartlett2004,
  title     = "Exponentiated gradient algorithms for large-margin structured classification",
  author    = "Bartlett, Peter L. and Collins, Michael and Taskar, Ben and McAllester, David",
  booktitle = "Advances in Neural Information Processing Systems",
  publisher = "Curran Associates, Inc.",
  volume    = "17",
  pages     = "113--120",
  year      = "2004"
}

@ARTICLE{Tsochantaridis2005,
  title   = "Large margin methods for structured and interdependent output variables",
  author  = "Tsochantaridis, Ioannis and Joachims, Thorsten and Hofmann, Thomas and Altun, Yasemin and Singer, Yoram",
  journal  = "Journal of Machine Learning Research",
  volume  = "6",
  number  = "50",
  pages   = "1453--1484",
  year    = "2005"
}

@INPROCEEDINGS{Koo2007,
  title     = "Structured prediction models via the matrix-tree theorem",
  author    = "Koo, Terry and Globerson, Amir and Carreras P{\'e}rez, Xavier and Collins, Michael",
  booktitle = "Proceedings of the 2007 Joint Conference on Empirical Methods in Natural Language Processing and Computational Natural Language Learning",
  pages     = "141--150",
  year      = "2007",
  publisher = "ACL",
}

@INPROCEEDINGS{Osokin2017,
  title     = "On structured prediction theory with calibrated convex surrogate losses",
  author    = "Osokin, Anton and Bach, Francis and Lacoste-Julien, Simon",
  booktitle = "Advances in Neural Information Processing Systems",
  publisher = "Curran Associates, Inc.",
  volume    = "30",
  pages     = "302--313",
  year      = 2017
}

@INPROCEEDINGS{Nowak-Vila2019,
  title        = "Sharp analysis of learning with discrete losses",
  author       = "Nowak-Vila, Alex and Bach, Francis and Rudi, Alessandro",
  booktitle    = "Proceedings of the 22nd International Conference on Artificial Intelligence and Statistics",
  pages        = "1920--1929",
  year         = 2019,
  organization = "PMLR"
}

@INPROCEEDINGS{Nowak-Vila2020,
  title     = "Consistent structured prediction with max-min margin {Markov} networks",
  author    = "Nowak-Vila, Alex and Bach, Francis and Rudi, Alessandro",
  booktitle = "Proceedings of the 37th International Conference on Machine Learning",
  pages     = "7381--7391",
  year      = 2020,
  publisher = "PMLR"
}

@INPROCEEDINGS{Nowak-Vila2022,
  title     = "On the consistency of max-margin losses",
  author    = "Nowak-Vila, Alex and Rudi, Alessandro and Bach, Francis",
  booktitle = "Proceedings of the 25th International Conference on Artificial Intelligence and Statistics",
  pages     = "4612--4633",
  year      = 2022,
  publisher = "PMLR"
}

@INPROCEEDINGS{Cabannes2020,
  title     = "Structured prediction with partial labelling through the infimum loss",
  author    = "Cabannes, Vivien A. and Rudi, Alessandro and Bach, Francis",
  booktitle = "Proceedings of the 37th International Conference on Machine Learning",
  pages     = "1230--1239",
  year      = 2020,
  publisher = "PMLR"
}

@INPROCEEDINGS{Cabannes2021-vv,
  title     = "Fast Rates for Structured Prediction",
  booktitle = "Proceedings of the 34th Conference on Learning Theory",
  author    = "Cabannes, Vivien A. and Bach, Francis and Rudi, Alessandro",
  publisher = "PMLR",
  volume    =  134,
  pages     = "823--865",
  year      =  2021
}

@INPROCEEDINGS{Finocchiaro2019,
  title   = "An embedding framework for consistent polyhedral surrogates",
  author  = "Finocchiaro, Jessica and Frongillo, Rafael and Waggoner, Bo",
  booktitle = "Advances in Neural Information Processing Systems",
  publisher = "Curran Associates, Inc.",
  volume  = "32",
  pages = "10781--10791",
  year    = 2019
}

@INPROCEEDINGS{Wang2020,
  title   = "{Weston-Watkins} hinge loss and ordered partitions",
  author  = "Wang, Yutong and Scott, Clayton",
  booktitle = "Advances in Neural Information Processing Systems",
  publisher = "Curran Associates, Inc.",
  volume  = "33",
  pages   = "19873--19883",
  year    = 2020
}

@INPROCEEDINGS{Thilagar2022,
  title     = "Consistent polyhedral surrogates for top-$k$ classification and variants",
  author    = "Thilagar, Anish and Frongillo, Rafael and Finocchiaro, Jessica J. and Goodwill, Emma",
  booktitle = "Proceedings of the 39th International Conference on Machine Learning",
  pages     = "21329--21359",
  year      = 2022,
  publisher = "PMLR"
}

@INPROCEEDINGS{Pillutla2018,
  title     = "A smoother way to train structured prediction models",
  author    = "Pillutla, Venkata Krishna and Roulet, Vincent and Kakade, Sham M. and Harchaoui, Zaid",
  booktitle = "Advances in Neural Information Processing Systems",
  publisher = "Curran Associates, Inc.",
  volume    = "31",
  pages     = "4766--4778",
  year      = 2018
}

@ARTICLE{Combettes2023-cv,
  title   = "Revisiting the approximate {C}arath{\'e}odory problem via the
             {Frank--Wolfe} algorithm",
  author  = "Combettes, Cyrille W. and Pokutta, Sebastian",
  journal = "Mathematical Programming",
  volume  =  197,
  number  =  1,
  pages   = "191--214",
  year    =  2023
}

@ARTICLE{Rosenblatt1958-sh,
  title    = "The perceptron: a probabilistic model for information storage and
              organization in the brain",
  author   = "Rosenblatt, Frank",
  journal  = "Psychological Review",
  volume   =  65,
  number   =  6,
  pages    = "386--408",
  year     =  1958,
  language = "en"
}

@INPROCEEDINGS{Hazan2014-cc,
  title     = "Logistic Regression: Tight Bounds for Stochastic and Online
               Optimization",
  booktitle = "Proceedings of the 27th Conference on Learning Theory",
  author    = "Hazan, Elad and Koren, Tomer and Levy, Kfir Y.",
  publisher = "PMLR",
  volume    =  35,
  pages     = "197--209",
  year      =  2014
}

@ARTICLE{Hazan2007-ta,
  title   = "Logarithmic regret algorithms for online convex optimization",
  author  = "Hazan, Elad and Agarwal, Amit and Kale, Satyen",
  journal = "Machine Learning",
  volume  =  69,
  number  =  2,
  pages   = "169--192",
  year    =  2007
}

@INPROCEEDINGS{Mcmahan2012-tf,
  title     = "No-Regret Algorithms for Unconstrained Online Convex
               Optimization",
  booktitle = "Advances in Neural Information Processing Systems",
  author    = "McMahan, H.~Brendan and Streeter, Matthew",
  publisher = "Curran Associates, Inc.",
  volume    =  25,
  pages     = "2402--2410",
  year      =  2012
}

@INPROCEEDINGS{Orabona2013-ng,
  title     = "Dimension-Free Exponentiated Gradient",
  booktitle = "Advances in Neural Information Processing Systems",
  author    = "Orabona, Francesco",
  publisher = "Curran Associates, Inc.",
  volume    =  26,
  pages     = "1806--1814",
  year      =  2013
}

@INPROCEEDINGS{Li2021-pd,
  title     = "Towards Sharper Generalization Bounds for Structured Prediction",
  booktitle = "Advances in Neural Information Processing Systems",
  author    = "Li, Shaojie and Liu, Yong",
  publisher = "Curran Associates, Inc.",
  volume    =  34,
  pages     = "26844--26857",
  year      =  2021
}

@ARTICLE{Bartlett2006-gd,
  title   = "Convexity, classification, and risk bounds",
  author  = "Bartlett, Peter L. and Jordan, Michael I. and McAuliffe, Jon D.",
  journal = "Journal of the American Statistical Association",
  volume  =  101,
  number  =  473,
  pages   = "138--156",
  year    =  2006
}

@ARTICLE{Hu2022-yq,
  title   = "Fast Rates for Contextual Linear Optimization",
  author  = "Hu, Yichun and Kallus, Nathan and Mao, Xiaojie",
  journal = "Management science",
  volume  =  68,
  number  =  6,
  pages   = "4236--4245",
  year    =  2022
}

@ARTICLE{Elmachtoub2022-og,
  title   = "Smart ``Predict, then Optimize''",
  author  = "Elmachtoub, Adam N. and Grigas, Paul",
  journal = "Management Science",
  volume  =  68,
  number  =  1,
  pages   = "9--26",
  year    =  2022
}

@INPROCEEDINGS{McMahan2010-un,
  title     = "Adaptive Bound Optimization for Online Convex Optimization",
  booktitle = "Proceedings of the 23rd Conference on Learning Theory",
  author    = "McMahan, H.~Brendan and Streeter, Matthew",
  publisher = "Omnipress",
  pages     = "244--256",
  year      =  2010
}

@ARTICLE{Duchi2011-kr,
  title    = "Adaptive Subgradient Methods for Online Learning and Stochastic
              Optimization",
  author   = "Duchi, John and Hazan, Elad and Singer, Yoram",
  journal  = "Journal of Machine Learning Research",
  volume   =  12,
  number   =  61,
  pages    = "2121--2159",
  year     =  2011
}

\clearpage
\appendix

\DeclareCiteCommand{\citeyear}
    {}
    {\bibhyperref{\printdate}}
    {\multicitedelim}
    {}

\crefalias{section}{appendix} 

\section{Literature Review of Structured Prediction}\label{asec:related_work}
In this literature review, the term ``(surrogate) regret'' refers to the (surrogate) excess risk.

Earlier studies investigated statistical inference on conditional random fields \citep{Lafferty2001}, max-margin models \citep{Bartlett2004}, and spanning trees \citep{Koo2007}.
\Citet{Tsochantaridis2005} is a seminal work to provide a general framework based on loss functions for structured prediction problems by extending support vector machines.
Independent of the development of SparseMAP \citep{Niculae2018-qg}, \citet{Pillutla2018} proposed a tractable algorithm to optimize the structured hinge loss by introducing a smoothed decoder.

With the SELF framework (\citealp{Ciliberto2016-nl}, \citeyear{Ciliberto2020-zr}), regret bounds depend on the spectral norm of $\V$, which may be exponential in the natural dimension of the output space. 
To address this issue, \citet[Appendix~E]{Osokin2017} improved the surrogate regret bound of the quadratic loss for specific target losses such as the block 0-1 and Hamming losses, showing that the dependency of the surrogate regret on the matrix norm can be lifted.
\Citet[Theorem~3.1]{Nowak-Vila2019} systematically extended this result to many multilabel and ranking target losses by obtaining low-rank decomposition of $\V$ in SELF for those losses.
Moreover, \citet{Nowak-Vila2020} established the surrogate regret bounds beyond the quadratic loss, for max-margin surrogate losses generated by the Fenchel--Young losses;
\citet{Nowak-Vila2022} studied necessary conditions of a structured target loss for max-margin losses to be Fisher consistent.
\Citet{Cabannes2020} elucidated that partial label learning, a type of learning problem with ambiguous structured inputs, can be cast into the framework of the regularized least-square decoder and established the surrogate regret bound for the target loss called the infimum loss, encompassed into SELF.
Recently, \citet{Cabannes2021-vv} showed fast convergence rates for the excess risk of SELF under a generalized Tsybakov margin condition; in particular, their result implies exponential convergence in the number of samples under the hard margin condition. 
\Citet{Li2021-pd} studied generalization bounds for surrogate losses, which imply a fast convergence rate when surrogate losses are smooth.

Apart from the aforementioned studies, another stream of research has studied the consistency of structured prediction problems via \emph{polyhedral losses}.
\Citet{Finocchiaro2019} and \citet{Thilagar2022} studied the consistency of classification with abstention, top-$k$ prediction, and Lov\'{a}sz hinge loss;
\citet{Wang2020} studied the consistent target loss of the Weston--Watkins hinge loss.

The literature is scarce when it comes to the online setting (except for online multiclass classification discussed in \cref{subsec:related-work}).
\Citet{Martins2011-ol} studied an online learning approach to structured prediction with multiple kernels based on standard regret bounds for convex surrogate losses.

\section{Note on CRF Loss}\label{asec:crf}
We confirm that the \emph{Conditional Random Field (CRF)} loss \citep{Lafferty2001} satisfies \cref{prop:fyloss_properties}. 
Consequently, we can treat it as a specific Fenchel-Young loss to obtain the results presented in the main text, similar to the logistic and SparseMAP losses. 
Although the following discussion is elementary, we include it for completeness.

Below, $\Ycal$ is equipped with some total order and the components of any $|\Ycal|$-dimensional vector are arranged in the same order.
As detailed in \citet[Section~7.1]{Blondel2020-tu}, the CRF loss is a Fenchel--Young loss generated by $\Omega(\y) = \min\Set*{-\Hs(\bm{p})}{\bm{p} \in \triangle^{|\Ycal|},\ \E_{Y\sim\bm{p}}[Y] = \y}$. 
For any $\thb \in \R^d$ and $\y\in \Ycal$, the resulting CRF loss is expressed as $\LSOmega{\thb}{\y} = -\ln(\exp(\inpr{\thb, \y})/Z(\thb))$, where $Z(\thb) \coloneqq \sum_{\y \in \Ycal}\exp(\inpr{\thb, \y})$ denotes the partition function. 
The regularized prediction is uniquely determined by the marginal inference: 
$\yhat(\thb) = \sum_{\y \in \Ycal}({\exp(\inpr{\thb, \y})}/{Z(\thb)})\y$. 
Thus, the gradient, $\nabla\LSOmega{\thb}{\y} = \yhat(\thb) - \y$, is also unique. 
In what follows, we prove $\LSOmega{\thb}{\y} \ge \frac{\lambda}{2}\norm{\yhat(\thb) - \y}_1^2$ for $\lambda \coloneqq 1/\max\Set*{\norm{\y}_1^2}{\y\in\Ycal}$ to establish \cref{prop:fyloss_properties}. 
If $\Ycal \subseteq \set{0,1}^d$, this $\lambda$ is at least $1/d^2$.

First, let us observe that for any $\thb \in \R^d$, the CRF loss can be seen as a logistic loss defined on $\R^{|\Ycal|}$ by associating each $\y \in \Ycal$ with a score $\inpr{\thb, \y} \in \R$.
Specifically, let $\bm{s}(\thb) \in \R^{|\Ycal|}$ denote a vector whose component corresponding to $\y \in \Ycal$, denoted by $s_{\y}(\thb)$, equals $\inpr{\thb, \y}$. 
Then, we have $\LSOmega{\thb}{\y} = -\ln(\exp(s_{\y}(\thb))/\sum_{\y' \in \Ycal}\exp(s_{\y'}(\thb)))$.
By regarding this as the logistic loss of $\bm{s}(\thb)$, the Pinsker's inequality (or \citet[Proposition~2]{Blondel2019-gd}) implies 
\begin{equation}\label{eq:crf-pinsker}
  \LSOmega{\thb}{\y} \ge \frac12 \norm{\sigma(\bm{s}(\thb)) - \e_{\y}}_1^2,
\end{equation}
where $\sigma:\R^{|\Ycal|}\to\triangle^{|\Ycal|}$ is the softmax function, i.e., $\sigma_{\y}(\bm{s}(\thb)) = \exp(s_{\y}(\thb))/\sum_{\y' \in \Ycal}\exp(s_{\y'}(\thb))$, and $\e_{\y}\in \R^{|\Ycal|}$ is the standard basis vector that has a single one at the component corresponding to $\y$.

Next, we show $\norm{\sigma(\bm{s}(\thb)) - \e_{\y}}_1^2 \ge \lambda\norm{\yhat(\thb) - \y}_1^2$, which combined with \eqref{eq:crf-pinsker} yields the desired inequality. 
To see this, let $A$ be the $d\times|\Ycal|$ matrix with column vectors $\y \in \Ycal$ aligned horizontally in the same total order. 
We have $\yhat(\thb) = A\sigma(\bm{s}(\thb))$ and $\y = A\e_{\y}$, and hence 
\begin{equation}
  \norm{\yhat(\thb) - \y}_1 = \norm{A(\sigma(\bm{s}(\thb)) - \e_{\y})}_1 \le \norm{A}_1\norm{\sigma(\bm{s}(\thb)) - \e_{\y}}_1.
\end{equation}
Here, $\norm{A}_1$ is the operator norm of $A$ between the $\ell_1$-normed spaces, i.e., the maximum $\ell_1$-norm of the columns of $A$. 
Thus, we have $\norm{A}_1 = 1/\sqrt\lambda$, and hence $\norm{\sigma(\bm{s}(\thb)) - \e_{\y}}_1^2 \ge \lambda\norm{\yhat(\thb) - \y}_1^2$. 

We remark that the marginal inference for computing $\yhat(\thb)$ is sometimes intractable, which has motivated the development of SparseMAP \citep{Niculae2018-qg}. 
We refer the reader to \citet[Section~7.3]{Blondel2020-tu} for a discussion on the computational complexity.

\section{Additional Applications}\label{asec:additional_applications}
We discuss additional applications considered in \citet{Blondel2019-gd}. 
For simplicity, we below let $\norm{\cdot}$ be the $\ell_2$-norm and $\S_\Omega$ the SparseMAP loss generated by $\Omega(\y) = \frac{1}{2}\norm{\y}_2^2 + I_{\conv(\Ycal)}(\y)$, as in the multilabel classification example in \cref{subsec:example}; hence, we have $\lambda = 1$.
We will confirm the condition of $\lambda > \frac{4\gamma}{\nu}$ in \cref{thm:expected_regret_general} and discuss the implementation of randomized decoding (\cref{alg:decoding}).

\subsection{Ranking with Permutahedron}
We consider another ranking scenario with different $\Ycal$. 
Let $\Ycal \subseteq \Z^d$ be the set of all points obtained by permuting $(d, d-1, \dots, 1)^\top \in \Z^d$. 
Then, $\conv(\Ycal)$ is the so-called \emph{permutahedron}. 
Since $\y^\top\ones$ is constant for all $\y \in \Ycal$ and $\norm{\cdot}$ is the $\ell_2$-norm, we have $\nu = \sqrt{2}$. 
We consider measuring how predicted $\y' \in \conv(\Ycal)$ is aligned with the true $\y \in \Ycal$ by $\inpr{\y, \y - \y'}$, which takes $0$ if $\y' = \y$ and $M \coloneqq \frac{d(d^2-1)}{6}$ for the least aligned $\y'$.
Based on this idea, we use a target loss defined by $\LT{\y'}{\y} = \frac{1}{M}\inpr{\y, \y - \y'} \in [0, 1]$, which is affine in $\y'$ and satisfies $\LT{\y'}{\y} \le \frac{1}{M}\norm{\y}_2\norm{\y - \y'}_2 = \gamma\norm{\y - \y'}_2$ for $\gamma = \frac{1}{d-1}\sqrt{\frac{2d+1}{6d(d+1)}}$. 
Therefore, $\lambda > \frac{4\gamma}{\nu}$ holds for $d \ge 3$.

We provide an efficient implementation of randomized decoding (\cref{alg:decoding}) for this setting.
To this end, we show that a similar claim to \cref{prop:nep} holds, though $\conv(\Ycal)$ is not a 0-1 polytope. 

\begin{proposition}\label[proposition]{prop:nearest-permutahedron}
  Let $\y'\in\conv(\Ycal)$ and $\pi$ be a permutation on $[d]$ such that $y'_{\pi(1)} \le \dotsb \le y'_{\pi(d)}$. 
  It holds that $\y^* \coloneqq (\pi^{-1}(1), \dots, \pi^{-1}(d))^\top \in \argmin\Set*{\norm{\y - \y'}_p}{\y \in \Ycal}$, where $p \in [1, +\infty]$.
\end{proposition}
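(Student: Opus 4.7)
The plan is to reduce the claim to the classical Hardy--Littlewood--P\'olya rearrangement inequality via a bubble-sort argument. First I would note that since $\pi^{-1}$ is a permutation of $[d]$, the vector $\y^* = (\pi^{-1}(1),\dots,\pi^{-1}(d))^\top$ is itself a permutation of $(1,\dots,d)$, so $\y^*\in\Ycal$. By construction, $\y^*_{\pi(k)} = k$ for each $k$, which means that if we read $\y^*$ in the order $\pi(1),\dots,\pi(d)$ we obtain $1 < 2 < \dots < d$, matching the sorted order of $y'_{\pi(1)} \le \dots \le y'_{\pi(d)}$. Thus $\y^*$ is the unique permutation of $(1,\dots,d)$ whose entries are arranged \emph{concordantly} with those of $\y'$.

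The core tool is the following swap lemma: for any reals $a \le b$, $u \le v$ and any $p \in [1,+\infty]$,
\begin{equation*}
  |v-a|^p + |u-b|^p \ge |u-a|^p + |v-b|^p \qquad (p < \infty),
\end{equation*}
and the analogous inequality $\max\set{|v-a|,|u-b|} \ge \max\set{|u-a|,|v-b|}$ for $p = \infty$. I would prove this by noting that $(v-a)+(u-b) = (u-a)+(v-b)$ while $|(v-a)-(u-b)| = (v-u)+(b-a) \ge |v-u-(b-a)| = |(u-a)-(v-b)|$; that is, the pair $(v-a,u-b)$ majorizes $(u-a,v-b)$. Convexity of $t \mapsto |t|^p$ for $p \in [1,\infty)$ then yields the inequality (equivalently, Karamata's inequality); the case $p = \infty$ follows from an immediate case analysis.

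Given the swap lemma, I would argue by induction on the number of \emph{inversions} of $\y \in \Ycal$ relative to $\y^*$, that is, pairs $(i,j)$ with $y'_i \le y'_j$ but $y_i > y_j$. If $\y$ has no inversions, then $\y$ is arranged concordantly with $\y'$ and hence $\y = \y^*$. Otherwise, pick any adjacent inverted pair and swap the corresponding entries of $\y$; the swap lemma (applied with $a = y'_i$, $b = y'_j$, and $\set{u,v} = \set{y_j,y_i}$) ensures $\norm{\y - \y'}_p$ does not increase, while the number of inversions strictly decreases. Iterating this bubble-sort procedure terminates at $\y^*$, establishing $\norm{\y-\y'}_p \ge \norm{\y^* - \y'}_p$.

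The main obstacle is the swap lemma itself, particularly handling it uniformly over $p \in [1,+\infty]$. For finite $p$ the majorization argument is clean, but the $p = \infty$ case must be checked separately, and one must also be careful that the bubble-sort step decreases the inversion count even when ties are present in $\y'$ (any fixed choice of $\pi$ works since ties contribute no constraint to concordance). Once these two points are verified, the remainder is a routine induction.
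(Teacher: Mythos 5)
Your proposal is correct and follows essentially the same route as the paper: an exchange argument showing that swapping a discordant pair does not increase the $\ell_p$ distance, where the paper derives the swap inequality from convexity of $t \mapsto \abs{t}^p$ exactly as your majorization step does, and handles $p=\infty$ by taking the limit $p\to\infty$ rather than by a separate case analysis. The only detail to tighten is the tie issue you already flag: count inversions relative to the fixed order $\pi$ (equivalently, relative to $\y^*$) rather than via the condition $y'_i \le y'_j$, so that each swap strictly decreases the count even when $\y'$ has equal coordinates; with that convention your bubble-sort induction goes through verbatim.
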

\begin{proof}
  Let $p < \infty$; the case of $p = \infty$ follows by taking the limit $p \to \infty$ in what follows. 
  Suppose that there exists a nearest point $\y \in \Ycal$ to $\y'$ such that $y'_i > y'_j$ and $y_i < y_j$ for some $i, j \in [d]$, which implies $y_i - y'_i < y_j - y'_j$.
  Since $y_i - y'_j$ and $y_j - y'_i$ are contained in $[y_i - y'_i, y_j - y'_j]$, we have $\abs{y_i - y'_i}^p + \abs{y_j - y'_j}^p \ge \abs{y_i - y'_j}^p + \abs{y_j - y'_i}^p$ by the convexity of $x \mapsto \abs{x}^p$.
  Thus, letting $\widecheck{\y} \in \Ycal$ be a point obtained from $\y$ by swapping the $i$th and $j$th components, we obtain
  \begin{align}
    \norm{\y - \y'}_p^p
    &=  \abs{y_i - y'_i}^p + \abs{y_j - y'_j}^p + \sum_{k \ne i,j} \abs{y_k - y'_k}^p\\
    &\ge \abs{\widecheck{y}_i - y'_i}^p + \abs{\widecheck{y}_j - y'_j}^p + \sum_{k \ne i,j} \abs{\widecheck{y}_k - y'_k}^p 
    = \norm{\widecheck{\y} - \y'}_p^p,
  \end{align}
  meaning that $\widecheck{\y}$ is also a nearest point to $\y'$.
  Therefore, there must exist a nearest point such that its components have the same order as those of $\y'$, and any such points have the same distance to $\y'$.
  Consequently, $\y^*$ is a nearest point to $\y'$ with respect to $\norm{\cdot}_p$ among all $\y \in \Ycal$.
\end{proof}
\Cref{prop:nearest-permutahedron} means that we can compute a nearest extreme point $\y^* \in \Ycal$ to any given $\y'\in\conv(\Ycal)$ by sorting the components of $\y'$ in $O(d \ln d)$ time.
Armed with this, we can use the fast Frank--Wolfe algorithm of \citet{Garber2021-tg} to compute $\yhat(\thb)$ and a convex combination of extreme points for sampling $\yrnd$ with $\E[\yrnd \,|\, Z=1] = \yhat(\thb)$, as described in \cref{subsec:implementation}.

\subsection{Ordinal Regression}
We consider ordinal regression with binary outputs satisfying $y_1\ge \dots \ge y_d$. 
The output space is $\Ycal = \{\zeros, \e_1, \e_1+\e_2, \dots, \e_1+\dots+\e_d\}$. 
Since $\norm{\y - \y'}_2 \ge 1$ for any distinct $\y, \y' \in \Ycal$, $\nu = 1$ holds. 
The target loss considered in \citet{Blondel2019-gd} is the absolute loss $\norm{\y' - \y}_1$. 
Scaling it to $[0, 1]$, we can express it on $\conv(\Ycal)$ for any $\y \in \Ycal$ in the same way as the Hamming loss in \cref{subsec:example}: 
$\LT{\y'}{\y} = \frac{1}{d}\prn*{\inpr{\y', \ones} + \inpr{\y, \ones} - 2\inpr{\y', \y}}$, which is affine in $\y'$ and upper bounded by $\frac{1}{\sqrt{d}}\norm{\y' - \y}_2$, hence $\gamma = \frac{1}{\sqrt{d}}$. 
Thus, $\lambda > \frac{4\gamma}{\nu}$ holds for $d > 16$, as with the case of multilabel classification.

Since $\Ycal \subseteq \set*{0, 1}^d$ holds in this case, an efficient implementation of randomized decoding is available, as discussed in \cref{subsec:implementation}.

\paragraph{Other potential applications.}
Due to the generality of our randomized decoding, we expect that our method is potentially useful for learning tasks involving general linear optimization problems (\citealp{Elmachtoub2022-og}; \citealp{Hu2022-yq}).
Application of Fenchel--Young losses to similar tasks is studied in \citet{Berthet2020-wm}. 
We leave the investigation of this direction as future work.

\section{Lower Bound on Constant Factor in \texorpdfstring{\cref{lem:expected_target_bound}}{Lemma~\ref{lem:expected_target_bound}}}\label{asec:tightness_of_lemma}
\Cref{lem:expected_target_bound} ensures that for any $(\thb, \y) \in \R^d\times\Ycal$, the randomized decoding $\psi_\Omega$ (\cref{alg:decoding}) achieves $\E[\LT{\psi_\Omega(\thb)}{\y}] \le c \cdot \frac{\gamma}{\lambda\nu} \LSOmega{\thb}{\y}$ for $c = 4$. 
We show that the multiplicative constant $c$ cannot be smaller than $2/\ln2 \approx 2.89$ in general, suggesting \cref{lem:expected_target_bound} is nearly tight.
To see this, we again use the binary classification example in \cref{sec:randomizedq_decoding}.
Let $\Ycal = \set{\e_1, \e_2}$, $\y = \e_1$ be the ground truth, and $\thb = (\theta_1, \theta_2) = (1, 1+\ln(2^{1+\varepsilon} - 1))$ for sufficiently small $\varepsilon > 0$, which slightly favors $\e_2$ by mistake. 
As explained in \cref{subsec:example}, if $\norm{\cdot}$ is the $\ell_1$-norm, we have $\gamma = 1/2$ and $\nu = 2$. 
Also, as detailed in \cref{section:multiclass}, if $\Omega$ is the entropic regularizer given in~\eqref{eq:entropy-omega}, we have $\lambda = 1$, and the regularized prediction is given by the softmax function, i.e., $\yhat(\thb) = (1, \exp(\theta_2 - \theta_1)) / (1 + \exp(\theta_2 - \theta_1))$. 
The resulting Fenchel--Young loss is the base-$\mathrm{e}$ logistic loss, hence $\LSOmega{\thb}{\e_1} = \ln(1+\exp(\theta_2 - \theta_1))$. 
Substituting the above $\thb$ into $\yhat(\thb)$ and $\LSOmega{\thb}{\e_1}$, we obtain
\[
  \yhat(\thb) = \prn*{\frac{1}{2^{1+\varepsilon}}, 1 - \frac{1}{2^{1+\varepsilon}}} 
  \quad
  \text{and}
  \quad
  \LSOmega{\thb}{\e_1} = (1+\varepsilon) \ln2. 
\]
We then calculate the expected 0-1 loss. 
Since the closest point to $\yhat(\thb)$ in $\Ycal$ is $\ynep = \e_2$, it holds that $\Delta^* = \norm{\e_2 - \yhat(\thb)}_1 = 1/2^{\varepsilon}$, and hence $p = \min\set*{1, 2\Delta^*/\nu} = 1/2^\varepsilon$ for $\varepsilon > 0$. 
Recall that the randomized decoding returns $\y^* = \e_2$ with probability $1 - p$, or a random $\yrnd$ with probability $p$. 
Since $\yrnd$ is drawn from $\Ycal$ to satisfy $\E\brc*{\yrnd\,|\,Z=1} = \yhat(\thb)$, we have $\yrnd = \e_1$ with probability $1/2^{1+\varepsilon}$ and $\e_2$ with probability $1 - 1/2^{1+\varepsilon}$. 
Therefore, the expected 0-1 loss is 
\begin{align}
  \E[\LT{\psi_\Omega(\thb)}{\y}] 
  &= (1 - p)\LT{\e_2}{\e_1} + p\E[\LT{\yrnd}{\e_1}\,|\,Z=1]
  \\
  &= 
  \prn*{1 - \frac{1}{2^\varepsilon}} \cdot 1 
  + 
  \frac{1}{2^\varepsilon} \cdot 
  \prn*{\frac{1}{2^{1+\varepsilon}} \cdot 0 + \prn*{1 - \frac{1}{2^{1+\varepsilon}}} \cdot 1} 
  = 1 - \frac{1}{2^{1+2\varepsilon}}.
\end{align}
To ensure that \cref{lem:expected_target_bound} holds in the above setting, we need 
\[
  c \ge \frac{\lambda\nu}{\gamma} \cdot \frac{\E[\LT{\psi_\Omega(\thb)}{\y}]}{\LSOmega{\thb}{\y}} = 4 \cdot \frac{1 - \frac{1}{2^{1+2\varepsilon}}}{(1+\varepsilon) \ln2}.
\]
The right-hand side converges to $2/\ln2$ as $\varepsilon \to +0$, implying $c \ge 2/\ln2 \approx 2.89$.

\section{Bounding Surrogate Regret with Other OCO Algorithms}\label{asec:other-oco-algorithms}
While we have obtained finite surrogate regret bounds using OGD with the constant learning rate, it may not perform well in practice since it does not utilize information from past rounds. 
Below, we demonstrate that we can achieve finite surrogate regret bounds with more practical OCO algorithms.
Specifically, \cref{asubsec:ogd-adaptive,asec:parameter_free} discuss using OGD with an adaptive learning rate and a parameter-free OCO algorithm, respectively.
In terms of theoretical guarantees, however, the results presented below are weaker than those obtained in the main text by using OGD with the constant learning rate: 
the result in \cref{asubsec:ogd-adaptive} is not parameter-free, and the surrogate regret bound in \cref{asec:parameter_free} is asymptotically larger by a logarithmic factor and does not vanish when $a \to 1$.

\subsection{OGD with Adaptive Learning Rate}\label{asubsec:ogd-adaptive}
Let $a = 1 - \frac{4\gamma}{\lambda\nu}$ and $b = \frac{2\CC\kappa^2}{\lambda}$, as in the proof of \cref{thm:expected_regret_general}.
Assume that a domain $\Wcal$ with an $\ell_2$-diameter of $B > 0$ is given. 
We consider using the online gradient descent on $\Wcal$ with an adaptive learning rate as $\mathsf{Alg}$, where the learning rate in the $t$th round is set to ${B}/{\sqrt{2\sum_{i=1}^T \norm{\nabla\S_i(\W_i)}_\mathrm{F}^2}}$ (\citealp{McMahan2010-un}; \citealp{Duchi2011-kr}). 
Due to $\norm{\nabla\S_t(\W_t)}_\mathrm{F}^2 \le b\S_t(\W_t)$, this OGD achieves the following $L^\star$ bound for any $\U \in \Wcal$ \citep[Theorem~4.25]{orabona2023modern}: 
\begin{equation}\label{eq:ogd-regret}
  \sum_{t=1}^T (\S_t(\W_t) - \S_t(\U)) \le 2bB^2 + 2B\sqrt{2b\sum_{t=1}^T\S_t(\U)}.
\end{equation}
If we learn $\W_t$ with this OGD, we can achieve the following expected surrogate regret bound.

\begin{theorem}\label{thm:expected_regret_adaptive_ogd}
  Let $\psi_\Omega$ be the randomized decoding given in \cref{alg:decoding} and $C > 0$ a constant such that $\max_{t\in[T]}\norm{\x_t}_2 \le C$. 
  If $\mathsf{Alg}$ satisfies \eqref{eq:ogd-regret} and $\lambda > \frac{4\gamma}{\nu}$ holds, for any $\U \in \Wcal$, it holds that 
  \begin{align}
  \sum_{t=1}^T \E_t[L_t(\yprd_t)] 
  \le 
  \sum_{t=1}^T \S_t(\U) + 
  \prn*{
    1 - \frac{4\gamma}{\lambda\nu}
  }^{-1}
  \frac{16\gamma\CC\kappa^2B^2}{\lambda^2\nu}
  \end{align}
\end{theorem}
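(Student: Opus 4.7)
The plan is to reuse the exploit-the-surrogate-gap template underlying \cref{prop:surrogate_gap} and \cref{thm:expected_regret_general}, but to feed in the $L^\star$ regret bound \eqref{eq:ogd-regret} of the adaptive OGD in place of the constant-step-size one. First, as in the proof of \cref{thm:expected_regret_general}, set $a = 1 - \frac{4\gamma}{\lambda\nu} \in (0,1)$ and $b = \frac{2\CC\kappa^2}{\lambda}$. \Cref{lem:expected_target_bound} then yields $\E_t[L_t(\yprd_t)] \le (1-a)\S_t(\W_t)$, and \cref{prop:fyloss_properties} together with $\norm{\x_t}_2 \le C$ and $\norm{\cdot}_2 \le \kappa\norm{\cdot}$ yields $\norm{\nabla\S_t(\W_t)}_\mathrm{F}^2 \le b\S_t(\W_t)$, so \eqref{eq:ogd-regret} applies.

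Next, summing the target-loss inequality over $t$ and plugging in \eqref{eq:ogd-regret} gives
\begin{equation}
\sum_{t=1}^T \E_t[L_t(\yprd_t)] \le (1-a)\sum_{t=1}^T \S_t(\U) + (1-a)\Bigl(2bB^2 + 2B\sqrt{2b S}\Bigr),
\end{equation}
where $S \coloneqq \sum_{t=1}^T \S_t(\U)$. Subtracting $S$ from both sides turns the surrogate regret into
\begin{equation}
\sum_{t=1}^T \E_t[L_t(\yprd_t)] - S \le -aS + (1-a)\bigl(2bB^2 + 2B\sqrt{2b S}\bigr).
\end{equation}

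The crux is to absorb the $\sqrt{S}$ term into the cumulative surrogate gap $aS$ using the elementary inequality $2\sqrt{uv} \le u/\beta + \beta v$ for $\beta > 0$. Taking $u = 2bS$, $v = B^2$, and $\beta = 2b(1-a)/a$ gives $(1-a)\cdot 2B\sqrt{2bS} \le aS + \frac{2b(1-a)^2 B^2}{a}$, so the $-aS$ cancels and we obtain
\begin{equation}
\sum_{t=1}^T \E_t[L_t(\yprd_t)] - S \le (1-a)\,2bB^2 + \frac{2b(1-a)^2 B^2}{a} = \frac{2b(1-a) B^2}{a}.
\end{equation}
Substituting $1 - a = \frac{4\gamma}{\lambda\nu}$ and $b = \frac{2\CC\kappa^2}{\lambda}$ turns $\frac{2b(1-a)B^2}{a}$ into $(1 - \frac{4\gamma}{\lambda\nu})^{-1}\frac{16\gamma\CC\kappa^2 B^2}{\lambda^2 \nu}$, matching the claim.

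The main obstacle I anticipate is simply choosing the AM-GM splitting parameter $\beta$ so that the induced $\alpha S$ term matches the surrogate-gap term $aS$ exactly; once this balance is set, the rest is bookkeeping. All the problem-specific input (the value $a$ coming from the randomized decoding via \cref{lem:expected_target_bound}, and the smoothness-like constant $b$ coming from the Fenchel--Young structure via \cref{prop:fyloss_properties}) is already available, so no further problem-specific argument is needed; the $L^\star$ shape of \eqref{eq:ogd-regret} is what makes the offset work without requiring $\eta$ to be set in terms of $a$ as in \cref{thm:expected_regret_general}.
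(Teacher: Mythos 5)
Your proposal is correct and follows essentially the same route as the paper's proof: decompose the surrogate regret as $(1-a)\sum_t(\S_t(\W_t)-\S_t(\U)) - a\sum_t \S_t(\U)$, plug in the $L^\star$ bound \eqref{eq:ogd-regret}, and absorb the $\sqrt{S}$ term into the surrogate gap $aS$. Your AM--GM splitting $2\sqrt{uv}\le u/\beta+\beta v$ with $\beta=2b(1-a)/a$ is just the paper's inequality $\sqrt{c_1 x}-c_2 x\le \frac{c_1}{4c_2}$ in a different guise, and both yield the identical constant $\frac{2b(1-a)B^2}{a}$.
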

\begin{proof}
  Since we have $\E_t[L_t(\yprd_t)] \le (1 - a)\S_t(\W_t)$ due to \cref{lem:expected_target_bound}, it holds that 
  \[
    \sum_{t=1}^T \E_t[L_t(\yprd_t)] - \sum_{t=1}^T \S_t(\U)
    \le 
    (1-a)\sum_{t=1}^T \prn*{
        \S_t(\W_t) - \S_t(\U)
      }
      - a\sum_{t=1}^T \S_t(\U).
  \]
  Substituting \eqref{eq:ogd-regret} into the right-hand side implies that the surrogate regret is at most 
  \[
    \begin{aligned}
      2(1-a)bB^2 + 2(1-a)B\sqrt{2b\sum_{t=1}^T\S_t(\U)} - a\sum_{t=1}^T \S_t(\U)
      &\le
      2(1-a)bB^2 + \frac{2(1-a)^2bB^2}{a}
      \\
      &=
      \frac{2(1-a)bB^2}{a},
    \end{aligned}
  \]
   where the inequality comes from $\sqrt{c_1 x} - c_2 x \le \frac{c_1}{4c_2}$ ($\forall x > 0$) for $c_1, c_2 > 0$.
   The desired surrogate regret bound follows from $a = 1 - \frac{4\gamma}{\lambda\nu}$ and $b = \frac{2\CC\kappa^2}{\lambda}$.
\end{proof}
Similar to the bound in \cref{prop:surrogate_gap}, it vanishes when $a \to 1$.
However, as described above, tuning the learning rate requires the knowledge of the domain size, $B$, hence no longer parameter-free.

\subsection{Parameter-Free Algorithm}\label{asec:parameter_free}
In the previous section, we have assumed that the $\ell_2$-diameter $B$ of the domain $\Wcal$ is known a priori.
In practice, however, we rarely know the precise size of $\Wcal$ containing the best estimator $\U$ in hindsight. 
A common workaround is to set $B$ to a sufficiently large value, but this typically results in overly pessimistic regret bounds. 
\emph{Parameter-free} algorithms 
(\citealp{Mcmahan2012-tf}; 
\citealp{Orabona2013-ng}; 
\citealp{Cutkosky2018-bf}) 
are designed to avoid this issue by automatically adapting to the norm of $\U$.
While OGD with the constant learning rate is also parameter-free in our case, parameter-free algorithms studied in this context would be more practical.

We consider using a parameter-free algorithm of \citet[Section~3]{Cutkosky2018-bf} as $\mathsf{Alg}$ instead of OGD. 
Their algorithm enjoys a regret bound depending on $g_{1:T} \coloneqq \sum_{t=1}^T \norm{\nabla\S_t(\W_t)}_\mathrm{F}^2$, which is helpful in the subsequent analysis. 
Specifically, the bound on $\sum_{t=1}^T (\S_t(\W_t) - \S_t(\U))$ is 
\begin{equation}
  \begin{aligned}
  O \prn[\Bigg]{
  \norm{\U}_\mathrm{F} 
  \max \Bigg\{
  \sqrt{
    g_{1:T} \,
    \ln \left( \frac{\norm{\U}_\mathrm{F}^2 g_{1:T}}{\epsilon^2} + 1 \right)
  }
  ,\,
  \ln \frac{\norm{\U}_\mathrm{F} g_{1:T} }{\epsilon}
  \Bigg\}
  +
  \norm{\U}_\mathrm{F} \sqrt{ g_{1:T} }
  + 
  \epsilon
  }
  ,
  \end{aligned}  
\end{equation}
where $\epsilon > 0$ is an \emph{initial-wealth} parameter specified by the user. 
We let $\epsilon = 1$ for simplicity. 
Then, omitting lower-order terms, the regret bound of $\mathsf{Alg}$ reduces to 
\begin{equation}\label{eq:parameterfree-regret}
  \begin{aligned}
  O \prn[\Bigg]{
  \norm{\U}_\mathrm{F} 
  \sqrt{
    g_{1:T} \,
    \ln \left( \norm{\U}_\mathrm{F}^2 g_{1:T} + 1 \right)
  }
  }
  .
  \end{aligned}  
\end{equation}
Compared with the standard regret bound of OGD, the dependence on $g_{1:T}$ is worse by a factor of $\sqrt{\ln g_{1:T} }$, which is inevitable in parameter-free learning (\citealp{Mcmahan2012-tf}; \citealp{Orabona2013-ng}).
This makes the analysis for exploiting the surrogate gap a bit trickier.
The next lemma offers a helpful inequality for exploiting the surrogate gap with the parameter-free regret bound~\eqref{eq:parameterfree-regret}.
\begin{lemma}\label[lemma]{lem:general_sb_direct}
  Let $a, b, c > 0$. For any $x \ge 0$, it holds that 
  \begin{equation}
    - ax + \sqrt{bx \ln(cx+1)} 
    \leq
    \frac{b}{2a}
    \prn*{
      \ln \prn*{ \frac{b}{2a^2} + 1}
      + 
      \ln (c + 1)
    }
    + a
    .
    \label{eq:general_sb_direct}
  \end{equation}
\end{lemma}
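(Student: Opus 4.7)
The plan is to peel off the square root with a weighted AM--GM inequality and then handle the residual logarithmic expression via the concavity of $\ln$. First I would apply $\sqrt{PQ}\le(P+Q)/2$ with $P=ax$ and $Q=b\ln(cx+1)/a$, so that $PQ=bx\ln(cx+1)$, to obtain
\[
  \sqrt{bx\ln(cx+1)}\le\frac{ax}{2}+\frac{b\ln(cx+1)}{2a}.
\]
Only half of the $ax$ budget is consumed by this step, which leaves $-\tfrac{ax}{2}+\tfrac{b}{2a}\ln(cx+1)$ to control; this residual negative linear term is what will eventually absorb the logarithm.

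Next I would factor the logarithm using the trivial bound $cx+1\le(c+1)(x+1)$, which yields $\ln(cx+1)\le\ln(c+1)+\ln(x+1)$. The $\ln(c+1)$ contribution exactly matches the $\tfrac{b}{2a}\ln(c+1)$ term in the claimed right-hand side, so it remains to establish
\[
  -\frac{ax}{2}+\frac{b}{2a}\ln(x+1)\le\frac{b}{2a}\ln\!\left(\frac{b}{2a^2}+1\right)+a
\]
for every $x\ge 0$. For this I would invoke the tangent inequality for the concave function $\ln$ at the base point $y+1=b/a^2$, giving $\ln(x+1)\le\ln(b/a^2)+\tfrac{a^2}{b}(x-\tfrac{b}{a^2}+1)$; this is legitimate for any $b>0$ since $y+1>0$, with no case split on whether $b\ge a^2$. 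Multiplying through by $b/(2a)$ makes the coefficient of $x$ equal to $a/2$, which exactly cancels $-ax/2$, and what remains is the constant $\tfrac{b}{2a}(\ln(b/a^2)-1)+\tfrac{a}{2}$.

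The final step is a purely numerical comparison against the claimed bound, which after rearrangement reduces to $\tfrac{b}{2a}\ln\!\tfrac{2b}{e(b+2a^2)}\le\tfrac{a}{2}$; this holds because $\tfrac{2b}{e(b+2a^2)}\le\tfrac{2}{e}<1$ forces the logarithm to be negative, so the left-hand side is in fact nonpositive. The main subtlety is the choice of tangent base point $y=b/a^2-1$: this is precisely the unconstrained maximiser of $x\mapsto-\tfrac{ax}{2}+\tfrac{b}{2a}\ln(x+1)$, so the tangent linearisation is tight there, and the global validity of the tangent inequality for concave functions lets us avoid any casework. The rest is routine algebra.
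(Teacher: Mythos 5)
Your proof is correct, and it takes a genuinely different route from the paper's. The paper treats $f(x) = -ax + \sqrt{bx\ln(cx+1)}$ by a critical-point analysis: it uses $f(x)\to-\infty$, extracts the stationarity relation $\sqrt{bx^*\ln(cx^*+1)} = \tfrac{b}{2a}\bigl(\tfrac{cx^*}{cx^*+1}+\ln(cx^*+1)\bigr)$, and then finishes with a case split on $c\le 1$ versus $c>1$ plus a further "inspect the derivative" bound involving $\max\{\ln(b/(2a^2)),0\}$. You instead avoid calculus on $f$ entirely: AM--GM with the weighting $P=ax$, $Q=b\ln(cx+1)/a$ removes the square root at the cost of only half the linear budget, the factorization $cx+1\le(c+1)(x+1)$ cleanly peels off the $\tfrac{b}{2a}\ln(c+1)$ term (this plays the role of the paper's case split on $c$, but without cases), and the tangent-line bound for $\ln$ at $u_0=b/a^2$ cancels the remaining $-ax/2$ exactly, leaving a constant; the final comparison $\tfrac{b}{2a}\ln\tfrac{2b}{e(b+2a^2)}\le\tfrac{a}{2}$ is immediate since the logarithm is negative. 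I checked the algebra: the AM--GM weights are nonnegative, the tangent inequality is global so no case split on $b$ versus $a^2$ is needed, and your intermediate constant $\tfrac{b}{2a}(\ln(b/a^2)-1)+\tfrac{a}{2}$ is indeed dominated by the stated right-hand side (in fact your argument yields a slightly sharper bound). What your approach buys is a shorter, fully elementary, case-free derivation; what the paper's buys is that the critical-point computation makes transparent where the extremal $x$ sits and how the bound degrades with $c$, at the cost of two case analyses and an unspelled-out derivative inspection.
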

\begin{proof}
  Let $f(x) = - ax + \sqrt{bx \ln(cx+1)}$.
  Since $f$ is continuous and goes to $-\infty$ as $x \to \infty$, it suffices to show that $f$ is bounded as in the lemma statement at any critical point $x^*$.
  The derivative of $f$ is
  $f'(x) = - a + \frac{\sqrt{b}}{2} \frac{\frac{cx}{cx+1} + \ln(cx+1)}{\sqrt{x \ln(cx+1)}}$, and hence any critical point $x^*$ satisfies
  $
  \sqrt{bx^* \ln(cx^*+1)}
  =
  \frac{b}{2a}
  \prn*{
    \frac{cx^*}{cx^*+1} + \ln(cx^*+1)
  }.
  $
  Thus, we have
  \begin{align}\label{eq:fstar}
    f(x^*)
    =
    - a x^*
    +
    \frac{b}{2a}
    \prn*{
      \frac{cx^*}{cx^*+1} + \ln(cx^*+1)
    }
    \leq 
    - a x^*
    +
    \frac{b}{2a}
    +
    \frac{b}{2a} \ln(cx^*+1).
  \end{align}
  If $c \le 1$, the right-hand side of \eqref{eq:fstar} is at most 
  $- a x^*
  +
  \frac{b}{2a}
  +
  \frac{b}{2a} \ln(x^*+1)$, and inspecting its derivative with respect to $x^*$ readily shows that the following inequality holds for any $x^* \ge 0$:
  \begin{equation}\label{eq:lin-log-ineq}
    - a x^*
    +
    \frac{b}{2a}
    +
    \frac{b}{2a} \ln(x^*+1)
    \le 
    \frac{b}{2a}\max\set*{\ln \prn*{ \frac{b}{2a^2} }, 0} + a.
  \end{equation}
  If $c > 1$, the right-hand side of \eqref{eq:fstar} is at most 
  $- a x^*
  +
  \frac{b}{2a}
  +
  \frac{b}{2a} \ln(x^*+1)
  + 
  \frac{b}{2a} \ln c
  $, 
  and the sum of the first three terms is bounded as in \eqref{eq:lin-log-ineq}.
  Thus, $f(x^*) \le \frac{b}{2a}\max\set*{\ln \prn*{ \frac{b}{2a^2} }, 0} + \frac{b}{2a}\max\set*{\ln \prn*{ c}, 0} + a$ holds in any case. 
  By using $\max\set{\ln z, 0} \le \ln(z + 1)$ for $z > 0$, we obtain the desired bound.  
\end{proof}
Now, we are ready to obtain a finite surrogate regret bound with the parameter-free algorithm.
\begin{theorem}\label[theorem]{thm:expected_regret_general_parameter_free}
  Suppose $\psi_\Omega$ and $C$ to be given as in \cref{thm:expected_regret_general}. 
  Fix any $\U\in\Wcal$, where $\Wcal$ is some domain.
  Let $\mathsf{Alg}$ be the parameter-free algorithm that achieves the regret bound given in~\eqref{eq:parameterfree-regret} without knowing the size of $\Wcal$ (or $\norm{\U}_\mathrm{F}$). 
  If $\lambda > \frac{4\gamma}{\nu}$, it holds that 
  \begin{equation}
  \sum_{t=1}^T \E_t[L_t(\yprd_t)] 
  = 
  \sum_{t=1}^T \S_t(\U) 
  +
  O\prn*{
    \frac{\norm{\U}_\mathrm{F}^2 \CC \kappa^2}{ \lambda - {4\gamma}/{\nu}}
      \ln\prn*{
      \frac{\norm{\U}_\mathrm{F}^2 C^2 \kappa^2\lambda}{\prn*{\lambda - {4\gamma}/{\nu}}^2}
      + 1
      }
  }
  .
  \end{equation}
\end{theorem}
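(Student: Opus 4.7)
The approach is to combine the exploit-the-surrogate-gap argument underlying \cref{prop:surrogate_gap} with the parameter-free regret bound \eqref{eq:parameterfree-regret}, using \cref{lem:general_sb_direct} to resolve the implicit appearance of $g_{1:T} \coloneqq \sum_{t=1}^T \norm{\nabla\S_t(\W_t)}_\mathrm{F}^2$ in that bound. As in the proof of \cref{thm:expected_regret_general}, set $a = 1 - 4\gamma/(\lambda\nu)$, which lies in $(0,1)$ by the assumption $\lambda > 4\gamma/\nu$, and $b = 2C^2\kappa^2/\lambda$. \cref{lem:expected_target_bound} and \cref{prop:fyloss_properties} then give the two key inequalities $\E_t[L_t(\yprd_t)] \le (1-a)\S_t(\W_t)$ and $\norm{\nabla\S_t(\W_t)}_\mathrm{F}^2 \le b\,\S_t(\W_t)$; summing the latter yields $g_{1:T} \le b\sum_{t=1}^T \S_t(\W_t)$.

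Rewriting the expected surrogate regret with the first inequality and then discarding a nonnegative multiple of $\sum_t \S_t(\U)$, one finds
\begin{align*}
  \sum_{t=1}^T \E_t[L_t(\yprd_t)] - \sum_{t=1}^T \S_t(\U)
  &\le \sum_{t=1}^T (\S_t(\W_t) - \S_t(\U)) - a \sum_{t=1}^T \S_t(\W_t) \\
  &\le R_{\mathsf{Alg}} - \tfrac{a}{b}\,g_{1:T},
\end{align*}
where $R_{\mathsf{Alg}}$ denotes the OCO regret of $\mathsf{Alg}$ against $\U$, and the second step uses $\sum_t \S_t(\W_t) \ge g_{1:T}/b$. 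Plugging the parameter-free bound \eqref{eq:parameterfree-regret} into $R_{\mathsf{Alg}}$, the dominant contribution to the right-hand side is $O\bigl(\norm{\U}_\mathrm{F}\sqrt{g_{1:T}\ln(\norm{\U}_\mathrm{F}^2 g_{1:T} + 1)}\bigr) - (a/b)\,g_{1:T}$, which is exactly the template that \cref{lem:general_sb_direct} is designed to control.

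Applying \cref{lem:general_sb_direct} with $x = g_{1:T}$, linear coefficient $a/b$, square-root coefficient of order $\norm{\U}_\mathrm{F}^2$, and logarithmic constant $\norm{\U}_\mathrm{F}^2$, we obtain an upper bound of order $\frac{b\,\norm{\U}_\mathrm{F}^2}{a}\ln\!\bigl(\frac{b^2\norm{\U}_\mathrm{F}^2}{a^2} + \norm{\U}_\mathrm{F}^2 + 1\bigr)$ plus a negligible additive $O(a/b)$ term. Substituting $b/a = \Theta\bigl(C^2\kappa^2/(\lambda - 4\gamma/\nu)\bigr)$ and the analogous expression for $b^2/a^2$ then yields the claimed rate.

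The main obstacle is step four: the parameter-free bound \eqref{eq:parameterfree-regret} depends implicitly on $g_{1:T}$, so naively substituting $g_{1:T} \le b\sum_t \S_t(\W_t)$ inside the square root would produce a circular inequality in $R_{\mathsf{Alg}}$ that does not close. \cref{lem:general_sb_direct} circumvents this by netting the sublinear $\sqrt{g_{1:T}}$ contribution of $R_{\mathsf{Alg}}$ against the linear $-(a/b)\,g_{1:T}$ term produced by the surrogate gap; crucially, this netting is carried out without any a priori knowledge of $\norm{\U}_\mathrm{F}$, which is what preserves the parameter-free nature of $\mathsf{Alg}$. Two minor technicalities are that the $\max$ in \eqref{eq:parameterfree-regret} must be handled (its logarithmic branch is dominated once the square-root branch is controlled and can be absorbed into the $O(\cdot)$), and that $\mathsf{Alg}$ may be viewed as deterministic given the past $(\x_s, \y_s)$, so the OCO regret bound applies on every sample path, allowing $\E_t$ to range only over the randomness of $\psi_\Omega$.
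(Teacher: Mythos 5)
Your overall route is the same as the paper's: decompose the surrogate regret as $\sum_t(\E_t[L_t(\yprd_t)]-\S_t(\W_t))+\sum_t(\S_t(\W_t)-\S_t(\U))$, use \cref{lem:expected_target_bound} and \cref{prop:fyloss_properties} to get the surrogate-gap and gradient inequalities, and close the self-referential dependence on $g_{1:T}$ in \eqref{eq:parameterfree-regret} via \cref{lem:general_sb_direct}. The one substantive deviation --- weakening $-a\sum_t\S_t(\W_t)$ to $-(a/b)\,g_{1:T}$ and invoking \cref{lem:general_sb_direct} in the variable $x=g_{1:T}$ with linear coefficient $a/b$ --- is exactly where your argument falls short of the stated bound. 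With that parameterization the lemma's additive term is $a/b=\frac{\lambda-4\gamma/\nu}{2C^2\kappa^2}$, and your claim that it is negligible is false in general: it grows without bound when $\lambda\gg C^2\kappa^2$ (a regime the paper explicitly allows, since one may scale up $\Psi$ to enforce $\lambda>4\gamma/\nu$), while the claimed bound shrinks there. Moreover, your logarithm arguments come out as $\norm{\U}_\mathrm{F}^2C^4\kappa^4/(\lambda-4\gamma/\nu)^2$ and $\norm{\U}_\mathrm{F}^2$, neither of which is dominated, uniformly in $C,\kappa,\lambda,\gamma,\nu,\norm{\U}_\mathrm{F}$, by the claimed argument $\norm{\U}_\mathrm{F}^2C^2\kappa^2\lambda/(\lambda-4\gamma/\nu)^2$ (the big-O in the theorem hides only absolute constants such as the constant $c_\mathsf{Alg}$ in \eqref{eq:parameterfree-regret}). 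So as written you establish a finite surrogate regret bound, but a strictly weaker one than the theorem in some parameter regimes, and the precise $\lambda$-dependence is one of the points the paper cares about.

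The fix is the paper's parameterization of the same lemma: keep the surrogate-gap term as $-ax$ with $x=\sum_t\S_t(\W_t)$, substitute $g_{1:T}\le\frac{2C^2\kappa^2}{\lambda}x$ inside the square root and the logarithm of \eqref{eq:parameterfree-regret}, and apply \cref{lem:general_sb_direct} with linear coefficient $a$, square-root coefficient $c_\mathsf{Alg}^2\norm{\U}_\mathrm{F}^2\frac{2C^2\kappa^2}{\lambda}$, and logarithmic constant $\norm{\U}_\mathrm{F}^2\frac{2C^2\kappa^2}{\lambda}$. Then the additive term is $a<1$ (an absolute constant), and the two resulting log terms, $\ln\prn*{\frac{\norm{\U}_\mathrm{F}^2C^2\kappa^2}{\lambda a^2}+1}$ and $\ln\prn*{\frac{\norm{\U}_\mathrm{F}^2C^2\kappa^2}{\lambda}+1}$, combine (using $a<1$) into the claimed $\ln\prn*{\frac{\norm{\U}_\mathrm{F}^2C^2\kappa^2\lambda}{(\lambda-4\gamma/\nu)^2}+1}$. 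The rest of your argument --- the decomposition, the direction of the inequalities, the handling of the max branch in \eqref{eq:parameterfree-regret}, and the pathwise application of the OCO regret bound with $\E_t$ ranging only over the decoding randomness --- is sound.
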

\begin{proof} 
Let $a = 1 - \frac{4\gamma}{\lambda\nu} \in (0, 1)$ and $x = \sum_{t=1}^T \S_t(\W_t)$.
As in the proof of \Cref{thm:expected_regret_general}, we have
\begin{equation}\label{eq:surr_gap}
  \sum_{t=1}^T \prn*{\E_t[L_t(\yprd_t)] - \S_t(\W_t)} \le - ax
\end{equation}
and
\begin{equation}\label{eq:grad2surr}
  g_{1:T} = \sum_{t=1}^T \norm{\nabla\S_t(\W_t)}_\mathrm{F}^2 \le \frac{2\CC\kappa^2}{\lambda}x
  .
\end{equation} 
Substituting \eqref{eq:grad2surr} into the regret bound \eqref{eq:parameterfree-regret} and using \eqref{eq:surr_gap}, we can bound the surrogate regret as follows: 
\begin{equation}\label{eq:last_parameterfree}
  \begin{aligned}
  \sum_{t=1}^T \E_t[L_t(\yprd_t)] 
  -
  \sum_{t=1}^T \S_t(\U) 
  ={}&
  \sum_{t=1}^T \prn*{\E_t[L_t(\yprd_t)] - \S_t(\W_t)} + \sum_{t=1}^T \prn*{\S_t(\W_t) - \S_t(\U)} 
  \\
  \le{}&
  - ax
  +
  c_\mathsf{Alg}
  \norm{\U}_\mathrm{F} 
  \sqrt{
    \frac{2\CC\kappa^2}{\lambda}x \,
    \ln \left( \norm{\U}_\mathrm{F}^2 \frac{2\CC\kappa^2}{\lambda}x + 1 \right)
  }
  ,
  \end{aligned}  
\end{equation}
where $c_\mathsf{Alg} > 0$ is an absolute constant hidden in the big-O notation in \eqref{eq:parameterfree-regret}.
By using \cref{lem:general_sb_direct} with $a = 1 - \frac{4\gamma}{\lambda\nu}$, $b = c_\mathsf{Alg}^2 \norm{\U}_\mathrm{F}^2\frac{2\CC\kappa^2}{\lambda}$, and $c = \norm{\U}_\mathrm{F}^2 \frac{2\CC\kappa^2}{\lambda}$, we can upper bound the surrogate regret by
\begin{align}
  &\frac{b}{2a}
    \prn*{
      \ln \prn*{ \frac{b}{2a^2} + 1}
      + 
      \ln (c + 1)
    }
    + a
  \\
  ={}&
  O\prn*{
    \frac{\norm{\U}_\mathrm{F}^2 \CC \kappa^2}{ \lambda - \frac{4 \gamma}{\nu}}
    \prn*{
      \ln\prn*{
      \frac{\norm{\U}_\mathrm{F}^2 C^2 \kappa^2}{\lambda\prn*{1 - \frac{4\gamma}{\lambda\nu}}^2}
      + 1
      }
      +
      \ln\prn*{
      \frac{\norm{\U}_\mathrm{F}^2 C^2 \kappa^2}{\lambda}
      + 1
      }
    }
  }
  \\
  \lesssim{}&
  O\prn*{
    \frac{\norm{\U}_\mathrm{F}^2 \CC \kappa^2}{ \lambda - \frac{4 \gamma}{\nu}}
      \ln\prn*{
      \frac{\norm{\U}_\mathrm{F}^2 C^2 \kappa^2\lambda}{\prn*{\lambda - \frac{4 \gamma}{\nu}}^2}
      + 1
      }
  }
  ,
\end{align}
where the asymptotic inequality is due to $1 - \frac{4\gamma}{\lambda\nu} < 1$.
Thus, we obtain the desired bound.
\end{proof}
Note that the bound in \cref{thm:expected_regret_general_parameter_free} depends on $\norm{\U}_\mathrm{F}$ and is free from $B$, in contrast to the bound in \cref{thm:expected_regret_adaptive_ogd}. 
However, the bound does not vanish when $a = 1 - \frac{4\gamma}{\lambda\nu}$ approaches $1$.

\section{Proof of \texorpdfstring{\cref{thm:high_probability_regret_general}}{Theorem~\ref{thm:high_probability_regret_general}}}\label{asec:high_probability_regret_general}

We prove the high-probability surrogate regret bound.
Recall that $Z_t$ for $t = 1,\dots,T$ are random variables defined by $Z_t \coloneqq L_t(\yprd_t) - \E_t[L_t(\yprd_t)]$.
\begin{proof}
  Let $a = 1 - \frac{4\gamma}{\lambda\nu} \in (0, 1)$ and $b = \frac{2\CC\kappa^2}{\lambda} > 0$, as in the proof of \cref{thm:expected_regret_general}.
  We decompose the surrogate regret, $\sum_{t=1}^T L_t(\yprd_t) - \sum_{t=1}^T \S_t(\U)$, as follows: 
  \begin{equation}\label{eq:high_probability_decomposition}
    \sum_{t=1}^T \prn*{L_t(\yprd_t) - \E_t[L_t(\yprd_t)]} + \sum_{t=1}^T \prn*{\E_t[L_t(\yprd_t)] - \S_t(\U)}. 
  \end{equation}
  Let $\eta' \coloneqq \frac{\eta b}{2} = \frac{a}{2} < 1$.
  As in the proof of \cref{prop:surrogate_gap}, the second term in \eqref{eq:high_probability_decomposition} is at most
  \begin{equation}\label{eq:expected_term_bound}
    \begin{aligned}
      &
      (1-a)\prn*{1 - \frac{\eta b}{2}}^{-1}\frac{\norm{\U}_\mathrm{F}^2}{2\eta}
      -
      \prn*{
        a
        -
        (1-a)\prn*{1 - \frac{\eta b}{2}}^{-1}
        \frac{\eta b}{2}
      }      
      \sum_{t=1}^T \S_t(\U)
      \\
      ={}&
      \frac{1 - a}{1 - \eta'} \cdot \frac{b\norm{\U}_\mathrm{F}^2}{4\eta'} 
      - 
      \prn*{a - \frac{1 - a}{1 - \eta'}\cdot\eta'} \sum_{t=1}^T \S_t(\U). 
    \end{aligned}
  \end{equation}
  Below, we derive an upper bound on the first term in \eqref{eq:high_probability_decomposition}.
  Note that the first term equals $\sum_{t=1}^T Z_t$.
  Since \cref{assump:nu} implies $0 \le L_t(\y) \le \gamma D$ for any $\y \in \Ycal$, we have $|Z_t| \le \gamma D$, hence 
  $\E_t[Z_t^2] \le 
  \E_t\brc*{L_t(\yprd_t)^2} 
  \le 
  \gamma D \E_t\brc*{L_t(\yprd_t)}$. 
  Combining this with \cref{lem:expected_target_bound} yields $\E_t[Z_t^2] \le \frac{4\gamma^2D}{\lambda\nu} \S_t(\W_t) = (1 - a)\gamma D \S_t(\W_t)$. 
  By applying Bernstein's inequality (see, e.g., \citet[Lemma~A.8]{Cesa-Bianchi2006-oa}) to the bounded martingale difference sequence $Z_1,\dots,Z_T$ with bounded variance, for any $\delta \in (0, 1)$, with probability at least $1 - \delta$, it holds that
  \begin{equation}\label{eq:high_probability_deviation_bound}
  \sum_{t=1}^T Z_t 
  \le 
  \sqrt{2(1 - a)\gamma D \sum_{t=1}^T\S_t(\W_t)\ln\frac{1}{\delta}} + \frac{\sqrt{2}}{3}\gamma D\ln\frac{1}{\delta}
  =
  \sqrt{2\zeta(1 - a) \sum_{t=1}^T\S_t(\W_t)} + \frac{\sqrt{2}}{3}\zeta,
  \end{equation}
  where we let $\zeta = \gamma D \ln\frac{1}{\delta}$ for simplicity.
  Also, the $L^\star$ bound in the proof of \cref{prop:surrogate_gap} implies 
  \[
    \sum_{t=1}^T \S_t(\W_t)
    \le 
    \prn*{1 - \frac{\eta b}{2}}^{-1}
    \prn*{
    \frac{\norm{\U}_\mathrm{F}^2}{2\eta} + \sum_{t=1}^T \S_t(\U)
    }
    =
    \frac{1}{1 - \eta'}
    \cdot
    \frac{b\norm{\U}_\mathrm{F}^2}{4\eta'} + 
    \frac{1}{1 - \eta'}
    \cdot
    \sum_{t=1}^T \S_t(\U)
    .
  \]
  Substituting this into \eqref{eq:high_probability_deviation_bound} and using the subadditivity of $\sqrt{\cdot}$ imply that the first term in \eqref{eq:high_probability_decomposition} is at most 
  \begin{equation}\label{eq:deviation_term_bound}
    \sqrt{
      \frac{\zeta(1 - a)}{1 - \eta'}
      \cdot
      \frac{b\norm{\U}_\mathrm{F}^2}{2\eta'}
    }
    +
    \sqrt{
      \frac{2\zeta(1 - a)}{1 - \eta'}
      \cdot
      \sum_{t=1}^T \S_t(\U)
    }
    + \frac{\sqrt{2}}{3}\zeta.
  \end{equation}
  Therefore, the surrogate regret is bounded from above by the sum of \eqref{eq:expected_term_bound} and \eqref{eq:deviation_term_bound}.
  Since $\eta' < a$, the coefficient of the second term in \eqref{eq:expected_term_bound} satisfies 
  $a - \frac{1 - a}{1 - \eta'}\cdot \eta' > 0$.
  From $\sqrt{c_1 x} - c_2 x \le \frac{c_1}{4c_2}$ ($\forall x > 0$) for $c_1, c_2 > 0$, we can offset the middle term in \eqref{eq:deviation_term_bound} with the second term in \eqref{eq:expected_term_bound} as follows: 
  \[
    \sqrt{
      \frac{2\zeta(1 - a)}{1 - \eta'}
      \cdot
      \sum_{t=1}^T \S_t(\U)
    }
    -
      \prn*{
        a
        -
        \frac{1 - a}{1 - \eta'}
        \cdot
        \eta'
      }      
      \sum_{t=1}^T \S_t(\U)
      \le
      \frac{\zeta}{2}
      \cdot
      \frac{
        1 - a
      }{
        a - \eta'
      }
      =
      \frac{1 - a}{a}\zeta.
  \]
  By putting everything together, the surrogate regret is at most
  \[
    \frac{1 - a}{1 - \eta'} \cdot \frac{b\norm{\U}_\mathrm{F}^2}{4\eta'} 
    +
    \sqrt{
      \frac{\zeta(1 - a)}{1 - \eta'}
      \cdot
      \frac{b\norm{\U}_\mathrm{F}^2}{2\eta'}
    }
    + 
    \frac{\sqrt{2}}{3}\zeta
    +
    \frac{1 - a}{a}\zeta
    \le 
    \frac1a\prn*{
      (1-a)b\norm{\U}_\mathrm{F}^2 + \zeta
    },
  \]
  where the inequality is for simplification; 
  we applied AM--GM to the second term to upper bound it by $\frac\zeta2 + \frac{1 - a}{1 - \eta'} \cdot \frac{b\norm{\U}_\mathrm{F}^2}{4\eta'}$, used $\frac{1}{1 - \eta'} \le 2$ (since $\eta' = \frac{a}{2} < \frac12$), and simplified the resulting coefficient of $\zeta$.
  Substituting $a = 1 - \frac{4\gamma}{\lambda\nu}$, $b = \frac{2\CC\kappa^2}{\lambda}$, and $\zeta = \gamma D \ln\frac{1}{\delta}$ into it completes the proof. 
\end{proof}

\section{Missing Details of Online-to-Batch Conversion}\label{asec:online-to-batch}
We prove the offline learning guarantee via online-to-batch conversion. 
We also discuss its connection to excess risk bounds.

\subsection{Proof of \texorpdfstring{\cref{thm:online-to-batch}}{Theorem~\ref{thm:online-to-batch}}}\label{asec:online-to-batch-proof}
\begin{proof}
  Let $a = 1 - \frac{4\gamma}{\nu\lambda}$ and $b = \frac{2\CC\kappa^2}{\lambda}$, as in the proof of \cref{thm:expected_regret_general}.
  For any $(\x, \y) \in \Xcal\times\Ycal$, \cref{lem:expected_target_bound} ensures $\E[\LT{\psi_\Omega(\overline{\W}\x)}{\y}] \le (1 - a)\LSOmega{\overline{\W}\x}{\y}$, where the expectation is about the randomness of $\psi_\Omega$. 
  Let $\rho$ denote the joint distribution on $\Xcal\times\Ycal$. 
  Since $\LSOmega{\W\x}{\y}$ is convex in $\W$, Jensen's inequality implies that the expected target loss, $\E[\LT{\psi_\Omega(\overline{\W}\x)}{\y}]$, is bounded as follows:
\begin{align}
  \mathop{\E}_{\x,\y\sim\rho}\brc*{
    \E\brc*{
      \LT{\psi_\Omega(\overline{\W}\x)}{\y}
    \mid
      \x, \y
    }
  }  
  &\le 
  (1 - a)
  \mathop{\E}_{\x,\y\sim\rho}\brc*{
    \LSOmega{\overline{\W}\x}{\y}
  }
  \\
  &\le
  (1 - a)\cdot\frac{1}{T}\sum_{t=1}^T\mathop{\E}_{\x,\y\sim\rho}[\LSOmega{\W_t\x}{\y}]. 
\end{align}
Furthermore, since $\W_t$ depends only on $(\x_s, \y_s)_{s<t}$, the law of total expectation implies $\E[\S_t(\W_t)] = \E[\LSOmega{\W_t\x}{\y}]$ (see, e.g., \citet[Theorem~3.1]{orabona2023modern} for a similar discussion); also, $\E[\S_t(\U)] = \E[\LSOmega{\U\x}{\y}]$ holds for any fixed $\U \in \Wcal$.
Therefore, it holds that
\begin{align}
  &\E[\LT{\psi_\Omega(\overline{\W}\x)}{\y}] - \E[\LSOmega{\U\x}{\y}] 
  \\
  \le{}& (1 - a)\cdot\frac{1}{T}\sum_{t=1}^T\E[\LSOmega{\W_t\x}{\y}] - \frac{1}{T}\sum_{t=1}^T\E[\LSOmega{\U\x}{\y}] \\
  ={}& (1 - a)\cdot\frac{1}{T}\sum_{t=1}^T\E[\S_t(\W_t)] - \frac{1}{T}\sum_{t=1}^T\E[\S_t(\U)] \\
  ={}& \frac{1}{T}\cdot\E\brc*{
      (1 - a)\sum_{t=1}^T\prn*{\S_t(\W_t) - \S_t(\U)} - a \sum_{t=1}^T \S_t(\U)
  }
  \\
  \le{}& \frac{1}{T}\cdot 
  (1-a)\prn*{1 - \frac{\eta b}{2}}^{-1}\frac{\norm{\U}_\mathrm{F}^2}{2\eta}.
\end{align}
where the last inequality follows from the same discussion as that in the proof of \cref{prop:surrogate_gap}.
By substituting $a = 1 - \frac{4\gamma}{\nu\lambda}$ and $b = \frac{2\CC\kappa^2}{\lambda}$ into the right-hand side, we obtain the desired bound.
\end{proof}

\subsection{Connection to Excess Risk Bound}\label{asec:comparison_offline}
We derive fast convergence of the excess risk from the surrogate regret bound in \cref{thm:online-to-batch} under separability and realizability assumptions detailed below. 
The excess risk that we aim to bound is
\[
  \E[\LT{\psi_\Omega(\overline{\W}\x)}{\y}] - \E[\LT{\y_0}{\y}],
\]
where $\y_0$ is the Bayes rule.
As \cref{thm:online-to-batch} bounds the first term $\E[\LT{\psi_\Omega(\overline{\W}\x)}{\y}]$ by $\E[\LSOmega{\U\x}{\y}]$, plus an asymptotically vanishing term of $O(1/T)$, it is sufficient to show 
\begin{enumerate}
  \item that $\y_0$ can be decoded from some linear predictor $\U_0$ and satisfies $\E[\LT{\y_0}{\y}] = 0$; and
  \item that $\E[\LSOmega{\U\x}{\y}] = 0$ holds for some linear estimator $\U$.
\end{enumerate}
Consequently, these combined with \cref{thm:online-to-batch} imply the excess risk bound of
\[
  \E[\LT{\psi_\Omega(\overline{\W}\x)}{\y}] - \E[\LT{\y_0}{\y}] = O(1/T).
\]
Below, we design a Bayes rule (or, the best linear estimator and decoder) and check $\E[\LT{\y_0}{\y}] = 0$, and we confirm $\E[\LSOmega{\U\x}{\y}] = 0$ by using the SparseMAP surrogate loss $\S_\Omega$. 

Let us introduce definitions needed in the subsequent discussion.
We define the \emph{frontier} (or the decision boundary) in a similar manner to \citet{Cabannes2021-vv} as follows: 
\[
  F \coloneqq \Set*{\thb \in \R^d}{\abs{\argmax_{\y \in \Ycal}\inpr{\thb, \y}} \ge 2}. 
\] 
We also define the normal cone $N(\y)$ at $\y \in \Ycal$ and its boundary $E(\y)$ by
\begin{align}
  N(\y) &\coloneqq \Set*{\thb \in \R^d}{\forall\y' \in \conv(\Ycal), \inpr{\thb, \y' - \y} \le 0} \quad \text{and}\\
  E(\y) &\coloneqq \bigcup_{\y' \in \Ycal\setminus\set{\y}}\Set*{\thb \in N(\y)}{\inpr{\thb, \y' - \y} = 0},  
\end{align} 
respectively.
One can readily confirm $F = \bigcup_{\y \in \Ycal}E(\y)$.
Let $d(\thb, F) \coloneqq \min\Set*{\norm{\thb - \thb'}_2}{\thb' \in F}$ denote the distance from $\thb$ to $F$ and  
$D$ the $\ell_2$-diameter of $\conv(\Ycal)$. 
Below is a fundamental fact.
\begin{lemma}\label[lemma]{lem:cone}
  Let $\y \in \Ycal$ and $\thb \in N(\y)$. 
  If $d(\thb, F) \ge t$ holds for some $t > 0$, we have $\inpr{\thb, \y - \y'} \ge t\norm{\y - \y'}_2$ for any $\y' \in \Ycal$.  
\end{lemma}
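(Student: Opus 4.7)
The case $\y' = \y$ is trivial since both sides vanish, so I will focus on $\y' \in \Ycal$ with $\y' \neq \y$. The key geometric idea is to travel from $\thb$ along the direction $-v/\norm{v}_2$, where $v \coloneqq \y - \y'$, and show that we must pass through the frontier $F$ by the time the inner product $\inpr{\cdot, v}$ reaches zero. Since the distance travelled before hitting the hyperplane $\{\inpr{\cdot, v} = 0\}$ is exactly $\inpr{\thb, v}/\norm{v}_2$, this will give $d(\thb, F) \le \inpr{\thb, \y - \y'}/\norm{\y - \y'}_2$, which combined with the hypothesis $d(\thb, F) \ge t$ yields the claim.

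\textbf{Main steps.} First I would observe that because $\conv(\Ycal)$ is the convex hull of the finite set $\Ycal$, the normal cone can be written as $N(\y) = \Set{\thb}{\inpr{\thb, \y - \y''} \ge 0\text{ for all }\y'' \in \Ycal}$, and consequently its topological boundary (as a subset of $\R^d$) coincides with $E(\y)$: a point of $N(\y)$ lies on $\partial N(\y)$ precisely when one of the defining inequalities is active, i.e., $\inpr{\thb, \y - \y''} = 0$ for some $\y'' \neq \y$. Next, set $\thb(s) \coloneqq \thb - s\,v/\norm{v}_2$ and $s_0 \coloneqq \inpr{\thb, v}/\norm{v}_2$, so that $\inpr{\thb(s_0), v} = 0$. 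Define $s^* \coloneqq \sup\Set{s \in [0, s_0]}{\thb(s) \in N(\y)}$; this supremum is attained since $N(\y)$ is closed and $\thb(0) = \thb \in N(\y)$.

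Then I would argue that $\thb(s^*) \in F$. There are two cases: if $s^* = s_0$, then $\inpr{\thb(s^*), \y - \y'} = 0$ and $\thb(s^*) \in N(\y)$, so $\thb(s^*) \in E(\y) \subseteq F$. If $s^* < s_0$, then by definition of $s^*$ and continuity, any $s > s^*$ in a neighbourhood gives $\thb(s) \notin N(\y)$; letting $s \downarrow s^*$, some defining inequality becomes active at $\thb(s^*)$, so $\thb(s^*) \in \partial N(\y) = E(\y) \subseteq F$. In either case $d(\thb, F) \le \norm{\thb(s^*) - \thb}_2 = s^* \le s_0 = \inpr{\thb, \y - \y'}/\norm{\y - \y'}_2$, and the hypothesis $d(\thb, F) \ge t$ finishes the proof.

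\textbf{Expected obstacle.} The only delicate point is rigorously identifying $\partial N(\y) = E(\y)$ and handling the boundary-crossing argument cleanly; in particular, one should rule out pathological behaviour if $\Ycal$ is not exactly the vertex set of $\conv(\Ycal)$. Under the implicit convention used throughout the paper that $\Ycal$ consists of the extreme points of $\conv(\Ycal)$ (as in all the examples given in \cref{subsec:example} and \cref{asec:additional_applications}), this identification is straightforward from the polyhedral description of $N(\y)$; otherwise one would need an additional reduction to the extreme points. The rest is elementary linear algebra and continuity.
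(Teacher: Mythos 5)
Your proof is correct, but it takes a genuinely different route from the paper's. The paper shifts $\y$ to the origin and argues trigonometrically: for $\bm z = \y' - \y \notin N(\zeros)$ it picks a boundary direction $\bm\xi \in E(\zeros)\setminus\set{\zeros}$ in the plane of $\thb$ and $\bm z$ so that the angles satisfy $\angle(\thb,\bm z) = \alpha + \beta$ with $\beta \ge \pi/2$ (since $\inpr{\bm\xi, \bm z} \le 0$) and $\sin\alpha \ge t/\norm{\thb}_2$ (from $d(\thb, E) \ge t$), whence $\inpr{\thb,\bm z} \le -t\norm{\bm z}_2$. You instead walk from $\thb$ along $-(\y - \y')/\norm{\y - \y'}_2$ and show the exit point from the closed cone $N(\y)$ lies in $E(\y) \subseteq F$ before (or when) the hyperplane $\inpr{\cdot, \y - \y'} = 0$ is reached, giving the cleaner intermediate statement $d(\thb, F) \le \inpr{\thb, \y - \y'}/\norm{\y - \y'}_2$; the conclusion then follows immediately from $d(\thb,F) \ge t$. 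Your argument is arguably more self-contained, since it needs only closedness/convexity of $N(\y)$, finiteness of $\Ycal$, and the fact that a point of $N(\y)$ with an active constraint $\inpr{\thb, \y' - \y} = 0$, $\y' \ne \y$, lies in $E(\y)$, whereas the paper's planar angle decomposition (existence of the intermediate $\bm\xi$ with additive angles, and the step $d(\thb,E)\ge t \Rightarrow \sin\alpha \ge t/\norm{\thb}_2$, which implicitly uses that the ray through $\bm\xi$ lies in $E$) is asserted rather informally; on the other hand, the paper's computation is a one-shot cosine bound with no limiting/boundary-crossing bookkeeping. One remark: the caveat in your last paragraph is unnecessary. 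The identification you need holds for any finite $\Ycal \subset \R^d$ with distinct elements: since each constraint normal $\y'' - \y$, $\y'' \ne \y$, is nonzero, a point of $N(\y)$ with such an active constraint is a limit of points violating it (hence outside $N(\y)$), and conversely a point with all these constraints strict is interior; redundant constraints cause no trouble, so no reduction to extreme points of $\conv(\Ycal)$ is required. Also, for completeness you should note $s_0 \ge 0$, which is immediate from $\thb \in N(\y)$.
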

\begin{proof}
  The case of $\y = \y'$ is trivial. 
  Below, we shift $\y$ to $\zeros$ and prove $\inpr{\thb, \bm{z}} \le -t\norm{\bm{z}}_2$ for any $\bm{z} \in \Ycal\setminus\set{\y'} - \set{\y}$; let $N = N(\zeros)$ and $E = E(\zeros)$. 
  Since $\bm{z} \notin N$, there exists $\bm{\xi} \in E\setminus\set{\zeros}$ such that the sum of the angles between $\thb$ and $\bm{\xi}$ and between $\bm{\xi}$ and $\bm{z}$, denoted by $\alpha,\beta\ge0$, respectively, equals the angle between $\thb$ and $\bm{z}$ and $\alpha + \beta \le \pi$ holds. 
  Due to $\bm{\xi} \in E \subseteq N$, we have $\beta \ge \pi/2$.
  Furthermore, since $d(\thb, F) \ge t$ implies $d(\thb, E) \ge t$, we have $\sin \alpha \ge t/\norm{\thb}_2$. 
  Thus, $\cos(\alpha + \beta) \le \cos(\alpha + \pi/2) = -\sin\alpha \le -t/\norm{\thb}_2$ holds, hence $\inpr{\thb, \bm{z}} = \norm{\thb}_2\norm{\bm{z}}_2\cos(\alpha + \beta) \le -t\norm{\bm{z}}_2$.
\end{proof}

\paragraph{Assumptions.} 
We assume a variant of the margin condition introduced in \citet[Assumption~3]{Cabannes2021-vv} and a realizability condition. 
Specifically, we assume there exists a linear estimator $\U_0:\Xcal\to\R^d$ satisfying the \emph{no-density separation} for some margin $t_0 > 0$, i.e., 
\[
  \mathbb{P}_\Xcal\prn*{{d(\U_0\x, F) < t_0}} = 0,
\]
where $\mathbb{P}_\Xcal$ represents the probability with respect to the marginal distribution of $\rho$ on $\Xcal$.
In binary classification, this is sometimes called \emph{Massart's noise condition}. 
We also require the underlying distribution $\rho$ to satisfy a realizability condition that the above $\U_0$ satisfies 
\[
  \mathbb{P}(\phi(\U_0\x) = \y) = 1,
\]
where $\phi:\R^d\to\Ycal$ is a decoding function given by $\phi(\U_0\x) = \argmax\Set*{\inpr{\U_0\x, \y}}{\y\in \Ycal}$ (ties occur with probability zero due to the no-density separation).
Then, $\x \mapsto \phi(\U_0\x) \in \Ycal$ is the Bayes rule that attains the zero target risk, i.e., $\E[\LT{\y_0}{\y}] = \E[\LT{\phi(\U_0\x)}{\y}] = 0$. 
Therefore, the first condition is confirmed.

\paragraph{Surrogate loss.}
To establish the second condition $\E[\LSOmega{\U\x}{\y}] = 0$, we need a surrogate loss that attains zero for well-separated data. 
To this end, we employ the SparseMAP loss, which is known to have a unit \emph{structured separation margin} (see \citet[Section~7.4]{Blondel2020-tu}). 
Specifically, for any $(\thb, \y) \in \R^d\times\Ycal$, we have $\LSOmega{\thb}{\y} = 0$ if 
\begin{equation}\label{eq:structured_separation_margin}
  \inpr{\thb, \y} \ge \max\Set*{\inpr{\thb, \y'} + \frac{1}{2}\norm{\y - \y'}_2^2}{\y' \in \Ycal}.
\end{equation}
We show that $\U = \frac{D}{2t_0}\U_0$ satisfies $\E[\LSOmega{\U\x}{\y}] = 0$.
Due to the assumptions and \cref{lem:cone} with $\U_0\x \in N(\phi(\U_0\x))$, for $(\x, \y)$ drawn from $\rho$ and any $\y' \in \Ycal$, with probability $1$, we have
\begin{equation}
  \inpr{\U\x, \y} 
  = \frac{D}{2t_0}\inpr{\U_0\x, \phi(\U_0\x)} 
  \ge \frac{D}{2t_0}\inpr{\U_0\x, \y'} + \frac{D}{2}\norm{\phi(\U_0\x) - \y'}_2 
  \ge \inpr{\U\x, \y'} + \frac{1}{2}\norm{\y - \y'}_2^2,
\end{equation}
which implies \eqref{eq:structured_separation_margin} and thus $\E[\LSOmega{\U\x}{\y}] = 0$ holds.

\begin{remark}
  As mentioned in \cref{subsec:onlinetobatch}, \citet{Cabannes2021-vv} has already achieved an excess risk bound with an exponential rate under weaker assumptions, which is faster than the above $O(1/T)$ rate. 
  Hence, the purpose of the above discussion is to deepen our understanding of the relationship between the surrogate regret and excess risk bounds, rather than to present a novel result.
\end{remark}

\section{Proof of \texorpdfstring{\cref{thm:lower_bound}}{Theorem~\ref{thm:lower_bound}}}\label{asec:lower_bound}
\begin{proof}
  For simplicity, assume that $M \coloneqq (B^2 - \ln^2(dT))/\ln^2(2d)$ is a positive integer. 
  We sample true class $i_t \in [d]$ uniformly at random for $t=1,\dots,M+1$. 
  For $t > M+1$, we set $i_t = i_{M+1}$. 
  Each $\x_t$ is a vector of length $M+1$.
  For $t = 1,\dots,M+1$, we let $\x_t = \e_t$, the $t$th standard basis vector in $\R^{M+1}$. 
  For $t > M+1$, we let $\x_t = \e_{M+1}$.
  We define an offline estimator $\U' \in \R^{d\times (M+1)}$ as follows:
  the $t$th column of $\U'$ is $\ln(2d) \e_{i_t}$ for $t = 1,\dots,M$, and the ($M+1$)th column is $\ln(dT)\e_{M+1}$. 
  Note that $\norm{\U'}_\mathrm{F}^2 = M\ln^2(2d) + \ln^2(dT) = B^2$ always holds.
  Fix any learner's algorithm. 
  For the first $M$ rounds, the logistic loss of $\U'$ is bounded as $-\log_2\frac{2d}{2d + d-1} = \log_2\prn*{1 + \frac12\prn*{1 - \frac{1}{d}}} \le \frac12\prn*{1 - \frac{1}{d}}$. 
  Since each $i_t \in [d]$ is sampled uniformly at random, the expected 0-1 loss is $1 - \frac{1}{d}$. 
  Therefore, the expected surrogate regret summed over the first $M$ rounds is at least
  \[
    \sum_{t=1}^M \E[\ind_{\yprd_t \neq \y_t}] 
    - 
    \sum_{t=1}^M \S_t(\U')
    \ge 
    M\prn*{1 - \frac{1}{d}} - \frac{M}{2}\prn*{1 - \frac{1}{d}}
    =
    \frac{M}{2}\prn*{1 - \frac{1}{d}}
    \ge 
    \frac{M}{4}
    =
    \Omega\prn*{\frac{B^2}{\ln^2 d}},
  \]
  where we used $d \ge 2$ and $B = \Omega(\ln(dT))$. 
  As for the remaining $T - M$ rounds, the logistic loss value is at most $\frac{1}{T}\prn*{1 - \frac{1}{d}}$ by a similar calculus to the above, whereas the expected 0-1 loss is at least $1 - \frac{1}{d}$ since $i_{M+1}$ is sampled uniformly at random. 
  Therefore, the expected surrogate regret over the $T - M$ rounds is non-negative. 
  In total, the expected surrogate regret is $\Omega\prn*{\frac{B^2}{\ln^2 d}}$.
\end{proof}

\end{document}